\definecolor{shadecolor}{gray}{0.9}
\newcommand{\Real}[0]{\mathbb{R}}
\newcommand{\Exp}[0]{\mathbb{E}}
\theoremstyle{plain}
\theoremstyle{plain}
\theoremstyle{plain}
\theoremstyle{plain}
\newtheorem{prop}{\protect\propositionname}
\theoremstyle{plain}
\newtheorem{lem}{\protect\lemmaname}
\theoremstyle{plain}
\theoremstyle{plain}
\providecommand{\assumptionname}{Assumption}
\providecommand{\lemmaname}{Lemma}
\providecommand{\propositionname}{Proposition}
\providecommand{\theoremname}{Theorem}
\providecommand{\definitionname}{Definition}
\providecommand{\examplename}{Example}
\providecommand{\remarkname}{Remark}
\newcommand{\comm}[1]{}
\newcommand{\invfisher}{A} 
\newcommand{\fisher}{\mathcal{I}} 
\newcommand{\sqrtmat}{R}
\title{
%Fisher Information Adaptive Markov chain Monte  Carlo
%Optimal Preconditioning 
Optimal Preconditioning and Fisher Adaptive Langevin Sampling 
 }
\author{%
 Michalis K. Titsias \\
 Google DeepMind \\
  \texttt{mtitsias@google.com} \\
  % examples of more authors
  % \And
  % Coauthor \\
  % Affiliation \\
  % Address \\
  % \texttt{email} \\
  % \AND
  % Coauthor \\
  % Affiliation \\
  % Address \\
  % \texttt{email} \\
  % \And
  % Coauthor \\
  % Affiliation \\
  % Address \\
  % \texttt{email} \\
  % \And
  % Coauthor \\
  % Affiliation \\
  % Address \\
  % \texttt{email} \\
}
\begin{document}

\maketitle

\begin{abstract}
We define  an optimal preconditioning for the 
Langevin diffusion by analytically optimizing the expected squared jumped distance. 
This yields as the optimal preconditioning an inverse Fisher information covariance matrix, where the covariance 
matrix is computed as the outer product of log target gradients averaged under the target.  We apply this result to the Metropolis adjusted Langevin algorithm (MALA)  and derive a computationally efficient adaptive MCMC scheme
that learns the preconditioning from the history of gradients produced as the algorithm runs. 
We show in several experiments that the proposed 
algorithm is very robust in high dimensions and significantly outperforms other methods, including a closely related   
adaptive MALA scheme that learns the preconditioning with standard adaptive MCMC as well as the position-dependent  Riemannian manifold MALA sampler.
%We introduce an adaptive MCMC algorithm that learns a preconditioning matrix for the Metropolis adjusted Langevin algorithm (MALA), which is the standard gradient-based 
%sampler obtained by a first-order discretization of the Langevin diffusion. To motivate  this algorithm we prove  
%a theoretical result that defines an optimal preconditioner  by analytically maximizing the expected squared jumped distance 
%associated with the time continuous diffusion. This yields as the optimal preconditioner an inverse Fisher information covariance matrix, which can also be re-expressed as the negative inverse 
%average Hessian.  To estimate the optimal preconditioner we derive a  computationally efficient adaptive MCMC algorithm
%that learns from the history of gradients produced as MALA runs. This scheme performs sequential estimation of the inverse Fisher matrix 
%by using fast quadratic cost Kalman filter updates. We show in a sequence of experiments that the proposed 
%algorithm is very robust in high dimensions and significantly outperforms other methods, including a closely related   
% adaptive MALA scheme that learns the preconditioner with standard adaptive MCMC.
 \end{abstract}

%\glsresetall

\section{Introduction}
\label{sec:introduction}

Markov chain Monte Carlo (MCMC) is a general framework for simulating from arbitrarily complex distributions, and it has shown to be   
useful for statistical inference in a wide range of problems \cite{gilks1995markov, brooks2011handbook}.  
The main idea of an MCMC algorithm is quite simple.  Given a complex target $\pi(x)$, %known up to some unknown constant,   
a Markov chain is constructed using a $\pi$-invariant transition kernel that allows to simulate dependent realizations $x_1,x_2,\ldots$ that eventually converge to samples from $\pi$. These samples 
can  be used for Monte Carlo integration by forming ergodic averages.  A general way  to define  $\pi$-invariant transition kernels is the 
Metropolis-Hastings accept-reject mechanism in which the chain moves from state $x_n$ to the next state $x_{n+1}$ by first generating a candidate state $y_n$ from a proposal distribution 
$q(y_n|x_n)$ and then it sets $x_{n+1}=y_n$ with probability $\alpha(x_n, y_n)$:
\begin{equation}
\alpha(x_n, y_n) = \text{min}(1, a_n), \ \ a_n = \frac{\pi(y_n)}{\pi(x_n)} \frac{q(x_n|y_n)}{q(y_n|x_n)},
\label{eq:alpha}
\end{equation}
or otherwise rejects $y_n$ and sets $x_{n+1}=x_n$.  The choice of the proposal distribution $q(y_n|x_n)$ is crucial because it determines the mixing 
of the chain, i.e.\ the dependence of samples across time. For example, a "slowly mixing" chain even after convergence may not be useful for Monte Carlo integration since
it will output a highly dependent set of samples producing ergodic estimates of very high variance.  Different ways of defining $q(y_n|x_n)$ lead to common algorithms  
such as random walk Metropolis  (RWM), Metropolis-adjusted Langevin algorithm (MALA) \cite{Rossky78,  roberts1998optimal} and Hamiltonian Monte Carlo (HMC) \cite{Duane1987,neal2010}.  
Within each class of these algorithms adaptation of parameters of the proposal distribution, such as a step size, is also important and this has been widely studied in the literature 
by producing optimal scaling results \citep{roberts1997weak,  roberts1998optimal, roberts2001optimal, haario2005componentwise, bedard2007weak, bedard2008optimal, roberts2009examples, bedard2008efficient,  rosenthal2011optimal, beskos2013optimal},  and also by developing adaptive MCMC algorithms 
\citep{haario2001adaptive, atchade2005adaptive, roberts2007coupling, giordani2010adaptive, andrieu2006ergodicity, andrieu2007efficiency, atchade2009adaptive, ensemblepreconditioning}. 
The standard adaptive MCMC procedure in \cite{haario2001adaptive} uses the history of the chain to recursively compute 
an empirical covariance of the target $\pi$ and build a multivariate Gaussian proposal distribution.  However, 
this type of covariance adaptation can be too slow and not so robust in high dimensional settings \citep{roberts2009examples,andrieu2008tutorial}.  
%However, this method The authors in \cite{roberts2009examples} call such adaptive strategies `greedy' in the sense that they try to adapt too closely to initial information from the output and take considerable time to recover from %misleading initial information.

In this paper, we derive a fast and very robust  adaptive MCMC technique in high dimensions that learns a preconditioning matrix for the MALA method, which is  the standard gradient-based 
MCMC algorithm obtained by a first-order discretization of the continuous-time Langevin diffusion.    
Our first contribution % is a theoretical result that
is to define an optimal preconditioning  by analytically optimizing a criterion on the Langevin diffusion.  
The criterion is the well-known expected squared jumped distance \cite{pasarica2010adaptively} which at optimum yields as a preconditioner the inverse matrix $\mathcal{I}^{-1}$ of the following
Fisher information covariance matrix $\mathcal{I} = \Exp_{\pi(x)} 
\left[ \nabla \log \pi(x) 
\nabla \log \pi(x)^\top \right]$. This contradicts the common belief 
%and practice
 in adaptive MCMC 
 that the covariance  of $\pi$ is the best preconditioner. While this is a  surprising result we show that  $\mathcal{I}^{-1}$ connects with a certain quantity appearing in 
 optimal scaling %results 
 of RWM \citep{roberts1997weak,  roberts2001optimal}.   

Having recognized $\mathcal{I}^{-1}$ as the optimal preconditioning we derive an easy to implement and computationally efficient adaptive MCMC algorithm
that learns from the history of gradients produced as MALA runs. This method sequentially updates an empirical inverse Fisher estimate $\hat{\mathcal{I}}_n^{-1}$ 
using a % Kalman filter
 recursion having quadratic cost  $O(d^2)$ ($d$ is the dimension of $x$) per iteration.  In practice, 
since for sampling we need a square root matrix of  $\hat{\mathcal{I}}_n^{-1}$  we implement the recursions over a square root matrix by adopting classical results from  
Kalman filtering \citep{Potter1963,Bierman1977}.  We compare our method against MALA that learns the preconditioning with standard adaptive MCMC \citep{haario2001adaptive}, 
a position-dependent Riemannian manifold MALA  \cite{GirolamiCalderhead11}
as well as simple MALA (without preconditioning) and HMC. In several experiments we show that the proposed 
algorithm significantly outperforms all other methods.          
%In the experiments we demonstrate our algorithms to multivariate Gaussian targets and Bayesian logistic regression and 
% empirically show that they outperform several other baselines, including advanced HMC schemes. 

\section{Background}
\label{sec:background}

We consider an intractable target distribution $\pi(x)$ with $x \in \Real^d$, 
known up to some normalizing constant,  and we assume that %$\pi(x)>0$  and 
$ \nabla \log \pi(x) := \nabla_x \log \pi(x)$ is well defined. A 
continuous time process with stationary distribution $\pi$
is the overdamped Langevin diffusion 
\begin{equation}
d x_t = \frac{1}{2} A \nabla \log \pi(x_t) d t + \sqrt{A} d B_t, 
\label{eq:langevin}
\end{equation}
where $B_t$ denotes $d$-dimensional Brownian motion. 
This is a stochastic differential equation (SDE) that generates sample paths such that for large $t$, $x_t \sim \pi$. 
We also incorporate a \emph{preconditioning matrix} $A$, which is a symmetric positive definite covariance matrix, while  $\sqrt{A}$ is such that  $\sqrt{A} \sqrt{A}^\top = A$. 

Simulating from the SDE in \eqref{eq:langevin} is intractable and the standard  approach is to use a first-order Euler-Maruyama discretization combined with a Metropolis-Hastings adjustment. 
This leads to the so called preconditioned \emph{Metropolis-adjusted Langevin algorithm} (MALA) where at each iteration given the current state $x_n$ (where $n=1,2,\ldots$) we sample $y_n$ from the proposal distribution 
\begin{equation}
q(y_n|x_n)  = \mathcal{N}(y_n|x_n + \frac{\sigma^2}{2} 
A \nabla \log \pi(x_n), \sigma^2 
A),
\label{eq:MALAproposal}
\end{equation}
where  the step size $\sigma^2 > 0 $ appears due to 
time 
discretization. We accept $y_n$ with probability 
$\alpha(x_n, y_n) = \min\left(1, a_n \right)$  where %the ratio 
$a_n$ follows the % general 
form in \eqref{eq:alpha}. 
% and set $x_{n+1}=y_n$, while otherwise we reject and set $x_{n+1}=x_n$.
The obvious way to compute %the ratio 
$a_n$ is 
$$
a_n = \frac{\pi(y_n)}{\pi(x_n)}\frac{q(x_n|y_n)}{q(y_n|x_n)}
= \frac{\pi(y_n)}{\pi(x_n)}\frac{\exp\{
- \frac{1}{2 \sigma^2} ||x_n - y_n - \frac{\sigma^2}{2} A \nabla \log \pi(y_n)||_{A^{-1}}^2 
\}}{
\exp\{
- \frac{1}{2 \sigma^2} ||y_n - x_n - \frac{\sigma^2}{2} A \nabla \log \pi(x_n)||_{A^{-1}}^2
\}},
$$
where $||z||_{A^{-1}}^2 = z^\top A^{-1} z$. However, in some cases that involve high dimensional targets, this can be costly 
since in the ratio of proposal 
densities both the  preconditioning matrix $A$ and its inverse 
$A^{-1}$ appear. In turns out
that we can avoid $A^{-1}$ and simplify the computation as stated below. 

\begin{prop}
\label{prop:accratio}
For preconditioned MALA with proposal density given by  \eqref{eq:MALAproposal} the ratio of proposals in the M-H acceptance probability can be written as
$$
\frac{q(x_n|y_n)}{q(y_n|x_n)} 
 = \exp\{
h(x_n,y_n) - h(y_n,x_n)
\}, \ \ \ h(z, v) 
= \frac{1}{2}
\left(z \! - \! v - \! \frac{\sigma^2}{4} A \nabla \log \pi(v)  
\right)^\top \! \! \! \nabla \log \pi(v). 
$$
\end{prop}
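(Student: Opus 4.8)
The plan is to compute the ratio directly from the Gaussian proposal density. The essential point is that $q(y_n|x_n)$ and $q(x_n|y_n)$ are both Gaussians with the \emph{same} covariance $\sigma^2 A$, so their normalizing constants (depending only on $\det(\sigma^2 A)$) are identical and cancel in the ratio. What remains is the difference of the two quadratic Mahalanobis exponents, and the whole content of the statement is that, after expanding, the inverse $A^{-1}$ drops out. First I would write
\[
\frac{q(x_n|y_n)}{q(y_n|x_n)}
= \exp\!\left\{\frac{1}{2\sigma^2}\left(\bigl\|y_n - x_n - \tfrac{\sigma^2}{2}A\nabla\log\pi(x_n)\bigr\|_{A^{-1}}^2 - \bigl\|x_n - y_n - \tfrac{\sigma^2}{2}A\nabla\log\pi(y_n)\bigr\|_{A^{-1}}^2\right)\right\}.
\]

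Next I would expand a generic squared norm. Writing $g_v = \nabla\log\pi(v)$ for brevity and using that $A$ is symmetric ($A = A^\top$) together with $A^{-1}A = I$, one gets
\[
\bigl\|z - v - \tfrac{\sigma^2}{2}A g_v\bigr\|_{A^{-1}}^2
= (z-v)^\top A^{-1}(z-v) - \sigma^2\,(z-v)^\top g_v + \tfrac{\sigma^4}{4}\,g_v^\top A g_v.
\]
The middle cross term is exactly where the inverse collapses: the factor $A^{-1}$ from the metric meets the factor $A$ from the drift, so $A^{-1}(A g_v) = g_v$ and no $A^{-1}$ survives there; the last term likewise carries only $A$, not $A^{-1}$.

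The key observation is then that the two pure quadratic terms are equal, since $(y_n - x_n)^\top A^{-1}(y_n - x_n) = (x_n - y_n)^\top A^{-1}(x_n - y_n)$, and so they cancel when the two expansions are subtracted. This is precisely the step that removes all dependence on $A^{-1}$: the only $A^{-1}$-carrying pieces are these quadratics, and they annihilate. What survives is the difference of the cross terms plus the difference of the scalar drift-energy terms, both expressed using $A$ alone.

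Finally I would collect the surviving terms and multiply by $\tfrac{1}{2\sigma^2}$. The cross term contributes $\tfrac12(x_n-y_n)^\top g_{y_n}$ (and its swapped counterpart), while the scalar term contributes $-\tfrac{\sigma^2}{8}g_{y_n}^\top A g_{y_n}$; these are exactly the two pieces of $h(x_n,y_n) = \tfrac12\bigl(x_n - y_n - \tfrac{\sigma^2}{4}A g_{y_n}\bigr)^\top g_{y_n}$. Performing the $x_n \leftrightarrow y_n$ swap for the second expansion yields $-h(y_n,x_n)$, giving the claimed form $\exp\{h(x_n,y_n) - h(y_n,x_n)\}$. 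I do not expect any genuine obstacle here: the only care required is bookkeeping of signs and of the index swap $x_n \leftrightarrow y_n$, since the real mechanism — cancellation of the $A^{-1}$ quadratics and the collapse $A^{-1}A = I$ in the cross term — is elementary.
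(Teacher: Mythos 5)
Your proposal is correct and follows essentially the same route as the paper's proof: write the log ratio as the difference of the two Mahalanobis exponents (the normalizing constants cancel since both proposals share the covariance $\sigma^2 A$), observe that the pure quadratic terms $(x_n-y_n)^\top A^{-1}(x_n-y_n)$ cancel, and collect the surviving cross and drift-energy terms into $h(x_n,y_n)-h(y_n,x_n)$. Your write-up is in fact slightly more explicit than the paper's, which compresses the final collection of terms into ``some simple algebra.''
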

%See Appendix for the proof. 
This expression does not depend on the inverse $A^{-1}$, and this leads to  computational gains and simplified implementation that we exploit in the adaptive MCMC algorithm presented in Section \ref{sec:adaptiveMCMC}.   

The motivation behind the use of preconditioned MALA is that with a suitable preconditioner $A$ the mixing of the chain  
can be drastically improved,  especially for very anisotropic target distributions.  
A very general way to specify $A$ is by applying an adaptive MCMC algorithm, which learns $A$ online. To design such an algorithm it is useful to first specify a notion of optimality. A common argument in the literature, that is used for both RWM and MALA, is that a suitable $A$ is the unknown covariance matrix $\Sigma$ \cite{haario2001adaptive, roberts2009examples, ensemblepreconditioning} of the target $\pi$.
This means that we should learn $A$ so that to approximate $\Sigma$.
However,  this argument is rather heuristic since it is not based on an optimality criterion. One of our contributions is to specify an optimal $A^*$ based on an %well defined 
optimization procedure, that we describe in Section \ref{sec:optimalA}. This $A^*$ will turn out to be not the covariance matrix of the target but an inverse
 Fisher information matrix.

\section{Optimal preconditioning using expected squared jumped distance  
\label{sec:optimalA}}

%How can we define an optimal preconditioning matrix?  
Preconditioning aims to improve sampling when different directions (or individual variables $x_i$) in the state space can have different scalings under the target $\pi$.
%(for factorised $\pi(x) = \prod_{i=1}^d \pi(x_i)$ this corresponds to the  dimensions $x_i$ having different scalings).
%  so that sampling with such preconditioning can behave like simulating from a isotropic target.  
Here, we develop a method for selecting the preconditioning through the optimization of an objective function.  This method uses the observation that an effective preconditioning  correlates with large values of the global step size $\sigma^2$ in MALA, i.e.\  $\sigma^2$ is allowed to increase when preconditioning 
becomes effective as shown in the sampling efficiency scores in Table \ref{table:large_table} and the corresponding estimated step sizes  
reported in Appendix \ref{app:step_size}.  
 
%\subsection{Optimal preconditioning for first order autocorrelation}
In our analysis we consider the rejection-free or unadjusted Langevin sampler  
where we discretize the  time continuous Langevin diffusion in \eqref{eq:langevin} with a small finite $\delta :=\sigma^2 > 0$  
%(assumed to be sufficiently small so that any bias can be ignored) 
so that 
\begin{equation} 
x_{t + \delta} - x_t = \frac{\delta}{2} A \nabla \log \pi(x_t)  + \sqrt{A} (B_{t+ \delta} - B_t),  \ \ \text{where} \ \  B_{t+ \delta} - B_t \sim \mathcal{N}(0,\delta I).
\label{eq:discretizesLangevin}
\end{equation}
%This sampler sequentially produces states as $(x_t, x_{t+\delta}, x_{t+2 \delta}, \ldots)$. 
We will use  the expected squared jumped distance $J(\delta, A) = \Exp[||x_{t+\delta}  - x_t ||^2]$ 
computed as follows.  
%where  
%the expectation is over
 %$x_t$ is drawn from 
%stationarity,  % i.e.\  $x_t \sim \pi$.  $J(\delta, A)$ 
%is expressed analytically as stated next. 

\begin{prop}
\label{prop:langevincovar}
If $x_t \sim \pi(x_t)$ the vector 
 $x_{t + \delta} - x_t $ defined by \eqref{eq:discretizesLangevin} has zero mean %$\Exp[x_{t + \delta} - x_t ] =0$ 
 and covariance
\begin{equation}
\Exp[(x_{t + \delta} - x_t) (x_{t + \delta} - x_t )^\top] 
= \frac{\delta^2}{4} 
A \Exp_{\pi(x_t)} 
\left[ \nabla \log \pi(x_t) 
\nabla \log \pi(x_t)^\top \right] A 
+ \delta A.
\label{eq:langevincovar} 
\end{equation}
Further,  $\text{tr}\left( \Exp[(x_{t + \delta} - x_t) (x_{t + \delta} - x_t )^\top] \right) 
= \Exp[ \text{tr}\left((x_{t + \delta} - x_t) (x_{t + \delta} - x_t )^\top \right)] = 
  \Exp[||x_{t + \delta} - x_t||^2]$, which shows that $J(\delta,A)$ is the trace of the covariance matrix in 
%Eq.\ 
 \eqref{eq:langevincovar}.
\end{prop}
To control discretization error we impose an upper bound constraint $J(\delta, A) \leq \epsilon$ for a small $\epsilon > 0$. A preconditioning 
 that "symmetrizes" the target %as much as possible 
 can be obtained by maximizing the discretization step size $\delta$
 subject to $J(\delta, A) \leq \epsilon$. Since $J(\delta, A)$ monotonically increases with $\delta$, the maximum 
 $\delta^*$ satisfies $\min_A J(\delta^*, A) = \epsilon$. This means that 
 the \emph{optimal} preconditioning $A^*$ is obtained by %minimizing the expected squared jumped distance, i.e.\  
 minimizing $J(\delta, A)$ under some global scale constraint on $A$, as stated next.

\begin{prop}
\label{prop:optimalprecond}
%Based on Proposition \ref{prop:langevincovar} we consider the expected squared jumped distance  $J(A) = \text{tr}\left( \Exp[(x_{t + \delta} - x_t) (x_{t + \delta} - x_t )^\top] \right) 
%= \Exp[||x_{t + \delta} - x_t||^2]$ where
Suppose $A$ is a symmetric positive definite matrix satisfying $\text{tr}(A) = c$, with $c>0$ a constant.  Then the objective $J(\delta, A)$, for any $\delta>0$,  is miminized for $A^*$ given by
\begin{equation}
A^* = k \mathcal{I}^{-1}, \ \ k=\frac{c}{\sum_{i=1}^d \frac{1}{\mu_i} }, \ \ \ \mathcal{I} = \Exp_{\pi(x)} 
\left[ \nabla \log \pi(x) 
\nabla \log \pi(x)^\top \right],
\label{eq:optimalA_and_fisher}
\end{equation}
where $\mu_i$s are the eigenvalues of $\mathcal{I}$ assumed to satisfy $0 < \mu_i < \infty$. 
\end{prop}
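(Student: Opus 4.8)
The plan is to substitute the covariance from Proposition \ref{prop:langevincovar} and reduce the matrix optimization to a constrained problem that can be solved with Lagrange multipliers in the eigenbasis of $\mathcal{I}$. First I would write the objective explicitly as
\[
J(A) = \frac{\delta^2}{4}\,\text{tr}(A \mathcal{I} A) + \delta\,\text{tr}(A).
\]
Since the feasible set enforces $\text{tr}(A) = c$, the linear term $\delta\,\text{tr}(A) = \delta c$ is constant on it, so the problem collapses to extremizing the quadratic form $\text{tr}(A \mathcal{I} A) = \text{tr}(\mathcal{I} A^2)$ over symmetric positive definite $A$ subject to $\text{tr}(A) = c$.

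Next I would form the Lagrangian $L(A,\lambda) = J(A) - \lambda\,(\text{tr}(A) - c)$ and use the matrix-calculus identities $\partial_A \text{tr}(A \mathcal{I} A) = \mathcal{I} A + A \mathcal{I}$ (valid for symmetric $A,\mathcal{I}$) and $\partial_A \text{tr}(A) = I$. Setting the gradient to zero gives the Sylvester/Lyapunov stationarity condition
\[
\mathcal{I} A + A \mathcal{I} = \beta\, I, \qquad \beta = \frac{4(\lambda-\delta)}{\delta^2},
\]
where positive definiteness of both sides forces $\beta > 0$. To solve it I would diagonalize $\mathcal{I} = Q\,\text{diag}(\mu_i)\,Q^\top$ and pass to $\tilde A = Q^\top A Q$; the equation then decouples entrywise into $(\mu_i + \mu_j)\tilde A_{ij} = \beta$ when $i=j$ and $(\mu_i + \mu_j)\tilde A_{ij} = 0$ otherwise, whose unique solution (using $0 < \mu_i < \infty$) is $\tilde A_{ij} = 0$ for $i \neq j$ and $\tilde A_{ii} = \beta/(2\mu_i)$. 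Rotating back yields $A^* = \frac{\beta}{2}\mathcal{I}^{-1} = k\,\mathcal{I}^{-1}$, and imposing $\text{tr}(A^*) = k\sum_i 1/\mu_i = c$ pins down $k = \frac{c}{\sum_i 1/\mu_i}$, which is exactly \eqref{eq:optimalA_and_fisher}.

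The routine parts are the matrix-calculus gradient and the entrywise diagonalization, which also establish that $A^*$ is the \emph{unique} stationary point. The step I expect to be the main obstacle is the optimality characterization: since $\text{tr}(\mathcal{I} A^2)$ is a convex quadratic in $A$ (its Hessian acts as $H \mapsto 2\,\text{tr}(\mathcal{I} H^2) \ge 0$) and the constraint is affine, one cannot conclude by a one-line concavity argument, and establishing that $A^*$ is the desired optimizer requires a careful second-order and boundary analysis of the trace-constrained positive definite cone, comparing the interior stationary point against the behaviour of $J$ as $A$ approaches the singular boundary of the feasible set. I would therefore devote the most care to verifying the nature of $A^*$ and to confirming uniqueness through the decoupled form of the Sylvester equation above.
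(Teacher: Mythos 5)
Your computation of the stationary point is correct, and it reaches the paper's candidate $A^{*}=k\,\mathcal{I}^{-1}$ by a genuinely different route: the paper decomposes $A=U\Lambda U^{\top}$, argues greedily that the eigenvectors of $A$ should align with those of $\mathcal{I}$, and then solves the scalar problem $\max_{\{\lambda_i\}}\sum_i\lambda_i^{2}\mu_i$ subject to $\sum_i\lambda_i=c$ by Lagrange multipliers, whereas you stay in matrix form and solve the Sylvester condition $\mathcal{I}A+A\mathcal{I}=\beta I$. Your version identifies the unique interior critical point more cleanly than the paper's greedy alignment step.

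However, the obstacle you flag in your final paragraph is not a removable technicality: it is fatal to the claim, and it is exactly the step at which the paper's own proof goes wrong. Since $\text{tr}(A\mathcal{I}A)$ is strictly convex on the affine slice $\{\text{tr}(A)=c\}$ of the positive definite cone, its unique interior stationary point is the constrained \emph{minimizer}, not the maximizer. This is certified directly by Cauchy--Schwarz in the trace inner product: for any feasible $A$,
\begin{equation*}
c=\text{tr}(A)=\text{tr}\bigl((\mathcal{I}^{1/2}A)^{\top}\mathcal{I}^{-1/2}\bigr)\le\sqrt{\text{tr}(A\mathcal{I}A)}\,\sqrt{\text{tr}(\mathcal{I}^{-1})},
\end{equation*}
so every feasible $A$ satisfies $\text{tr}(A\mathcal{I}A)\ge c^{2}/\text{tr}(\mathcal{I}^{-1})$, with equality exactly at $A=k\mathcal{I}^{-1}$. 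Meanwhile the supremum of $J$ under the constraint is $\frac{\delta^{2}}{4}c^{2}\mu_{1}+\delta c$, approached as $A\to c\,u_{1}u_{1}^{\top}$ with $u_{1}$ the eigenvector of the largest eigenvalue $\mu_{1}$ of $\mathcal{I}$: e.g.\ for $\mathcal{I}=\text{diag}(2,1)$ and $c=2$, the quadratic term equals $8/3$ at $A^{*}$ but tends to $8$ along $A=\text{diag}(2-\epsilon,\epsilon)$. The paper's proof makes precisely this error in its last step, asserting that $\lambda_i\propto 1/\mu_i$ \emph{maximizes} the convex function $\sum_i\lambda_i^{2}\mu_i$ on the simplex $\sum_i\lambda_i=c$; that point is the minimizer, and the maximum is at a vertex. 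So your plan cannot be completed as written: the second-order and boundary analysis you propose would reveal that no argument can certify $A^{*}$ as a maximizer, because the statement is false as posed. What your derivation (and, properly read, the paper's) actually establishes is that $A^{*}=k\mathcal{I}^{-1}$ is the unique minimizer of the expected squared jumped distance under $\text{tr}(A)=c$, or equivalently the maximizer of $\text{tr}(A)$ among preconditioners with $\text{tr}(A\mathcal{I}A)$ held fixed.
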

%In this result 
The positive multiplicative scalar $k$ in \eqref{eq:optimalA_and_fisher}
is not important since the specific value $c>0$ is arbitrary, e.g.\ 
 if we choose $c = \sum_{i=1}^d \frac{1}{\mu_i}  $ then $k=1$ and $A^* =\mathcal{I}^{-1}$. 
In other words, what matters is that the optimal $A^*$ is proportional to the inverse matrix $\mathcal{I}^{-1}$, so it follows the curvature of $\mathcal{I}^{-1}$.  For a multivariate Gaussian 
$\pi(x) = \mathcal{N}(x|\mu, \Sigma)$ it holds %inverse 
$\mathcal{I}^{-1} = \Sigma$, so the optimal preconditioner 
coincides with the covariance matrix of $x$. More generally though, 
for non-Gaussian targets this  will not hold.

\paragraph{Connection with classical Fisher information matrix.}  The matrix $\mathcal{I}$  is %by construction 
positive definite since it is the %expectation of an outer product or the 
covariance of the gradient $\nabla \log \pi(x) := \nabla_x \log \pi(x)$ where $\Exp_{\pi(x)} [\nabla \log \pi(x)] = 0$. %Loosely speaking 
Also, $\mathcal{I}$ is similar to the classical Fisher information matrix.
%\footnote{$\mathcal{I}$ is not the classical Fisher information matrix since in the score-like function $\nabla_x \log \pi(x)$ we take the gradient wrt the random variable $x$ and not wrt a parameter of the distribution.}, 
%and  
%$\nabla \log \pi(x)$
%is a score function. 
%The  terminology score function  for $\nabla \log \pi(x)$ has been widely adopted by the  literature of generative diffusion models and score matching methods \cite{hyvarinen05a,Song19}. 
To illustrate some differences suppose that the target $\pi(x)$ is a Bayesian posterior $\pi(\theta | Y) \propto p(Y|\theta) p(\theta) = p(Y,\theta) $ where  $Y$ are the observations and $\theta:=x$ are the random parameters. 
The classical Fisher information is a \emph{frequentist} quantity where we fix some parameters $\theta$ and compute
$G(\theta) = \Exp_{p(Y|\theta)} [\nabla_\theta \log p(Y|\theta)  \nabla_\theta \log  p(Y|\theta)^\top]$ by averaging over data. In contrast, $\mathcal{I} = \Exp_{p(\theta|Y)} [ \nabla_\theta \log p(Y, \theta)  \nabla_\theta \log p(Y, \theta)^\top] $  
is more like a \emph{Bayesian} quantity where we fix the data $Y$ and average over the parameters $\theta$. Importantly, $\mathcal{I}$ is not a function of $\theta$ while $G(\theta)$ is.  
Similarly to the classical Fisher information, $\mathcal{I}$ also satisfies the following standard property: Given that $\log \pi(x)$ is twice differentiable
and $\nabla^2_x \log \pi(x)$ is the Hessian matrix,  
$\mathcal{I}$ from \eqref{eq:optimalA_and_fisher} is also written as
%\begin{equation}
$
\mathcal{I} = - \Exp_{\pi(x)} 
[ \nabla^2_x \log \pi(x)]
$.  Next, %despite the abuse of terminology,  
we refer to $\mathcal{I}$ as the Fisher matrix.  
%\end{equation}
%which shows that the optimal preconditioner $\mathcal{I}^{-1}$
%is the negative average inverse Hessian matrix. 

\paragraph{Connection with optimal scaling.} The Fisher matrix $\mathcal{I}$ 
connects also with the optimal scaling result for the RWM algorithm \citep{roberts1997weak, roberts2001optimal}. Specifically, 
for  targets of the form   
$\pi(x) = \prod_{i=1}^d f(x_i)$, the RWM proposal $q(y_n|x_n) = \mathcal{N}(y_n | x_n, (\sigma^2/d) I_d)$ and as $d \rightarrow \infty$  
the optimal parameter $\sigma^2$ is $\sigma^2 = \frac{2.38}{J}$ 
where $J = \Exp_{f(x)}[( \frac{d \log f(x)}{d x} )^2 ]$ is the (univariate) Fisher information for the univariate density $f(x)$, and the preconditioning involves as in our case the inverse Fisher $\mathcal{I}^{-1} = \frac{1}{J} I_d$. This result has been generalized also for heterogeneous targets in 
\cite{roberts2001optimal} where again the inverse Fisher information 
matrix (having now a more general diagonal form) appears as the optimal preconditioner.

%\vspace{-1mm}

\section{Fisher information adaptive MALA
\label{sec:adaptiveMCMC}
}
% \vspace{-1mm}

Armed with the previous optimality result, we wish to develop an adaptive MCMC algorithm to optimize the proposal in \eqref{eq:MALAproposal} by learning online the global variance $\sigma^2$ and the preconditioner $A$. For $\sigma^2$ we follow the standard practice to tune this parameter in order to reach an average acceptance rate around $0.574$ as suggested by optimal scaling results  \citep{roberts1998optimal,roberts2001optimal}. For the matrix $A$ 
we want to adapt it so that  approximately it becomes proportional to the inverse Fisher $\mathcal{I}^{-1}$ 
from \eqref{eq:optimalA_and_fisher}. We also incorporate  a parametrization that helps the adaptation of $\sigma^2$ to be more independent from the one of $A$. Specifically, we remove the global scale from $A$  %normalizing the eigenvalues, which can be accomplished 
by defining the overall proposal as  
\begin{equation}
q(y_n|x_n) = \mathcal{N}\left(y_n|x_n + \frac{\sigma^2}{ \frac{2}{d} \text{tr}(A)} 
A \nabla \log \pi(x_n), \frac{\sigma^2}{\frac{1}{d} \text{tr}(A)}  
A\right), 
\label{eq:normalizedPMALA}
\end{equation}
where $\sigma^2$ is normalized by $\frac{1}{d} \text{tr}(A)$, i.e.\  the average eigenvalue of $A$. Another way to view this is that the effective preconditioner is $A / (\frac{1}{d}\text{tr}(A))$ which has an average eigenvalue equal to one. The proposal in \eqref{eq:normalizedPMALA} is invariant to any scaling of $A$, i.e.\ if $A$ is replaced by $k A$ (with $k>0$) the proposal remains the same. Also, note that when $A$ is the identity matrix $I_d$ (or a multiple of identity) then $\frac{1}{d} \text{tr}(I_d)=1$ and the above proposal reduces to standard MALA with isotropic step size $\sigma^2$. 
%, i.e.\ without having a preconditioner.   

It is straightforward to adapt $\sigma^2$ 
%in \eqref{eq:normalizedPMALA} 
towards an average acceptance rate $0.574$; see pseudocode in Algorithm \ref{alg:FisherMALA}. Thus our main focus next is to describe the learning update for $A$, in fact %an update 
eventually not for $A$ itself but for a square root matrix$\sqrt{A}$ which is what we need to sample from the proposal in \eqref{eq:normalizedPMALA}. 
%For that, we will introduce an easy to implement and computational efficient recursion having $O(d^2)$ cost per iteration. 

To start with, let us simplify notation by writing the score function at the $n$-th MCMC iteration as
$s_n : = \nabla_{x_n} \log \pi(x_n)$. We introduce the 
$n$-sample empirical Fisher estimate
\begin{equation}
\hat{\mathcal{I}}_n = \frac{1}{n} 
\sum_{i=1}^n s_i s_i^\top + \frac{\lambda}{n} I_d,
\label{eq:empiricalFisher}
\end{equation}
where $\lambda > 0$ is a fixed damping parameter. Given that certain conditions apply \citep{haario2001adaptive, roberts2009examples} so that the chain converges 
and ergodic averages converge to exact expected values,  $\hat{\mathcal{I}}_n$ is a consistent estimator satisfying $\lim_{n \rightarrow \infty} \hat{\mathcal{I}}_n = \mathcal{I}$ since as $n \rightarrow \infty$ the damping part $\frac{\lambda}{n} I_d$ vanishes. Including the damping %in the estimator 
is very important since it offers a Tikhonov-like regularization, similar to ridge regression, and it ensures that for any finite $n$ the eigenvalues of $\hat{\mathcal{I}}_n$ are strictly positive. 
%i.e.\ bounded below by $\frac{\lambda}{n}$. 
%In practice, we find that it is beneficial to set $\lambda$ to a quite large value; see Section ??. 
An  estimate then for the 
preconditioner $A_n$ can be set to be proportional to the inverse of the empirical Fisher $\hat{\mathcal{I}}_n$, i.e.\ 
\begin{equation}
A_n \propto \left( \frac{1}{n} \sum_{i=1}^n s_i s_i^\top + \frac{\lambda}{n} I_d \right)^{-1}
=  n \left(\sum_{i=1}^n s_i s_i^\top + \lambda I_d \right)^{-1}. 
\end{equation}
Since any positive multiplicative scalar in front of $A_n$ plays no
role, we can ignore the scalar $n$ and define  $A_n = (\sum_{i=1}^n s_i s_i^\top + \lambda I_d )^{-1}$. Then, as MCMC iterates we can adapt $A_n$ in $O(d^2)$ cost per iteration   
%as follows
%shown next. 
%Then, this preconditioning matrix  
%\begin{prop}
%A Recursion for $A_n = %%(\sum_{i=1}^n s_i s_i^\top + %\lambda I_d)^{-1}$ that costs %$O(d^2)$ per iteration is 
based on the recursion
\begin{align}
& 
\text{Initialization:} \ 
A_1 = \left( 
s_1 s_1^\top + \lambda I_d 
\right)^{-1}
= \frac{1}{\lambda}
\left(I_d - \frac{s_1 s_1^\top}{\lambda + s_1^\top s_1} \right),  \\
& \text{Iteration:} \  A_n  = \left(A_{n-1}^{-1} + s_{n} s_{n}^\top  \right)^{-1} = A_{n-1} -  
\frac{A_{n-1} s_n s_n^\top A_{n-1}}{1 + s_n^\top A_{n-1} s_n},
\end{align}
%\end{prop}
where we applied Woodbury matrix identity. This estimation in the limit can give the optimal preconditioning in the sense  
%\begin{prop}
that under the ergodicity assumption, 
%\begin{equation}
$
\lim_{n \rightarrow \infty} 
\frac{A_n}{\text{tr}(A_n)} = \frac{\mathcal{I}^{-1}}
{\text{tr}(\mathcal{I}^{-1})}
$.
%\end{equation}
%\end{prop}
In practice we do not need to compute directly the matrix $A_n$  but  a square root matrix $R_n := \sqrt{A}_n$,
such that $R_n R_n^\top = A_n$, since we need a square root matrix to draw samples from the proposal in \eqref{eq:normalizedPMALA}.  %according to 
%$$
%y_n = x_n + (\sigma^2 / 2)
%R R^\top \nabla \log \pi(x) 
%+ \sigma R \eta, \ \  \eta %\sim \mathcal{N}(0,I_d)
%$$
%In other words, for adaptive MCMC 
To express the corresponding recursion for $R_n$ we will rely on a technique that dates back to the early days of Kalman filtering \citep{Potter1963,Bierman1977},
%which for completeness we state next and prove in the Appendix. We split in intermediate result
%in Lemma \ref{lemmasqrt} 
which applied to 
%Applying this technique to 
our case gives the following result.   
\begin{prop}
\label{prop:recursion}
%By making use of Lemma %\ref{lemmasqrt} 
A square root matrix $R_n$, such that $R_n R_n^\top = A_n$, can be computed recursively in $O(d^2)$ time per iteration as follows:
\begin{align}
& \text{Initialization:} 
\ R_1
= \frac{1}{\sqrt{\lambda}}
\left(I_d - r_1 \frac{s_1 s_1^\top}{\lambda + s_1^\top s_1} \right),  \ \ r_1 = \frac{1}{1 + \sqrt{\frac{\lambda}{\lambda + s_1^\top s_1}}} \\
& 
\text{Iteration:} \ R_n  
%= R_{n-1} \left(I_d  -  r_n \frac{\phi_n \phi_n^\top}{1 + \phi_n^\top \phi_n} \right) 
= R_{n-1} -  
r_n \frac{(R_{n-1} \phi_n) \phi_n^\top}{1 + \phi_n^\top \phi_n}, \ \ \phi_n = R_{n-1}^\top s_n, \ r_n = \frac{1}{1 + \sqrt{\frac{1}{1 + \phi_n^\top \phi_n}}}.
\end{align}
\label{propSQRT}
\end{prop}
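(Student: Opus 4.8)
The plan is to prove both the initialization and the iteration purely by verifying the defining identity $R_n R_n^\top = A_n$, and in each case the verification reduces to taking a square root of a symmetric rank-one perturbation of the identity. First I would start from the Woodbury form $A_n = A_{n-1} - \frac{A_{n-1} s_n s_n^\top A_{n-1}}{1 + s_n^\top A_{n-1} s_n}$ established earlier, substitute the inductive hypothesis $A_{n-1} = R_{n-1} R_{n-1}^\top$, and introduce $\phi_n = R_{n-1}^\top s_n$. This gives $A_{n-1} s_n = R_{n-1} \phi_n$ and $s_n^\top A_{n-1} s_n = \phi_n^\top \phi_n$, so that $A_n = R_{n-1}\left(I_d - \frac{\phi_n \phi_n^\top}{1 + \phi_n^\top \phi_n}\right) R_{n-1}^\top$. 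The problem is thereby reduced to finding a factor of the bracketed matrix.

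Next I would make the ansatz $R_n = R_{n-1} M_n$ with $M_n = I_d - r_n \frac{\phi_n \phi_n^\top}{1 + \phi_n^\top \phi_n}$ a symmetric rank-one correction of the identity, so that $R_n R_n^\top = R_{n-1} M_n M_n^\top R_{n-1}^\top$ and it suffices to enforce $M_n M_n^\top = I_d - \frac{\phi_n \phi_n^\top}{1 + \phi_n^\top \phi_n}$. Expanding $M_n M_n^\top$ produces only terms proportional to $I_d$ and to $\phi_n \phi_n^\top$ (using $\phi_n (\phi_n^\top \phi_n) \phi_n^\top$), so matching the coefficient of $\phi_n \phi_n^\top$ collapses the matrix equation to a single scalar quadratic in $r_n$.

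Writing $p = \phi_n^\top \phi_n$, that quadratic is $p\, r_n^2 - 2(1+p) r_n + (1+p) = 0$, whose discriminant simplifies to $4(1+p)$ and whose roots are $r_n = \frac{(1+p) \pm \sqrt{1+p}}{p}$. I would select the root with the minus sign, which is the one that stays bounded (in fact tends to $1/2$) as $p \to 0$ and hence yields a well-conditioned $M_n$ close to the identity; rationalizing $\frac{(1+p) - \sqrt{1+p}}{p} = \frac{\sqrt{1+p}\,(\sqrt{1+p}-1)}{p}$ shows it equals $\frac{1}{1 + \sqrt{1/(1+p)}}$, exactly the stated $r_n$. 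Multiplying out $R_{n-1} M_n$ then reproduces the claimed iteration $R_n = R_{n-1} - r_n \frac{(R_{n-1}\phi_n)\phi_n^\top}{1 + \phi_n^\top \phi_n}$.

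The initialization is the same computation applied to $A_1 = \frac{1}{\lambda}\left(I_d - \frac{s_1 s_1^\top}{\lambda + s_1^\top s_1}\right)$: seeking $R_1 = \frac{1}{\sqrt{\lambda}} M_1$ with $M_1 = I_d - r_1 \frac{s_1 s_1^\top}{\lambda + s_1^\top s_1}$ leads to the analogous quadratic $q\, r_1^2 - 2(\lambda+q) r_1 + (\lambda+q) = 0$ with $q = s_1^\top s_1$, whose minus-sign root rationalizes to $r_1 = \frac{1}{1 + \sqrt{\lambda/(\lambda + s_1^\top s_1)}}$. Finally, the $O(d^2)$ cost is immediate from the recursion: each step needs only the matrix-vector product $R_{n-1}\phi_n$ and a rank-one update, never a dense $d \times d$ by $d \times d$ multiplication. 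The one genuinely non-mechanical point is the choice of root — picking the bounded, well-conditioned branch and confirming by rationalization that it matches the closed form; everything else is routine algebra.
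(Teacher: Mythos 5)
Your proposal is correct and follows essentially the same route as the paper: hypothesize a symmetric rank-one correction of the identity as the square-root factor, reduce $R_nR_n^\top = A_n$ to a scalar quadratic in $r_n$, and select the root that rationalizes to the stated closed form (the paper packages this as a lemma on square roots of $I_d - zz^\top$ with $z^\top z \le 1$, choosing the root via positive definiteness rather than boundedness, but it is the same root and the same algebra). If anything, you spell out the inductive step $A_n = R_{n-1}\bigl(I_d - \frac{\phi_n\phi_n^\top}{1+\phi_n^\top\phi_n}\bigr)R_{n-1}^\top$ explicitly, which the paper compresses into a single ``similarly'' remark.
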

%A detailed proof is given in the Appendix \ref{app:propfour}. 
A way to generalize the above recursive 
estimation of a square root
for the inverse Fisher matrix  is to consider the stochastic approximation framework 
\citep{RobbinsMonro1951}. This requires to write  an online learning update for the empirical Fisher of the form 
\begin{equation}
\hat{\mathcal{I}}_n 
= \hat{\mathcal{I}}_{n-1} + 
\gamma_n (s_n s_n^\top - \hat{\mathcal{I}}_{n-1}),  \ \ \text{initialized at} \ \hat{\mathcal{I}}_1 = s_1 s_1^\top + \lambda I_d,
\label{eq:empiricalFisherGamma}
\end{equation}
where the learning rates $\gamma_n$ satisfy the standard conditions $\sum_{n=1}^\infty \gamma_n = \infty, \sum_{n=1}^\infty \gamma_n^2 < \infty$. 
Then, it is straightforward to generalize the recursion for the square root matrix in Proposition \ref{propSQRT} to account for this more general case; see Appendix \ref{app:generallearningrates}. The  recursion in Proposition \ref{propSQRT}  is a special case when   
$\gamma_{n} = \frac{1}{n}$. In our simulations we did not observe significant improvement by using more general learning rate sequences, and therefore in all our experiments in Section \ref{sec:experiments} we use the \emph{standard} learning rate $\gamma_n = \frac{1}{n}$. Note that this learning rate is also used by other adaptive MCMC methods \citep{haario2001adaptive}.

An  adaptive algorithm %, as summarised by Algorithm 1,
 that learns online from the score function vectors 
 $s_n$ can work well in some cases, but still it can be unstable in general. One reason is that $s_n = \nabla \log \pi(x_n)$ will not have zero expectation when the chain is transient 
 %(where adaptation mostly happens) 
 and states $x_n$ are not yet draws from the stationary  distribution $\pi$.  To analyze this, note that 
 the learning signal $s_n$ enters in the empirical Fisher estimator  $\mathcal{\bar{I}}_n$ through the outer product $s_n s_n^\top$ as shown by 
 Eqs.\ \eqref{eq:empiricalFisher} and \eqref{eq:empiricalFisherGamma}. However, in the transient phase $s_n s_n^\top$  will be biased since the expectation 
  $\Exp[s_n s_n^\top] =  \Exp[(s_n -  \Exp[s_n]) (s_n -  \Exp[s_n])^\top]  +  \Exp[s_n]  \Exp [s_n]^\top \neq \mathcal{I}$, where the expectations are taken under the 
  marginal distribution of the chain at time $n$.  In practice the mean vector $\Exp[s_n]$ can 
  take large absolute values, which can introduce significant bias through the additive term $\Exp[s_n]  \Exp [s_n]^\top$. Thus, to reduce some bias we could track the empirical 
 mean $\bar{s}_n = \frac{1}{n} \sum_{i=1}^n s_i$ and  center the signal $s_n - \bar{s}_n$ so that the Fisher matrix is estimated by the empirical covariance 
 $\frac{1}{n-1} \sum_{i=1}^n (s_i - \bar{s}_n)( s_i - \bar{s}_n)^\top$.  The recursive estimation becomes similar to standard adaptive MCMC \cite{haario2001adaptive} %, originated by \cite{haario2001adaptive}, 
 where we recursively propagate an online empirical estimate for the mean of $s_n$ and incorporate it into the online empirical estimate of the covariance matrix (in our case the inverse Fisher matrix);  see  Eq.\  \eqref{eq:adaMALA} in Section \ref{sec:experiments} for the standard adaptive MCMC recursion \cite{haario2001adaptive} and Appendix \ref{app:FisherMALA2} for our Fisher method. 
% We can think of this type of estimation as \emph{paired stochastic approximation} where we simultaneously perform empirical mean and covariance estimatiom.  
 While this can make learning quite stable we experimentally discovered that there is another scheme, presented next in Section \ref{sec:increments_andRB}, that  
 is significantly better and stable especially for very anisotropic high dimensional targets; see detailed results 
 in Appendix \ref{sec:effectRaoblackwell}.  
 %This scheme does centring differently by using as learning signals the increments $s_{n+1} - s_n$  of score functions.  
%  Hence, to fully resolve the centring issue  and avoid the often troublesome paired stochastic approximation, we will replace the score function 
%learning signal $s_n$ by a  score function difference which by construction is always approximately centred.   
   
% Next, we discus how to resolve this problem.  
 %To resolve this we need to center the learning signals $s_n$s.  

%\vspace{-1mm}
 
 \subsection{Adapting to %Rao-Blackwellized 
 score function increments
\label{sec:increments_andRB}
 }
 
% \vspace{-1mm}
 
% One way to centre the score function vector $s_n$  to have approximately zero expectation is to follow standard adaptive MCMC \cite{haario2001adaptive} %, originated by \cite{haario2001adaptive}, 
% where we recursively propagate an online empirical estimate for the mean of $s_n$ and incorporate it into the online empirical estimate of the covariance matrix. (in our case the inverse Fisher matrix) We can think of this as  \emph{paired stochastic estimation}  and  eq.\  \eqref{eq:adaMALA} shows this for the standard adaptive MCMC method \cite{haario2001adaptive}, while in our case  the estimation is similar; see Appendix \ref{app:FisherMALA2}.  While this can make learning quite stable we experimentally observe that it is still not ideal %especially 
 %for complex high dimensional targets. Hence, to fully resolve the centring issue  and avoid the often troublesome paired stochastic approximation, we will replace the score function 
%learning signal $s_n$ by a  score function difference which by construction is always approximately centred.        
            
An MCMC algorithm updates at each iteration its state according to 
 $
 x_{n+1} = x_n + \mathbf{I}(u_n < \alpha(x_n,y_n)) (y_n - x_n)
 $
 where $\alpha(x_n,y_n)$ is the M-H probability, $u_n \sim U(0,1)$ is an uniform random number %needed to take the accept/reject decision 
 and $\mathbf{I}(\cdot)$ is the indicator function. This sets $x_{n+1}$ to either the proposal $y_n$ 
 or the previous state $x_n$ based on the binary value 
$\mathbf{I}(u_n < \alpha(x_n,y_n))$. Similarly, we can consider the update   %(where $f(x) = x$ is just a special case) 
 of the score function $s(x) = \nabla \log \pi(x)$ and conveniently re-arrange it as an increment,  
\begin{equation}
s_n^{\delta}  = s(x_{n+1}) - s(x_n) = \mathbf{I}(u_n < \alpha(x_n,y_n)) (s(y_n) - s(x_n)). 
\label{eq:differenceNoRB}
 \end{equation}
 %where note that $s(y_n) - s(x_n) = \nabla \log \pi(y_n) - \nabla \log (x_n)$ depends also on the proposed sample $y_n$. 
While both $s_n$ and $s_n^\delta$ have zero expectation when $x_n$ is from stationarity, i.e.\ $x_n \sim \pi$,
%, a well known fact what has been explored in the literature (assuming any function $f$ and not just $s$)  to build control variates for variance reduction from MCMC estimates \citep{andradottir1993variance,henderson1997variance,dellaportas2012control}.
 the increment $s_n^{\delta}$ (unlike $s_n$) tends in practice to be more centered and close to zero even when the chain is transient, e.g.\  note 
 that $s_n^{\delta}$ is zero when $y_n$ is rejected.  Further, since the difference $s_n^\delta = s(x_{n+1}) - s(x_n)$ conveys information 
about the covariance of the score function %(i.e.\ the Fisher matrix  $\mathcal{I}$) 
we can use it in the recursion of Proposition \ref{prop:recursion} to learn the preconditioner $A$, where
we simply replace $s_n$ by $s_n^\delta$.  As shown in the experiments this leads to a remarkably fast and effective adaptation of the inverse Fisher matrix $\mathcal{I}^{-1}$ without observable bias, or at least  no 
observable for Gaussian targets where the true  $\mathcal{I}^{-1}$ is known.    
We can further apply %improve the estimation  by applying 
Rao-Blackwellization to reduce some variance of $s_n^\delta$.  Since $s_n^\delta$ enters 
into the estimation of the empirical Fisher, see Eq.\ \eqref{eq:empiricalFisher} or \eqref{eq:empiricalFisherGamma}, through the outer product $s_n^\delta (s_n^\delta)^\top = 
 \mathbf{I}(u_n < \alpha(x_n,y_n)) (s(y_n) - s(x_n))  (s(y_n) - s(x_n))^\top$ we can 
 marginalize out the r.v.\ $u_n$ which yields  $\Exp_{u_n}[s_n^\delta (s_n^\delta)^\top]  = \alpha(x_n,y_n) (s(y_n) - s(x_n))  (s(y_n) - s(x_n))^\top$. After this 
 Rao-Blackwellization an alternative vector to use for adaptation is
 \begin{equation}
s_n^{\delta}  = \sqrt{\alpha(x_n,y_n)} (s(y_n) - s(x_n)), 
\label{eq:differenceRB}
\end{equation}
which depends on the square root $\sqrt{\alpha(x_n,y_n)}$ of the M-H probability.  As long 
as $\alpha(x_n,y_n)>0$, the learning signal in \eqref{eq:differenceRB} depends on the proposed sample $y_n$ even when it is rejected.    
 %where note that $s(y_n) - s(x_n) = \nabla \log \pi(y_n) - \nabla \log (x_n)$ depends also on the proposed sample $y_n$. 
 
%\paragraph{Algorithm.} 
Finally, we can express the full algorithm for Fisher information adaptive MALA %(FisherMALA) 
as outlined by Algorithm \ref{alg:FisherMALA}, which adapts by using   %default learning signal 
the Rao-Blackwellized score function increments from Eq.\  \eqref{eq:differenceRB}. Note that, while Algorithm  \ref{alg:FisherMALA} uses $s_n^\delta$ from Eq.\  \eqref{eq:differenceRB}, 
%because it performs slightly better than 
the initial signal from Eq.\ \eqref{eq:differenceNoRB} works equally well; see % the ablation study in 
Appendix \ref{sec:effectRaoblackwell}.  Also, the algorithm includes an initialization phase where simple MALA  runs for few iterations to move away from the initial state, 
%and then FisherMALA starts, 
as discussed further  %in the  experiments 
in Section \ref{sec:experiments}.         
 
%This is the default learning signal in Algorithm 1 and the one we consider in all our experiments in Section
%\ref{sec:experiments}. However, an ablation study in Appendix ?? shows that the improvement is typically small (butnoticeable) and the initial signal from \eqref{eq:diffrenceNoRB} is 
%also very effective.     

\begin{algorithm}[tb]
   \caption{Fisher adaptive MALA (blue lines are ommitted when not adapting $(R,\sigma^2)$)}
   \label{alg:FisherMALA}
\begin{algorithmic}
   \STATE {\bfseries Input:} Log target $\log \pi(x)$;  gradient $\nabla \log \pi(x)$;  $\lambda>0$ (default $\lambda=10$);  $\alpha_*=0.574$      
   %\REPEAT
   \STATE Initialize $x_1$ and $\sigma^2$ by running simple MALA (i.e.\ with $\mathcal{N}(y|x + (\sigma^2 / 2) \nabla \log \pi(x), \sigma^2 I)$) 
 for $n_0$  (default $500$)  iterations where $\sigma^2$ is adapted towards acceptance rate $\alpha_*$
 \STATE Initialize square root matrix $R = I_d$ and compute $(\log \pi(x_1), \nabla \log \pi(x_1))$   
 \STATE  Initialize $\sigma^2_R = \sigma^2$    \ \ \ \ \ \ \ \ \ \ \  \  \ \ \  \ \ \ \ \ \ \ \ \ \ \ \ \ \   {\em  \#  placeholder for the normalized step size  $\sigma^2 /  \frac{1}{d} \text{tr}(R R^\top)$ }
   \FOR{For $n=1,2,3,\ldots,$}
   %\IF{$x_i > x_{i+1}$}
   \STATE:  Propose $y_n = x_n + (\sigma_R^2 /  2) R (R^\top \nabla \log \pi(x_n))  + \sigma_R R \eta, \  \ \eta \sim \mathcal{N}(0, I_d)$
   \STATE: Compute $(\log \pi(y_n), \nabla \log \pi(y_n))$ 
   \STATE: Compute
   $\alpha(x_n, y_n) = \text{min}\left(1, e^{
\log \pi(y_n) + h(x_n,y_n) - \log \pi(x_n) - h(y_n,x_n)}
 \right)$ 
 by using Proposition \ref{prop:accratio}  
   \STATE: \textcolor{blue}{Compute adaptation signal $s_n^{\delta}  = \sqrt{\alpha(x_n, y_n)} (\nabla \log \pi(y_n) -  \nabla \log \pi(x_n))$} 
   \STATE: \textcolor{blue}{Use $s_n^\delta$ to adapt $R$ based on Proposition \ref{prop:recursion} (if $n=1$ use (12) and if $n>1$ use (13))}
   \STATE: \textcolor{blue}{Adapt step size $\sigma^2 \leftarrow \sigma^2 \left[1 + \rho_n (\alpha(x_n, y_n) -  \alpha_*)\right]$}   {\em \# default const learning rate $\rho_n\! =\!0.015$}
    \STATE: \textcolor{blue}{Normalize step size $\sigma_R^2 = \sigma^2 / \frac{1}{d} \text{tr}(R R^\top)$}  \ \ \  \ \ \  \ \ \ \ \ \ \  {\em \#  $\text{tr}(R R^\top) = sum(R \circ R)$ which is $O(d^2)$}
   \STATE: Accept/reject $y_n$ with probability $\alpha(x_n, y_n)$ to obtain  $(x_{n+1}, \log \pi(x_{n+1}), \nabla \log \pi(x_{n+1}))$  
    % \ENDIF
   \ENDFOR
  % \UNTIL{$noChange$ is $true$}
\end{algorithmic}
\end{algorithm}

%\vspace{-2mm}

\section{Experiments}
\label{sec:experiments}

%\vspace{-2mm}

\subsection{Methods and experimental setup}

%\vspace{-2mm} 

We apply the Fisher information adaptive MALA algorithm (FisherMALA)  
to high dimensional problems and we compare it with the following other samplers. 
 {\bf (i)} The simple MALA sampler with proposal 
$\mathcal{N}(y_n|x_n + (\sigma^2 / 2) \nabla \log \pi(x_n), \sigma^2 I)$ , which adapts only a step size $\sigma^2$ 
 without having a preconditioner.   
 {\bf (ii)} A preconditioned adaptive MALA (AdaMALA) where the proposal follows exactly the from in
 \eqref{eq:normalizedPMALA}  but where the preconditioning matrix is learned using standard adaptive MCMC
 based on the well-known recursion from \cite{haario2001adaptive}:  
\begin{equation}
\mu_n = \frac{n-1}{n}
\mu_{n-1} + \frac{1}{n} x_n,  \ 
\Sigma_n 
= \frac{n-2}{n-1} 
\Sigma_{n-1} + \frac{1}{n} 
(x_n - \mu_{n-1}  
)(x_n - \mu_{n-1})^\top,
\label{eq:adaMALA}
\end{equation}
where the recursion %for the mean  
is initialized at $\mu_1= x_1$ 
and  %for the covariance at  
$\Sigma_2 = \frac{1}{2} (x_2 - \mu_1)(x_2 - \mu_1)^\top +  \lambda I$,  and $\lambda>0$ is the damping parameter that plays the same role as in FisherMALA. 
%For any $n$ and when $\lambda=0$, the covariance $\Sigma_n$ reduces to  standard unbiased estimate 
%$\frac{1}{n-1} \sum_{i=1}^n (x_i - \mu_n) (x_i - \mu_n)^\top$.  Further, it is worth clarifying that FisherMALA and AdaMALA differ only in 
%the way we adapt $A_n$, i.e.\  for FisherMALA we follow the 
%proposed adaptation from Section \ref{sec:adaptiveMCMC} to learn $A_n$ while for AdaMALA we 
%learn $A_n:=\Sigma_n$ according eq.\ \eqref{eq:adaMALA}. 
 {\bf (iii)} The  Riemannian manifold MALA (mMALA)  \citep{GirolamiCalderhead11}  which uses position-dependent preconditioning matrix $A(x)$. mMALA 
 in high dimensions runs  slower  than other schemes since the computation of $A(x)$ may involve second derivatives and requires  
 matrix decomposition that costs $O(d^3)$ per iteration.     
  {\bf (iv)} Finally, we include in the comparison  the
 Hamiltonian Monte Carlo (HMC) sampler with a fixed number of 10 leap frog steps
and identity mass matrix.  We leave the possibility to learn with our method a preconditioner in HMC for future work since this is more involved; 
see discussion at Section \ref{sec:conclusion}.
% for a possible direction on this.  
 
For all experiments and samplers
we consider $2 \times 10^4$ burn-in iterations and $2 \times 10^4$ iterations for 
collecting samples. We set $\lambda=10$ in FisherMALA  and AdaMALA.  Adaptation of the proposal distributions, i.e.\ the parameter  $\sigma^2$,  the preconditioning or the step size of HMC,  occurs only  during burn-in and at collection of samples stage the proposal parameters are kept fixed. 
For all three  MALA schemes the global step size $\sigma^2$ is adapted to achieve an acceptance rate around $0.574$ (see Algorithm \ref{alg:FisherMALA}) while the corresponding parameter for HMC is adapted towards  $0.651$ rate \citep{beskos2013optimal}. In FisherMALA from the $2 \times 10^4$ burn-in iterations the first $500$ iterations are used as the initialization phase in
Algorithm \ref{alg:FisherMALA} where samples are generated by just MALA with adaptable $\sigma^2$.  Thus,  only the last $1.95 \times 10^4$ burn-in 
iterations are used to  %recursively 
adapt the preconditioner. For AdaMALA this initialization scheme proved to be unstable 
and we used a more elaborate scheme, %split into two phases, 
as described in Appendix \ref{sec:initializeAdaMALA}.        
 
We compute effective sample size (ESS) scores for each method by using the  $2 \times 10^4$ samples from the collection phase.
We estimate ESS across each dimension of the state  vector $x$, and we report maximum, median and minimum values,  by    
using the built-in method in TensorFlow Probability Python package. 
Also,  we show visualizations that indicate sampling efficiency or effectiveness in estimating the preconditioner (when the ground truth preconditioner is known).

%\vspace{-2mm} 
  
 \subsection{Gaussian targets}
 
%\vspace{-2mm} 
 
 We consider three examples of multivariate Gaussian targets of the form $\pi(x) = \mathcal{N}(x|\mu,\Sigma)$, where the optimal preconditioner (up to any positive scaling) is the covariance matrix $\Sigma$  since the inverse 
 Fisher is  $\mathcal{I}^{-1} = \Sigma$.  For such case the Riemannian manifold sampler mMALA  \citep{GirolamiCalderhead11} is the optimal MALA sampler since
 it uses precisely $\Sigma$ as the preconditioning.  In contrast to mMALA which somehow has access to the ground-truth oracle,  both FisherMALA and AdaMALA 
 use adaptive recursive estimates of the preconditioner     
 % the empirical inverse Fisher  $A_n = \mathcal{I}_n^{-1}$  recursively estimated by FisherMALA and $\Sigma_n$  estimated by AdaMALA both 
that should converge to the optimal
 $\Sigma$, and thus the question is which of them learns faster. 
 To quantify this we compute the Frobenius norm  $||\widetilde{A}_n -  \widetilde{\Sigma}||_F$ across adaptation iterations $n$, where $\widetilde{B}$ denotes  
 the matrix normalized by the average trace, i.e.\  $\widetilde{B} = B / (\frac{\text{tr}(B)}{d})$, for either $A_n$ given by FisherMALA 
 or $A_n :=\Sigma_n$ given by AdaMALA and  where  $\widetilde{\Sigma}$ is the optimal normalized preconditioner. The faster 
the Frobenius norm goes to zero the more effective is the corresponding adaptive scheme.  For  
all three Gaussian targets the mean vector $\mu$ was taken to be the vector of ones and samplers  were initialized by drawing from standard 
normal. 
% \paragraph{2-D example} 
The first example is a two-dimensional Gaussian target with covariance matrix $\Sigma = [1 \ 0.995; 0.995 \ 1]$. 
Both FisherMALA and AdaMALA perform almost the same (FisherMALA has faster convergence) in this low dimensional example as shown by Frobenius norm  
in Figure \ref{fig:2dgaussian_and_gp}a; see also Figure \ref{fig:2dgaussian_appendix} in the Appendix for visualizations of the adapted preconditioners.    
The following two examples involve 100-dimensional targets. % and we describe them next.  

%\vspace{-1mm}

\paragraph{Gaussian process correlated target.} We consider a Gaussian process to  construct a 100-dimensional Gaussian  
where the covariance matrix is  obtained  by a non-stationary covariance function comprising the product of linear and squared exponential kernels plus small 
 white noise, i.e.\ $[\Sigma]_{i,j}  = s_i s_j \exp\{ - \frac{1}{2} \frac{(s_i - s_j)^2}{0.09} \} + 0.001\delta_{i,j}$ where the scalar inputs $s_i$ 
 form a regular grid in the range $[1,2]$.  Figure  \ref{fig:2dgaussian_and_gp}b shows the evolution of the Frobenius norms and panels d,c  
 depict as $100 \times 100$ images  the true covariance matrix and the preconditioner estimated by FisherMALA. 
 For AdaMALA see Figure \ref{fig:gp_appendix} in the Appendix.  Clearly, FisherMALA learns much faster and achieves more accurate estimates of the optimal preconditioner. 
 Further Table  \ref{table:large_table}  shows that FisherMALA achieves significantly better ESS than AdaMALA and reaches the same 
 performance with mMALA. %(which is the optimal MALA sampler in this case).   

%The first row of Figure \ref{fig:neal} shows the trace plot across
%iterations of the dimension $x_{100}$  for some of the adaptive schemes 
%including an HMC scheme that uses $20$ leap frog steps.  Clearly, the gradient-based adaptive methods
%show much smaller autocorrelation that AM.  This is because they achieve a more 
%efficient adaptation of the Cholesky factor $L$ which ideally should become proportional to a diagonal matrix 
%with the linear grid $0.01,0.02, \dots,1$ in the main diagonal. 
%The second row of Figure \ref{fig:neal} shows the diagonal elements of $L$ from which we can observe that 
%all gradient-based schemes lead to more accurate adaptation with gadMALAf being the most accurate.  

%\begin{figure}%[!htb]
%\centering
%\begin{tabular}{ccc}
%\includegraphics[scale=0.2]
%{figures/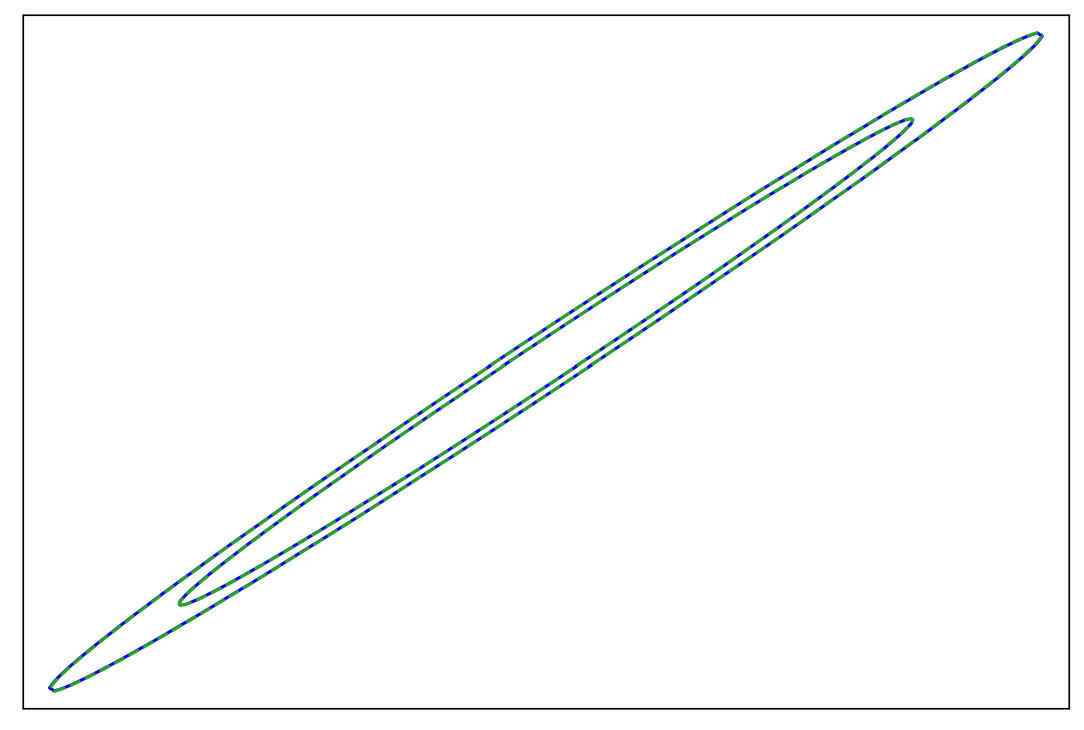} &
%\includegraphics[scale=0.2]
%{figures/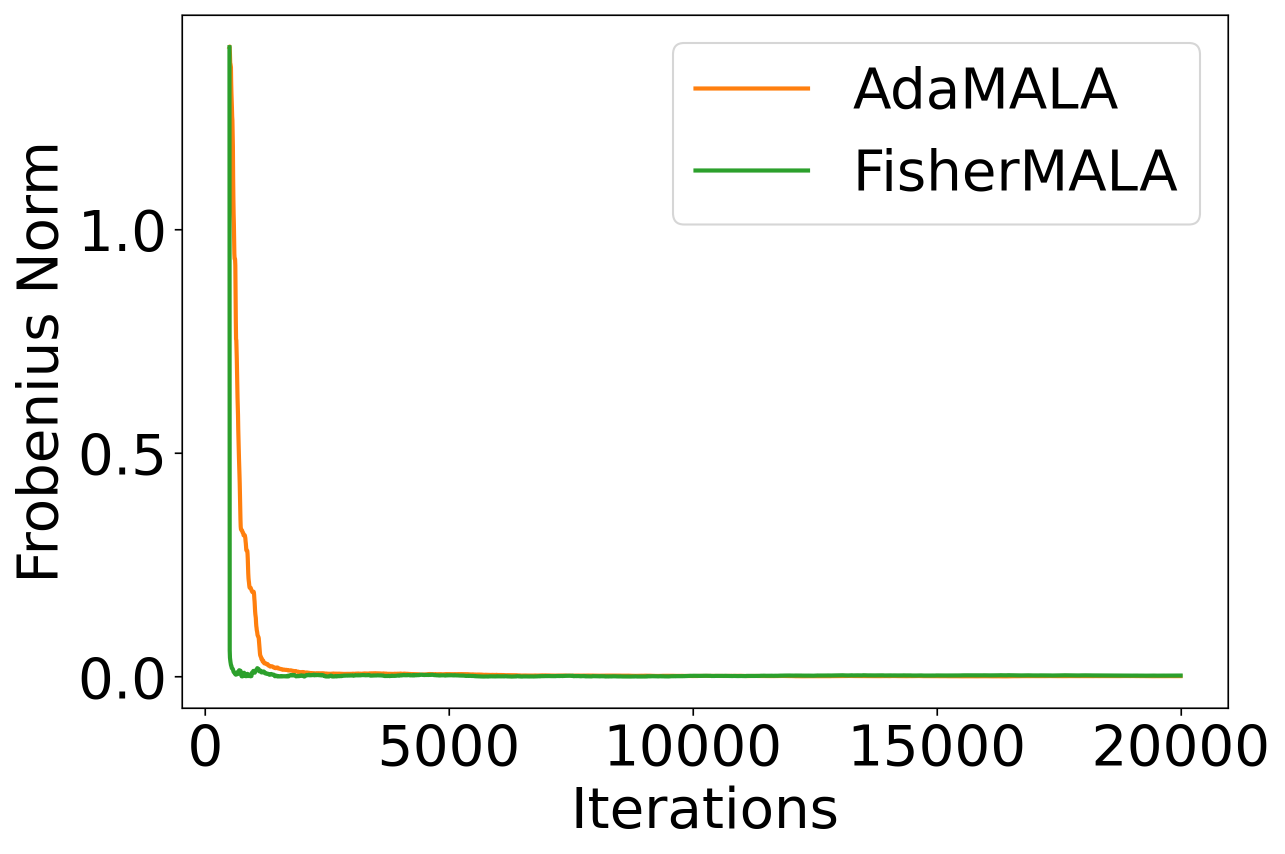} &
%\includegraphics[scale=0.2]
%{figures/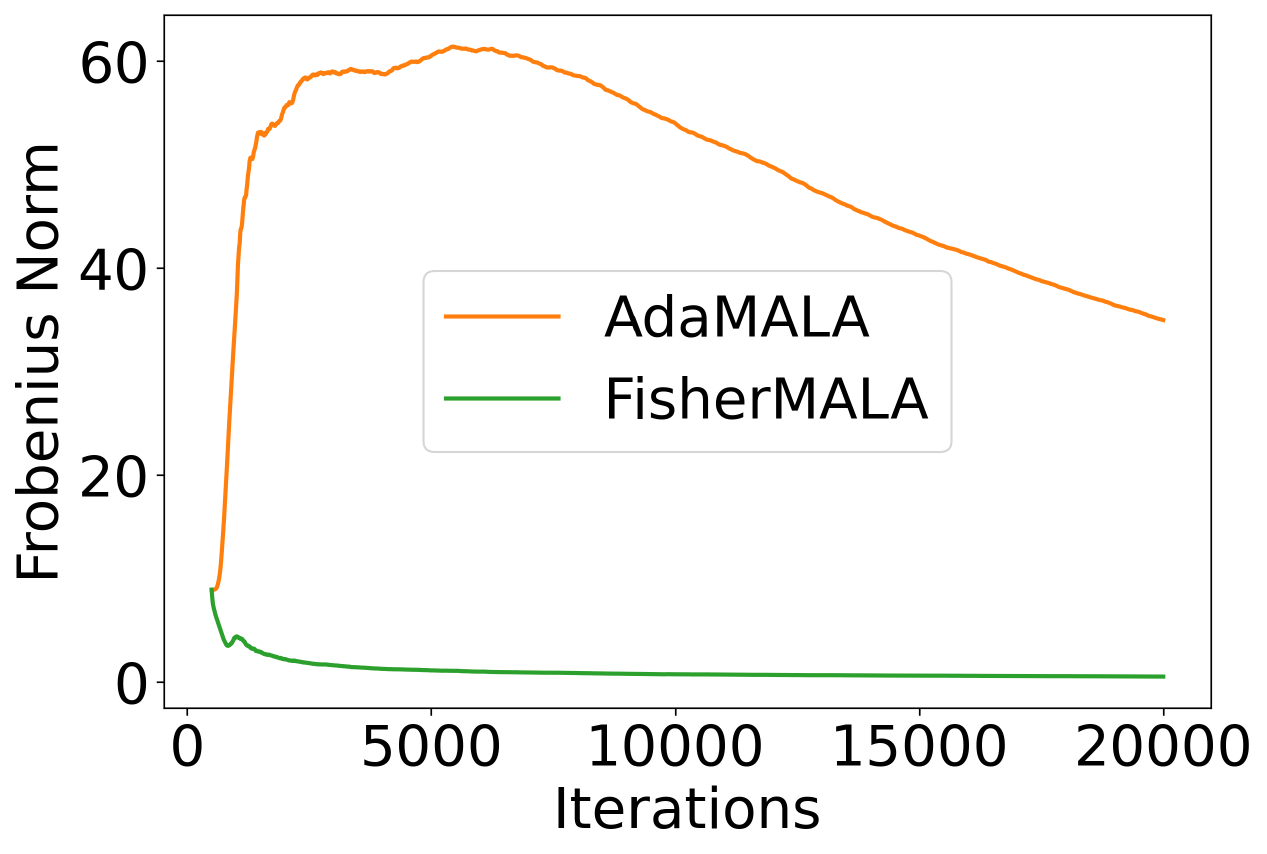}
%\end{tabular}
%\caption{Heteroskedastic Gaussian  target results.} 
%\label{fig:neal}
%\end{figure}

\begin{figure}[t]
\centering
\begin{tabular}{cccc}
\includegraphics[scale=0.18]
{2d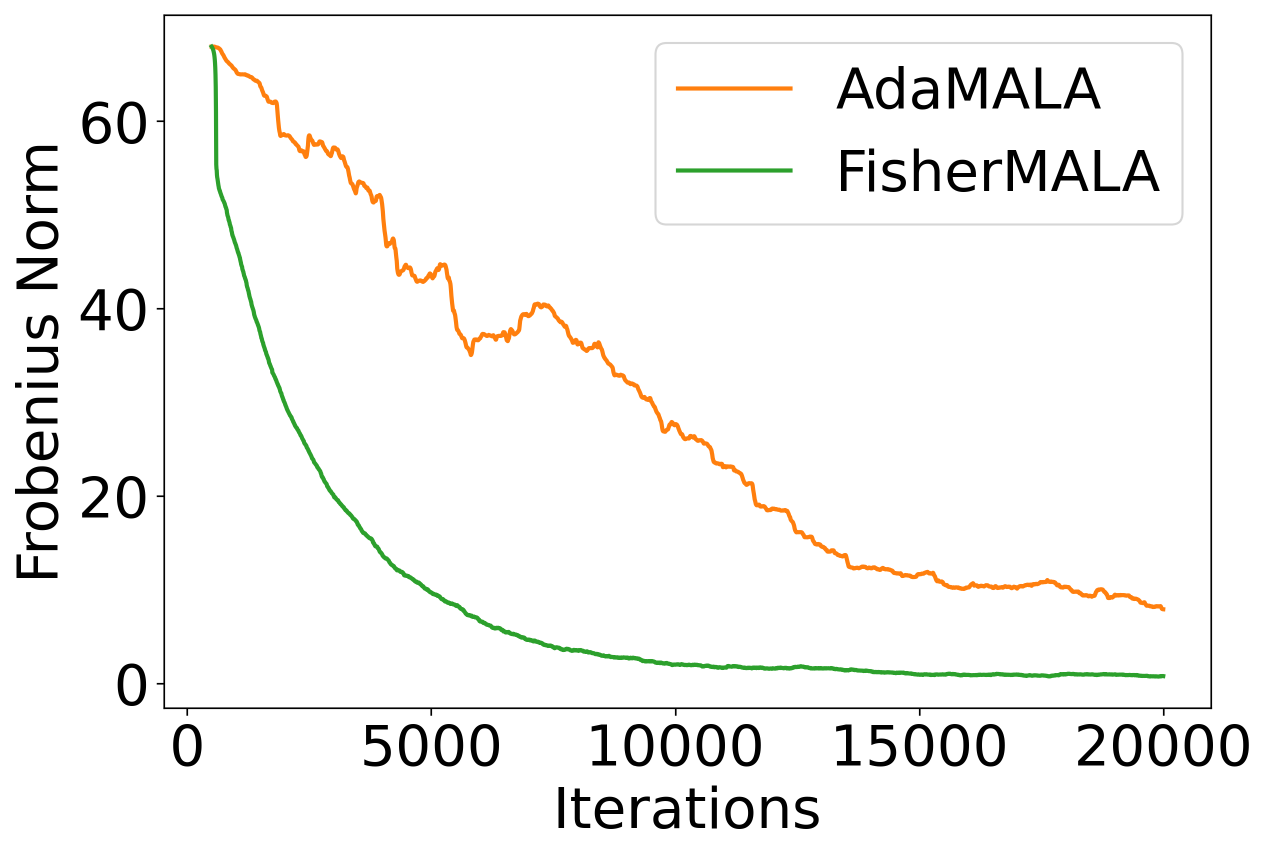} &
\includegraphics[scale=0.18]
{gaussian_dist_from_true_cov.pdf}  &
\includegraphics[scale=0.19]
{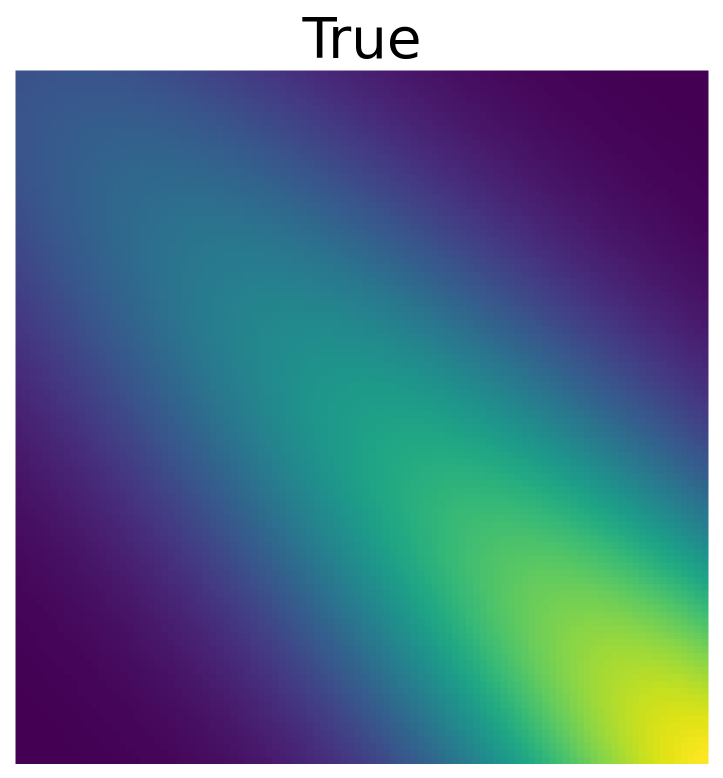}  &
\includegraphics[scale=0.19]
{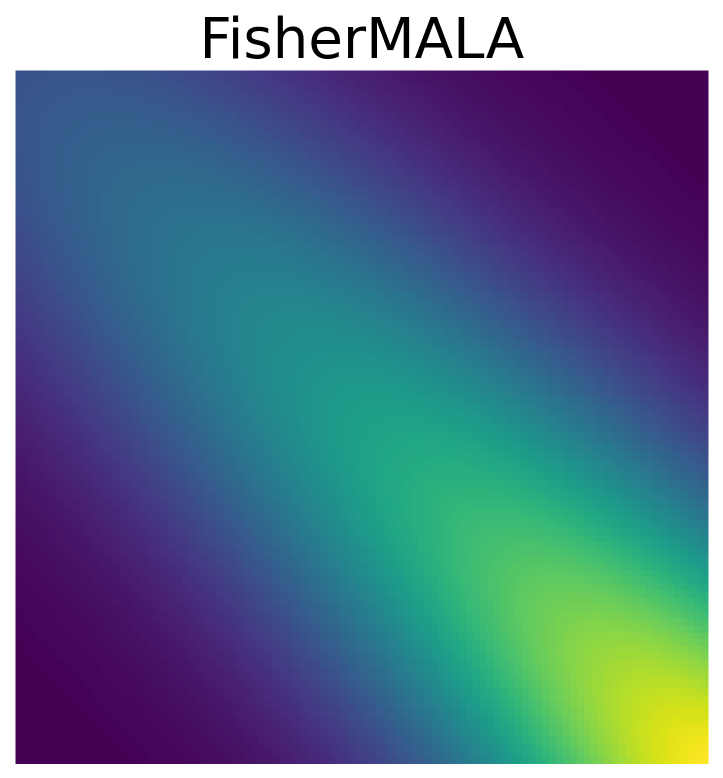} \\
 (a) & (b) & (c) & (d)
\end{tabular}
%\vspace{-2mm}
\caption{Panel (a)  shows the Frobenius norm across burn-in iterations for the 2-D Gaussian 
and (b) for the GP target. The exact GP covariance matrix is shown in (c) and the estimated one
by FisherMALA in (d).} 
\label{fig:2dgaussian_and_gp}
\end{figure}

\begin{figure}[t]
\centering
\begin{tabular}{ccc}
\includegraphics[scale=0.19]
{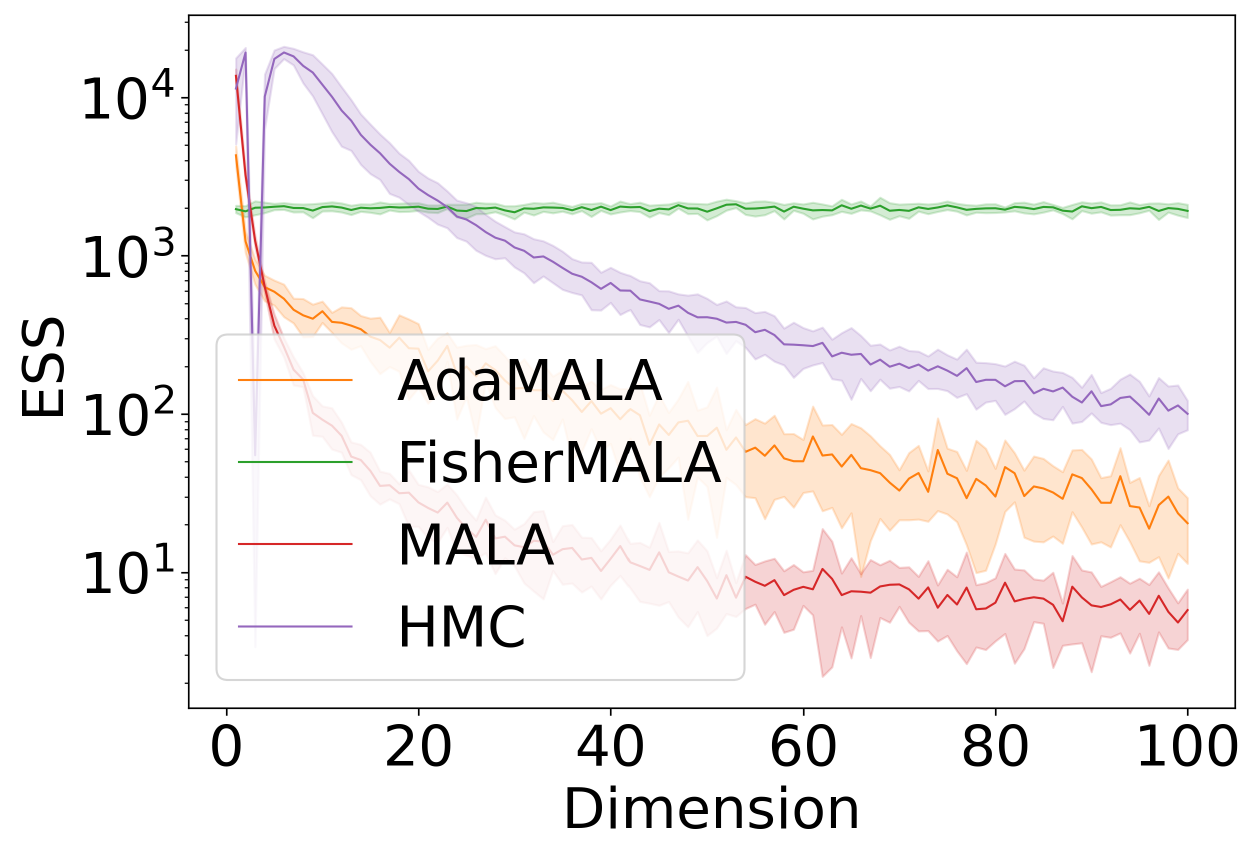} &
\includegraphics[scale=0.19]
{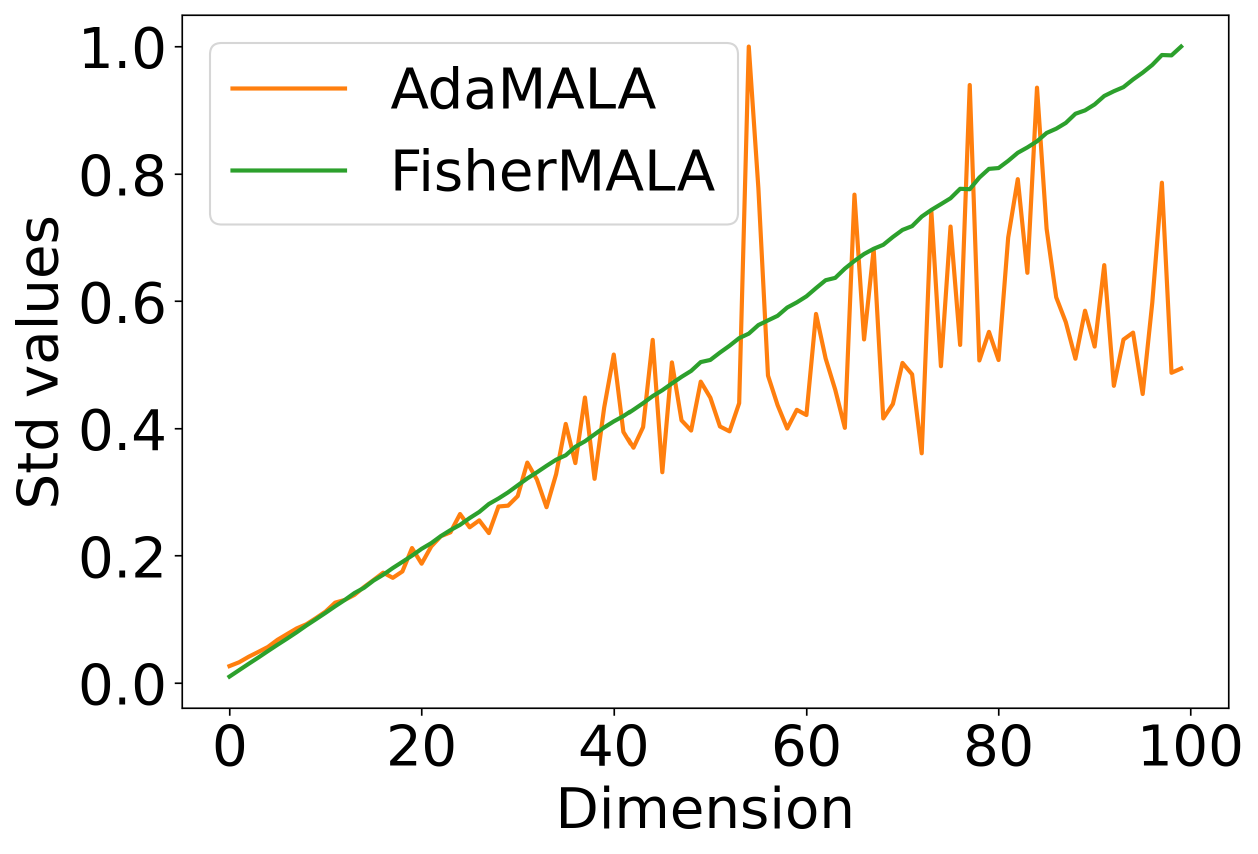} &
\includegraphics[scale=0.19]
{neal_dist_from_true_cov.pdf} \\
(a) & (b) & (c)
\end{tabular}
%\vspace{-2mm}
\caption{Results in the inhomogeneous Gaussian target.} 
\label{fig:neal}
\end{figure}

\begin{figure}%[!htb]
\centering
\begin{tabular}{cccc}
\includegraphics[scale=0.19]
{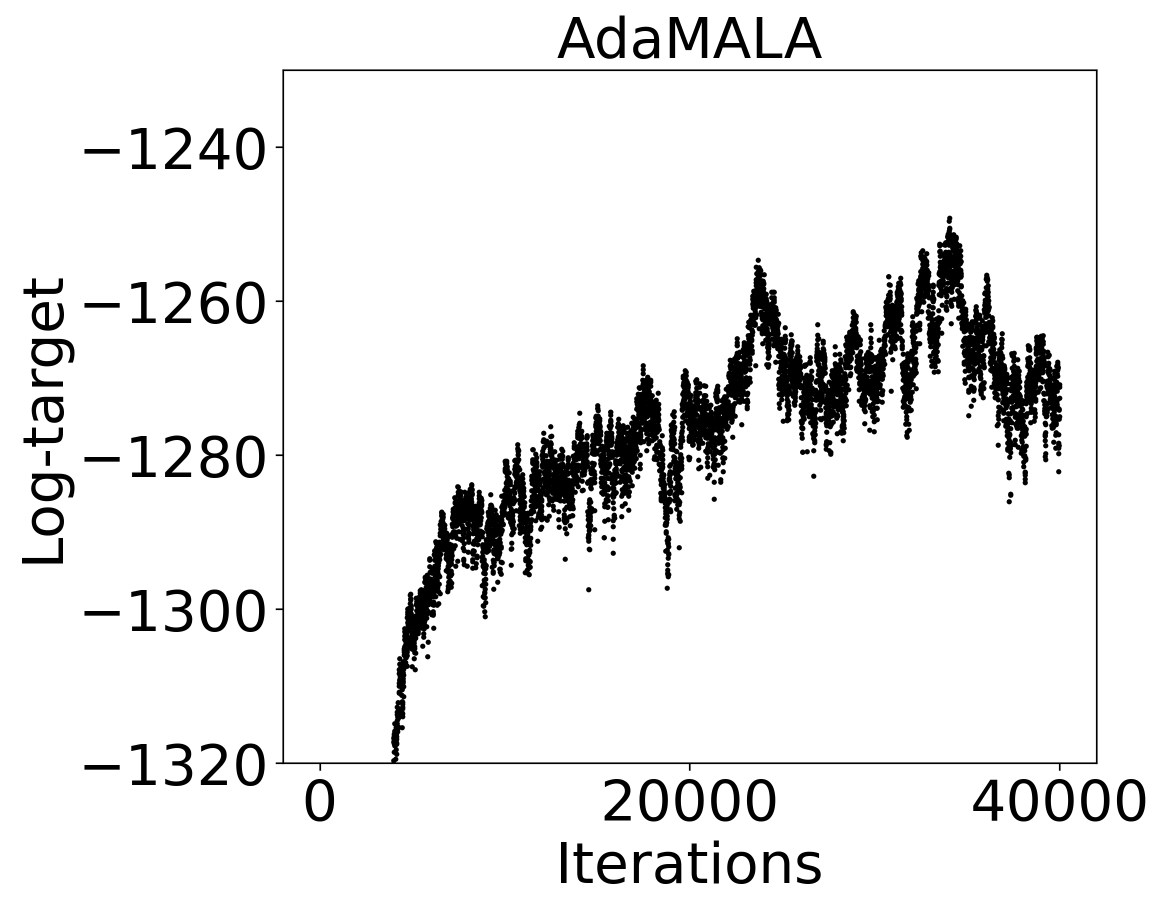} &
\includegraphics[scale=0.19]
{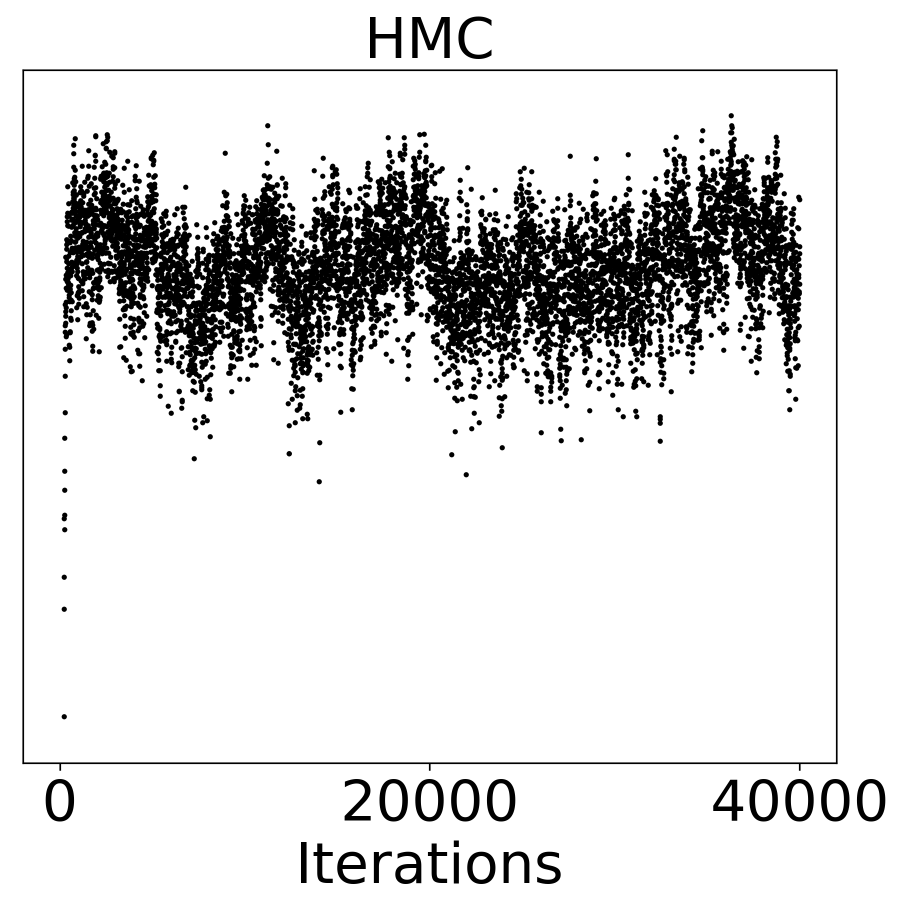} &
\includegraphics[scale=0.19]
{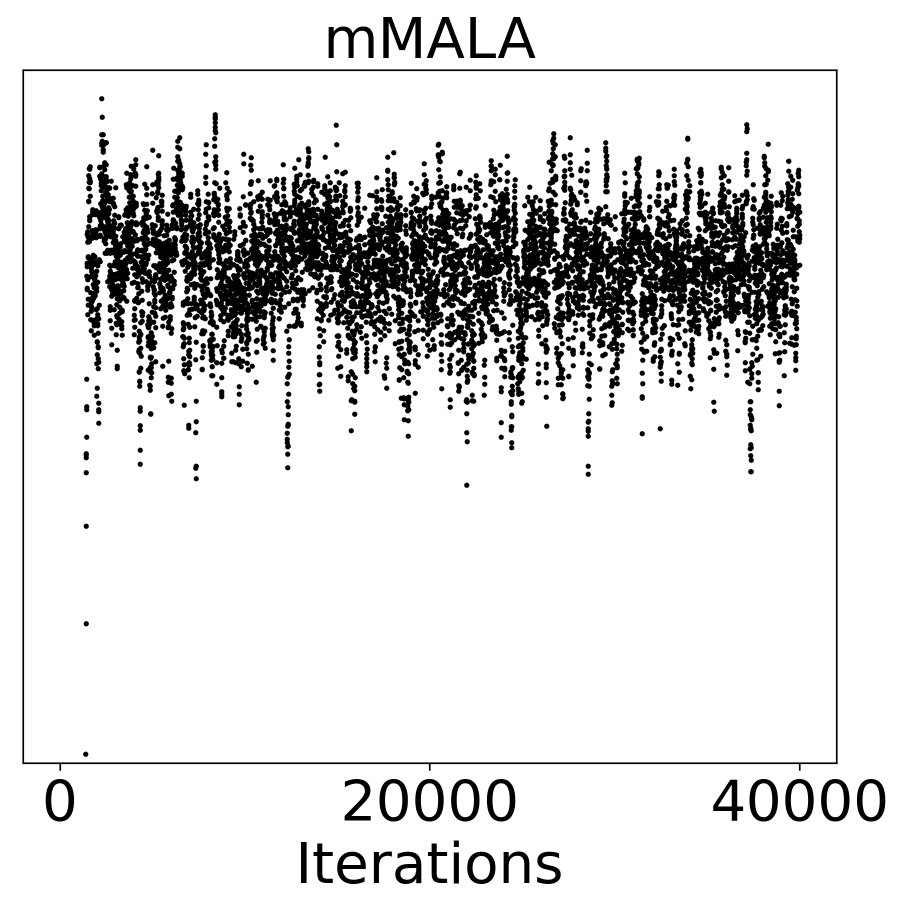} &
\includegraphics[scale=0.19]
{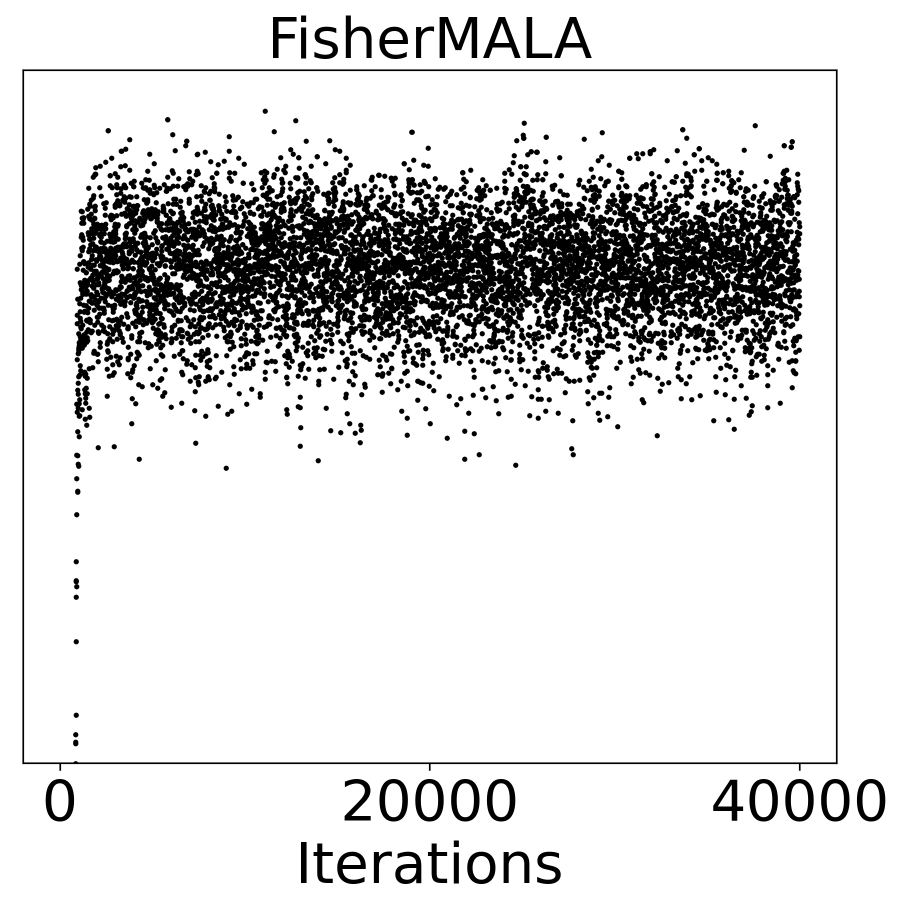}
\end{tabular}
%\vspace{-2mm}
\caption{The evolution of  the log-target across iterations for the best four algorithms in Caravan dataset.} 
\label{fig:caravan}
\end{figure}

% \noindent  {Multivariate Gaussian targets\label{sec:gaussian}}
\paragraph{Inhomogeneous  Gaussian target.}  In the last example we follow \cite{neal2010, rosenthal2011optimal} and 
we consider a Gaussian target  with diagonal covariance matrix 
$\Sigma = \text{diag}(\sigma^2_1,\ldots, \sigma_{100}^2)$ where the standard deviation values $\sigma_i$ 
take values in the grid $\{0.01,0.02, \dots,1\}$.  This target is challenging because
the different scaling across dimensions means that samplers with a single step size, i.e.\ without preconditioning, will adapt to the smallest dimension $x_1$ of the state 
while the chain at the higher dimensions, such as $x_{100}$, will be moving slowly exhibiting high autocorrelation. 
Note that FisherMALA and AdaMALA run  without knowing that the optimal preconditioner 
is a diagonal matrix, i.e.\  they learn a  full covariance matrix.    
Figure \ref{fig:neal}a shows the ESS scores for all 100 dimensions of $x$  for four samplers (except mMALA which has the same performance with FisherMALA), where  
we can observe that only FisherMALA is able to achieve high ESS uniformly well across all dimensions.  In contrast, 
MALA and HMC that use a single step size cannot achieve high sampling efficiency and their ESS for dimensions close to $x_{100}$ drops significantly. The same holds for
AdaMALA due to its inability to learn fast the preconditioner, as shown by the Frobenius norm values in Figure \ref{fig:neal}c and the estimated standard deviations 
in Figure \ref{fig:neal}b.  AdaMALA can eventually get very close to the optimal precondtioner but it requires hundred of thousands of adaptive steps, %(see Figure ??  in the 
while FisherMALA learns it with only few thousand steps.

%\comm{
\begin{table}
  \caption{ESS scores are averages after 
repeating the simulations $10$ times under different random initializations.}
  \label{table:large_table}
  \centering
  \begin{small}
  \begin{tabular}{llll}
    \toprule
    & Max ESS & Median ESS & Min ESS \\
\midrule
\emph{GP target} ($d=100$) &  & & \\
MALA & $15.235 \pm 4.246$ & $6.326 \pm 1.934$ & $3.619 \pm 0.956$ \\
AdaMALA & $845.100 \pm 80.472$ & $662.978 \pm 81.127$ & $552.377 \pm 74.441$ \\
HMC & $17.625 \pm 6.706$ & $6.680 \pm 2.205$ & $4.315 \pm 0.889$ \\
%FisherMALA2 & $1802.141 \pm 142.784$ & $1583.570 \pm 109.241$ & $1226.303 \pm 244.752$ \\
mMALA & $2109.441 \pm 101.553$ & $2007.640 \pm 104.867$ & $1841.978 \pm 114.266$ \\
FisherMALA & $2096.259 \pm 94.751$ & $1923.753 \pm 95.820$ & $1784.962 \pm 104.440$ \\
%optimal_mala & $2109.441 \pm 101.553$ & $2007.640 \pm 104.867$ & $1841.978 \pm 114.266$ \\
\midrule
\emph{Pima Indian} ($d=7$) & & & \\
MALA & $106.668 \pm 29.601$ & $14.723 \pm 3.821$ & $4.061 \pm 1.587$ \\
AdaMALA & $211.948 \pm 133.363$ & $52.277 \pm 26.566$ & $6.401 \pm 3.344$ \\
HMC & $1624.773 \pm 544.777$ & $337.100 \pm 212.158$ & $6.052 \pm 2.062$ \\
%FisherMALA2 & $6048.419 \pm 650.262$ & $2618.271 \pm 889.425$ & $788.687 \pm 388.978$ \\
mMALA & $6086.948 \pm 117.241$ & $5690.967 \pm 118.401$ & $5297.835 \pm 160.084$ \\
FisherMALA & $6437.419 \pm 207.548$ & $5981.960 \pm 156.072$ & $5628.541 \pm 168.425$ \\ 
\midrule
\emph{Caravan} ($d=87$)  & & & \\
MALA & $27.247 \pm 7.554$ & $5.890 \pm 0.398$ & $2.906 \pm 0.150$ \\
AdaMALA & $41.522 \pm 9.343$ & $7.144 \pm 0.663$ & $3.144 \pm 0.135$ \\
HMC & $787.901 \pm 173.863$ & $37.303 \pm 6.808$ & $4.238 \pm 0.532$ \\
%FisherMALA2 & $1930.109 \pm 208.848$ & $1107.987 \pm 83.439$ & $87.456 \pm 90.858$ \\
mMALA & $179.899 \pm 61.502$ & $121.867 \pm 41.801$ & $51.414 \pm 24.995$ \\
FisherMALA & $2257.737 \pm 45.289$ & $1920.903 \pm 55.821$ & ${\bf 498.016} \pm 96.692$ \\
\midrule
\emph{MNIST} ($d=785$) & & & \\ 
MALA & $34.074 \pm 4.977$ & $7.589 \pm 0.149$ & $2.944 \pm 0.066$ \\
AdaMALA & $62.301 \pm 9.203$ & $8.188 \pm 0.399$ & $2.985 \pm 0.089$ \\
HMC & $889.386 \pm 118.050$ & $303.345 \pm 10.976$ & $114.439 \pm 20.965$ \\
%FisherMALA2 & $301.055 \pm 37.597$ & $13.819 \pm 1.127$ & $3.176 \pm 0.113$ \\
mMALA & $51.589 \pm 3.447$ & $20.222 \pm 1.259$ & $5.240 \pm 0.492$ \\
FisherMALA & $1053.455 \pm 35.680$ & $811.522 \pm 19.165$ & ${\bf 439.580} \pm 52.800$ \\
  \bottomrule
  \end{tabular}
 \end{small} 
\end{table}

%\vspace{-2.5mm} 
 
\subsection{Bayesian logistic regression}
 
% \vspace{-2mm}
 
%\noindent {\bf Bayesian logistic regression.} 
We consider Bayesian logistic regression % target posterior 
distributions of the form $\pi(\theta|Y,Z)  \propto p(Y|\theta,Z) p(\theta)$ with data $(Y,Z) = \{y_i, z_i\}_{i=1}^m$, where
$z_i \in \Real^d$ is the input vector and $y_i$ the binary label. The likelihood is
$p(Y |\theta, Z) = \prod_{i=1}^m \sigma(\theta^\top z_i)^{y_i} (1 - \sigma(\theta^\top z_i))^{1-y_i}$, where %$\sigma(z_i) = 1/(1 + \exp(-\theta^\top z_i ))$, 
 $\theta \in \Real^d$ are the random parameters assigned the prior $p(\theta) = \mathcal{N}(\theta| 0, I_d)$. We consider six binary classification datasets (Australian Credit, Heart, Pima Indian, Ripley, German Credit and Caravan) with a number of data ranging from $n=250$ to $n=5822$ and
 dimensionality of the $\theta$ ranging from $3$ to $87$. We also consider a much higher $785$-dimensional example on MNIST for classifying the digits $5$ and
$6$, that has $11339$ training examples.  To make the inference problems more challenging, in the first six examples we do not standardize the inputs $z_i$ which creates 
very anisotropic posteriors over $\theta$. For the MNIST data, which initially are grey-scale images in [0, 255],  we simply divide by the maximum pixel value, i.e.\ 255, to bring the images in $[0,1]$.    
In Table \ref{table:large_table} we report the ESS for the low $7$-dimensional  Pima Indians dataset,  the medium $87$-dimensional Caravan dataset  and the higher $785$-dimensional 
MNIST dataset, while the results for the remaining datasets are shown in Appendix \ref{app:additionalresults}.    
%From the table it is  clear that FisherMALA has significantly better performance than all other samplers. 
Further,  Figure \ref{fig:caravan} shows the evolution of the unnormalized log target density $\log \{p(Y|\theta,Z) p(\theta) \}$ for the best four samplers in Caravan dataset which visualizes chain autocorrelation.  
%Appendix \ref{app:additionalresults} gives the corresponding tables and figures for all remaining datasets.   
 From all these results we can conclude that FisherMALA is  better than all other samplers, and remarkably it outperforms significantly the position-dependent mMALA, especially 
  in the high dimensional Caravan  and MNIST datasets.

\comm{
%\comm{
\begin{table}
  \caption{ESS Gaussian dataset.}
  \label{table:gaussian}
  \centering
  \begin{tabular}{llll}
    \toprule
\input{figures/gaussian.txt}
  \bottomrule
  \end{tabular}
\end{table}

\begin{table}
  \caption{Caravan dataset. All results  are averages after 
repeating the simulations $10$ times under different random initialisations. } 
  \label{table:caravan}
  \centering
  \begin{tabular}{llll}
    \toprule
\input{figures/caravan.txt}
  \bottomrule
  \end{tabular}
\end{table}

\begin{table}
  \caption{Pima dataset.}
  \label{table:pima}
  \centering
  \begin{tabular}{llll}
    \toprule
\input{figures/pima.txt}
  \bottomrule
  \end{tabular}
\end{table}

\begin{table}
  \caption{Mnist dataset.}
  \label{table:mnist}
  \centering
  \begin{tabular}{llll}
    \toprule
\input{figures/mnist.txt}
  \bottomrule
  \end{tabular}
\end{table}
}

%\begin{figure}%[!htb]
%\centering
%\begin{tabular}{cccc}
%\includegraphics[scale=0.19]
%{figures/pima_mala_log_target.pdf} &
%\includegraphics[scale=0.19]
%{figures/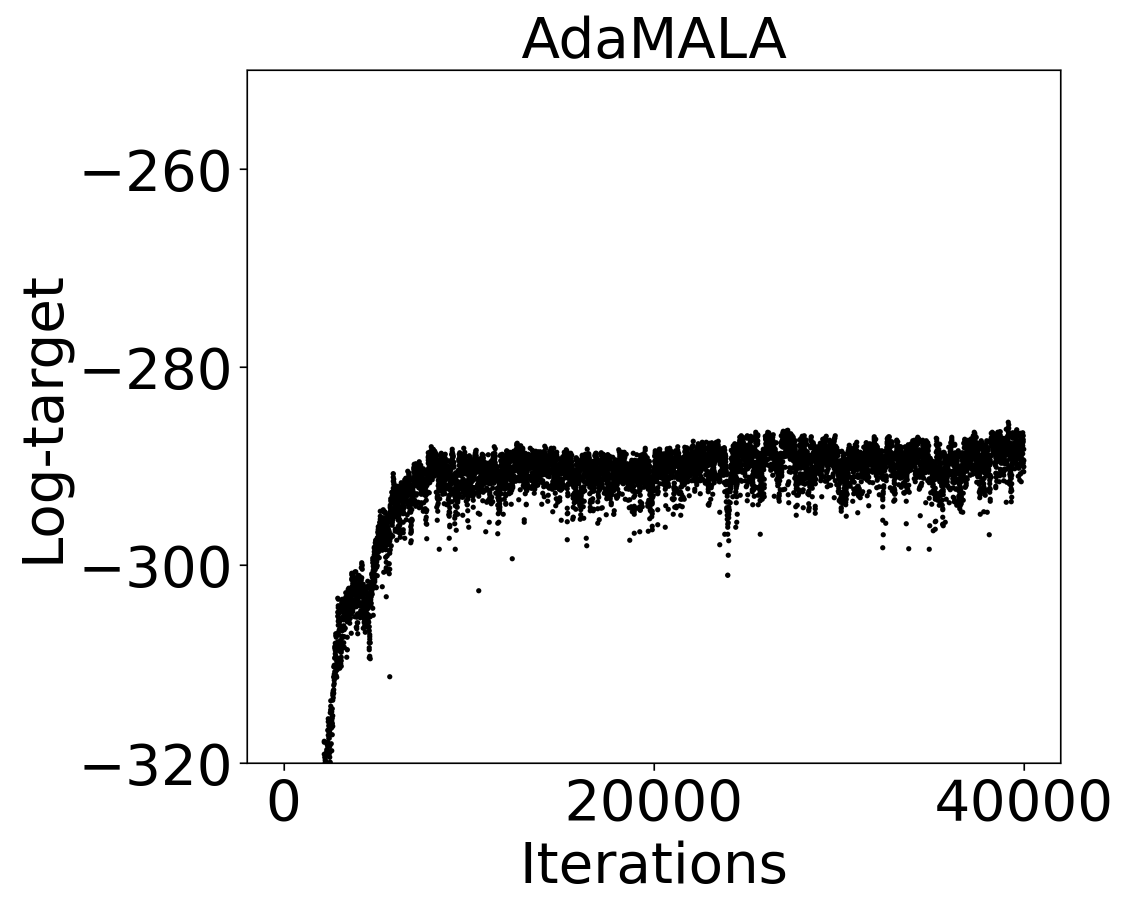} &
%\includegraphics[scale=0.19]
%{figures/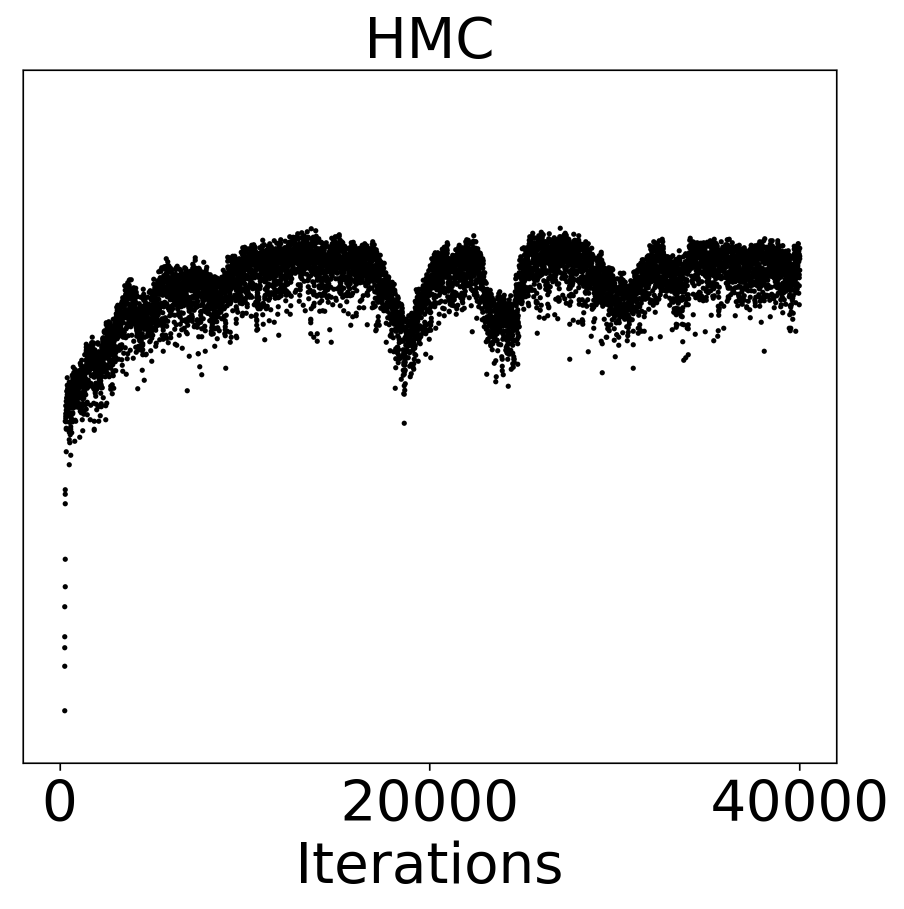} &
%\includegraphics[scale=0.19]
%{figures/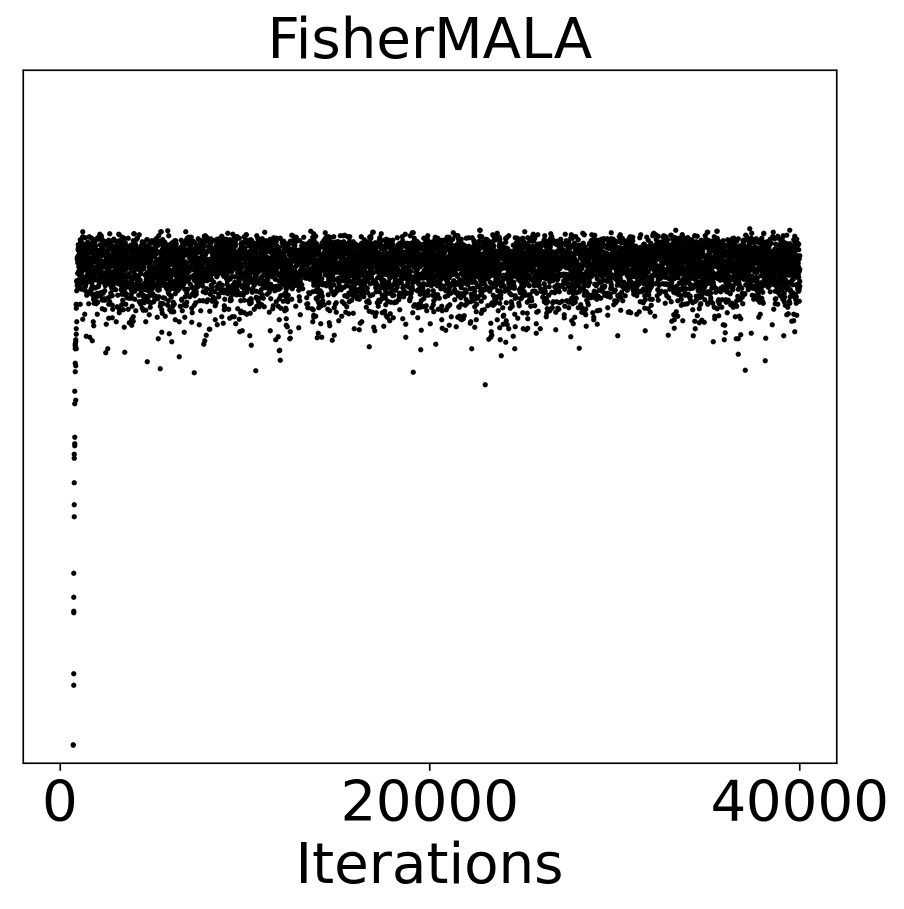}
%\end{tabular}
%\caption{The evolution of  the log-target across iterations for all algorithms in Pima dataset.} 
%\label{fig:german}
%\end{figure}

% !TEX root = main_newdiv.tex

%\vspace{-2mm}
 
\section{Related Work}
\label{sec:related}

%\vspace{-2mm}

%\cite{GirolamiCalderhead11,ZhangS11,Qi_hessian-basedmarkov, stochasticNewtonmcmc, gewekeTanizaki}

There exist works that use some form of global preconditioning in gradient-based %or MALA-like
samplers for specialized targets such as latent Gaussian models \citep{Cotter2013mcmc, titsiasPapaspiliopoulos16}, 
which make use of the tractable Gaussian prior. % to form a preconditioning. 
Our method differs, since it is more agnostic to the target %(it only assumes the log target is differentiable) 
and learns a preconditioning from the history of gradients, analogously to
how traditional adaptive MCMC %methods
 learns from % from history of
  states \citep{haario2001adaptive, roberts2009examples}.

Several research works use position-dependent preconditioning $A(x)$ within gradient-based samplers, 
such as MALA.  This is for example the idea behind Riemannian manifold MALA \citep{GirolamiCalderhead11} and extensions  \citep{Xifara14}.  
Similar to Riemannian manifold methods there are
approaches inspired by second order 
optimization that use the Hessian matrix, or some estimate of the Hessian, for sampling in  a MALA-style manner  
\citep{Qi_hessian-basedmarkov, gewekeTanizaki, stochasticNewtonmcmc}. 
Recently, such samplers and their time-continuous diffusion limits have been theoretically analyzed by obtaining convergence  guarantees
\citep{Chewietal2020, Li_mirrorLangevin2022}. 
All such methods form a position-dependent preconditioning %either by using the Hessian or some standard notion of Fisher information matrix  (which in some models becomes similar to the Hessian; see examples in \cite{GirolamiCalderhead11})
 and not  the preconditioning we use in this paper, e.g.\ note that  $\mathcal{I}^{-1}$ we consider here requires an expectation under the target 
and thus it is always a global preconditioner rather than a position-dependent one.  Another difference is that our method has quadratic cost, % and it is computationally very efficient,  
while position-dependent preconditioning methods have cubic cost and they require computationally demanding quantities like the Hessian matrix. 
%and they have an additional cubic cost per iteration for decomposing the position-dependent matrix. 
Therefore,  in order for these methods to run faster some approximation may be needed, e.g.\ low rank \citep{stochasticNewtonmcmc}  or quasi-Newton type
\citep{ZhangS11, ensemblepreconditioning}.  Furthermore,  the Bayesian logistic results in Table \ref{table:large_table} (see also Figure \ref{fig:caravan}) show
that the proposed FisherMALA method significantly outperforms manifold MALA \citep{GirolamiCalderhead11}  
in Caravan and MNIST examples, despite the fact that manifold MALA preconditions with the exact negative inverse Hessian matrix of the log target.  This could suggest that 
position-dependent preconditioning may be less effective in certain type of high-dimensional and log-concave problems.

Finally, there is recent work for learning flexible MCMC proposals by using neural  
networks \citep{song2017nice, levy2018generalizing, habib2018auxiliary, Salimans2015} 
and by adapting parameters using differentiable objectives  \citep{levy2018generalizing, neklyudov2018metropolis, TitsiasDellaportas19, Dharamshietal23}. 
Our method differs, since it does not use objective functions (which have extra cost because they require an optimization to run
in parallel with the MCMC chain), but  instead it adapts similarly to traditionally MCMC methods by accumulating information from the
observed history of the chain.

\section{Conclusion}
\label{sec:conclusion}

%\vspace{-2.5mm}

We derived an optimal preconditioning for the Langevin diffusion by optimizing the expected squared jumped distance, 
%This defines the preconditioner as an inverse Fisher-like information matrix which involves averaging over the target. 
%This matrix connects also with the well-known optimal scaling result for the RWM algorithm \citep{roberts1997weak, roberts2001optimal}. 
and subsequently we developed an adaptive MCMC algorithm %, called Fisher MALA, 
 that approximates the optimal preconditioning 
by applying an efficient quadratic cost recursion. Some possible topics for future research are: Firstly, it would be useful to 
investigate whether the %Rao-Blackwellized 
score function differences that we use as the adaptation signal 
 introduce any bias in the estimation of the inverse Fisher matrix. Secondly, it would be interesting to extend 
 our method to learn the preconditioning for other gradient-based samplers such as Hamiltonian Monte Carlo (HMC),  
 where such a matrix there is referred to as the inverse mass matrix.  For HMC this is more complex since both the mass matrix and its inverse 
 are needed in the iteration. 
 Finally, it could be interesting to investigate adaptive schemes of the inverse Fisher matrix by using multiple parallel and interacting chains, 
 similarly to ensemble covariance matrix estimation for Langevin diffusions \cite{Garbuno2020}.

 % and therefore the recursion over the current  square root $R_n$ (which is a fully dense matrix) is not appropriate. 
%A potential direction to preserve $O(d^2)$ complexity in HMC would be to develop a recursion for the Cholesky factor. 

\begin{ack}
We are grateful to the reviewers for their comments. 
Also, we wish to thank Arnaud Doucet, Sam Power, Francisco Ruiz, Jiaxin Shi, Yee Whye Teh, Siran Liu, Kazuki Osawa and James Martens 
for useful discussions.      
\end{ack}

{
%\small
\bibliographystyle{plain}
\bibliography{my_bibliography}
}
\clearpage

\appendix 

\section{Proofs
\label{app:proofs}
}

\subsection{Proof of Proposition \ref{prop:accratio}
\label{app:propone}
} 

The difference between the logarithm of the backward and forward proposals of preconditioned MALA, i.e. the quantity $\log q(x_n | y_n) - \log q(y_n | x_n )$ 
can be written (ignoring the normalizing constants of the Gaussians which trivially cancel out) as, 
 \begin{align}
& - \frac{1}{2 \sigma^2} \left( 
x_n  - y_n - \frac{\sigma^2}{2} A \nabla \log \pi(y_n)  
 \right)^\top  A^{-1}  \left( 
x_n  - y_n - \frac{\sigma^2}{2} A \nabla \log \pi(y_n)  
 \right) \nonumber \\
 & + \frac{1}{2 \sigma^2} \left( 
y_n  - x_n - \frac{\sigma^2}{2} A \nabla \log \pi(x_n)  
 \right)^\top  A^{-1}  \left( 
y_n  - x_n - \frac{\sigma^2}{2} A \nabla \log \pi(x_n)  
 \right). 
 \end{align} 
Observe that the term $\frac{1}{2 \sigma^2} (x_n - y_n)^\top A^{-1} (x_n - y_n)$  cancels out since it  appears twice 
with opposite sign. The remaining terms after some simple algebra simplify as 
 \begin{align}
& \textcolor{blue}{\frac{1}{2} \! \left( 
x_n  - y_n - \frac{\sigma^2}{4} A \nabla \log \pi(y_n)  
 \right)^\top \!  \! \nabla \log \pi(y_n)}  - \textcolor{red}{\frac{1}{2} \! \left( 
y_n  - x_n - \frac{\sigma^2}{4} A \nabla \log \pi(x_n)  
 \right)^\top \! \! \nabla \log \pi(x_n)}  \nonumber \\ 
 & =  \textcolor{blue}{h(x_n, y_n)} - \textcolor{red}{h(y_n, x_n)}
 \end{align} 
 which completes the proof. 
 
\subsection{
Proof of Proposition
\ref{prop:langevincovar}
\label{app:proptwo}
}

We assume $x_t \sim \pi(x_t)$. Then by taking the expectation of the r.h.s.\ of Eq.\ \eqref{eq:discretizesLangevin}  (where the expectation is taken w.r.t.\ $x_t$ and the 
independent Brownian motion increment $B_{t+\delta} - B_t \sim \mathcal{N}(0, \delta I_d)$) and noting that  $\Exp_{\pi(x_t)} [\nabla \log \pi(x_t)] = 0$ and $\Exp[B_{t+\delta} - B_t] =0$
we conclude that $\Exp[x_{t+\delta} - x_t] = 0$.  Then the covariance is 
\begin{align}
& \Exp[(x_{t + \delta} - x_t) (x_{t + \delta} - x_t )^\top] = \nonumber \\
& =  \Exp\left[ (\frac{\delta}{2} A \nabla \log \pi(x_t)  + \sqrt{A} (B_{t+\delta} - B_t)) (\frac{\delta}{2} A \nabla \log \pi(x_t) + \sqrt{A}  (B_{t+\delta} - B_t))^\top \right]  \nonumber \\
& =   \frac{\delta^2}{4} A \Exp_{\pi(x_t)} [\nabla \log \pi(x_t) \nabla \log \pi(x_t)^\top] A   + \delta A  \nonumber \\
& =   \frac{\delta^2}{4} A \mathcal{I}A  + \delta A,  \nonumber 
\end{align} 
where we used that $\Exp[ (B_{t+\delta} - B_t) (B_{t+\delta} - B_t)^\top] = \delta I_d$ , $\sqrt{A} \sqrt{A}^\top = A$ and that the cross covariance terms are zero.  

\subsection{Proof of Proposition \ref{prop:optimalprecond}
\label{app:propthree}
}

The expected squared jumped distance is written as 
$$
J(\delta, A) = \text{tr} \left(\frac{\delta^2}{4} A \mathcal{I} A +  \delta A \right)
=\frac{\delta^2}{4} 
\text{tr}(A \mathcal{I} A) + \delta c, 
$$
where we used the constraint 
$\text{tr}(A) = c$. Since $c$ is just a constant to minimize $J(\delta, A)$ is the same as minimizing $\text{tr}(A \mathcal{I} A)$, a quadratic convex loss since 
$\mathcal{I}$ is positive definite, under the constraint that $A$ is symmetric positive definite matrix and $\text{tr}(A)=c$.  To deal with the equality constraint  we consider the Lagrangian 
$$
\text{tr}(A \mathcal{I} A) - \lambda ( \text{tr}(A) - c). 
$$ 
By taking derivatives wrt the matrix $A$ (using the matrix derivative identities  $\frac{\partial}{\partial X} \text{tr}(X B X) = X^\top B^\top + B^\top X^\top$ and 
 $\frac{\partial}{\partial X} \text{tr}(X) = I_d$ for arbitrary $d \times d$ square matrices $X, B$) and setting to zero we see that $A$ must satisfy the linear equation
$$
A^\top \mathcal{I} + \mathcal{I} A^\top =  \lambda I_d,
$$
where we used that $\mathcal{I}$ is a symmetric matrix.  This is a set of linear equations
% of the form $\mathcal{I} A[:,i] = 2 \lambda {\bf e}_i$ 
%where $A[:,i]$ is the $i$-th column of $A$, ${\bf e}_i$ is the $i$-th standard basis unit vector and $i=1,\ldots,d$, 
and given that each eigenvalue $\mu_i$ of $\mathcal{I}$ 
satisfies $0 < \mu_i < \infty$,  so that $\mathcal{I}$ is invertible,  there is an unique solution given by $A = (1/2) \lambda \mathcal{I}^{-1}$.  
The Lagrange multiplier $\lambda$  is chosen so that $\text{tr}(A)=c$ which leads to the optimal $A^*$ 
$$
A^* = \frac{c}{\sum_{i=1}^d \frac{1}{\mu_i}} \mathcal{I}^{-1}.  
$$
Note that $A^*$ turned out to be symmetric and  positive definite as  desired.  
For this $A^*$ the optimal loss value is $\text{tr}(A^* \mathcal{I} A^*) = \frac{c^2}{\sum_{i=1}  \frac{1}{\mu_i}}$, for which we 
further need to disambiguate whether this is the global minimum or maximum. We can do this by choosing a different matrix that satisfies the constraint $\text{tr}(
A)=c$ and  
compare its loss with the optimal loss  $\frac{c^2}{\sum_{i=1}^d  \frac{1}{\mu_i}}$ . For example, one such matrix is $A = \frac{c}{d} I_d$, which has loss value 
$\frac{c^2  (\sum_{i=1}^d \mu_i)}{d^2}$. Then by using the Cauchy-Schwarz inequality $d^2 = (\sum_{i=1}^d \frac{\sqrt{\mu_i}}{\sqrt{\mu_i}} )^2 \leq (\sum_{i=1}^d \mu_i ) (\sum_{i=1}^d \frac{1}{\mu}_i)$ 
we obtain $\frac{c^2  (\sum_{i=1}^d \mu_i)}{d^2} \geq  \frac{c^2  (\sum_{i=1}^d \mu_i)}{  (\sum_{i=1}^d \mu_i ) (\sum_{i=1}^d \frac{1}{\mu}_i)} = \frac{c^2}{\sum_{i=1}  \frac{1}{\mu_i}}$.
This shows that $A^*$ achieves the global minimum which completes the proof.

\subsection{Proof of Proposition \ref{prop:recursion}
\label{app:propfour}
}

We first state  and prove the following intermediate result.  
 \begin{lem} 
Suppose the positive definite matrix $I_d - z z^\top$ where $z \in \Real^d$ and $ z^\top z \leq 1$. Then, a square root matrix $R$, satisfying $R R^\top = A$, has the form $R = I_d - r z z^\top$ where
$r = \frac{1}{1 + \sqrt{1 - z^\top z}}$.
\label{lemmasqrt}
\end{lem}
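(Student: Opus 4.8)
The plan is to verify the ansatz directly by squaring and matching coefficients, thereby reducing the matrix identity to a single scalar quadratic in $r$. Writing $A := I_d - zz^\top$ for the matrix whose square root is sought, I would posit the symmetric candidate $R = I_d - r zz^\top$, so that $RR^\top = R^2$, and expand. The only algebraic fact needed is the rank-one idempotency relation $(zz^\top)(zz^\top) = (z^\top z)\, zz^\top$, which collapses the quadratic term into another multiple of $zz^\top$.

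Concretely, I would compute
\[
R^2 = I_d - 2r\, zz^\top + r^2 (z^\top z)\, zz^\top = I_d - \bigl(2r - r^2 z^\top z\bigr) zz^\top,
\]
and demand $R^2 = I_d - zz^\top$. Assuming $z \neq 0$ so that $zz^\top \neq 0$, matching the coefficient of $zz^\top$ reduces the problem to the scalar equation
\[
(z^\top z)\, r^2 - 2 r + 1 = 0 .
\]
Its roots are $r = \frac{1 \pm \sqrt{1 - z^\top z}}{z^\top z}$, both real precisely because of the hypothesis $z^\top z \le 1$. I would select the minus sign and rationalize by multiplying numerator and denominator by $1 + \sqrt{1 - z^\top z}$, which yields exactly $r = \frac{1}{1 + \sqrt{1 - z^\top z}}$, the claimed value.

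There is essentially no hard step here; the argument is a short verification, and the only points that merit a word of care are root selection and the degenerate cases. I would note that the hypothesis $z^\top z \le 1$ is exactly what makes the discriminant nonnegative, and I would dispatch $z = 0$ separately (then $A = I_d$ and $R = I_d$ trivially, with the value of $r$ immaterial) and treat $z^\top z = 1$ as the limiting case in which the quadratic degenerates to $(r-1)^2 = 0$, consistent with the formula giving $r = 1$. Finally, although the lemma only asks for \emph{some} square root, it is worth remarking that the selected root satisfies $r\, z^\top z = 1 - \sqrt{1 - z^\top z}$, so the eigenvalue of $R$ along $z$ equals $\sqrt{1 - z^\top z} \ge 0$ while all other eigenvalues are $1$; hence $R$ is symmetric positive semidefinite, i.e.\ the \emph{principal} square root of $A$. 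This is the natural and numerically convenient choice for the Kalman-filter-style recursion of Proposition \ref{prop:recursion}, in which this lemma is invoked.
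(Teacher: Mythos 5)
Your proof is correct and follows essentially the same route as the paper's: the same rank-one ansatz $R = I_d - r\,zz^\top$, the same scalar quadratic $r^2 z^\top z - 2r + 1 = 0$, and the same choice of root rewritten as $r = \frac{1}{1+\sqrt{1-z^\top z}}$. Your added care with the degenerate cases ($z=0$, $z^\top z = 1$) and the eigenvalue check showing the chosen root gives the principal (positive semidefinite) square root only makes explicit what the paper asserts tersely.
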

\begin{proof}
We hypothesize that $R$ has the form $I_d - r z z^\top$ for some scalar $r$. Then since  $R R^\top = I_d - z z^\top$ we see that $r$ must satisfy the quadratic equation 
$r^2 z^\top z - 2 r +1=0$, which has two real solutions $\frac{1 \pm \sqrt{1 - z^\top z} }{z^\top z}$  and we will use  $\frac{1 - \sqrt{1 - z^\top z} }{z^\top z} \leq 1$ which ensures $R$ is positive definite. 
This solution can also be written as $r = \frac{1}{1 + \sqrt{1 - z^\top z}}$.
\end{proof} 

To prove  the proposition we need  to find a square root matrix $R_1$ of 
$A_1 =  \frac{1}{\lambda}
\left(I_d - \frac{s_1 s_1^\top}{\lambda + s_1^\top s_1} \right)$  where we clearly need to specify a square root matrix for  
$I_d - \frac{s_1 s_1^\top}{\lambda + s_1^\top s_1}$. We observe that by setting $z = \frac{s_1}{\sqrt{\lambda + s_1^\top s_1}}$ Lemma 1 is applicable 
so that the square root matrix is 
$$ 
R_1 = \frac{1}{\sqrt{\lambda}}
\left(I_d - r_1 \frac{s_1 s_1^\top}{\lambda + s_1^\top s_1} \right),  \ \ r_1 = \frac{1}{1 + \sqrt{\frac{\lambda}{\lambda + s_1^\top s_1}}}. 
$$
Similarly by applying again Lemma 1 we can find $R_n$ for any $n>1$.  

The computation of $R_n$ costs $O(d^2)$ per iteration.  Firstly, the vector $\phi_n = R_{n-1}^\top s_n$ is computed which is a matrix-vector multiplication. The next step is to compute the scalar $r_n$ in $O(d)$ (involving the dot product $\phi_n^\top \phi_n$) and then the scaled vector $\phi_n' = \frac{r_n}{1 + \phi_n^\top \phi_n} \phi_n$ also an $O(d)$ operation. Then we need two additional $O(d^2)$ multiplication operations to obtain firstly the vector $t_n = R_{n-1} \phi_n$ and secondly the outer vector product $t_n (\phi_n')^\top$. Finally, the update is $R_n = R_{n-1} - t_n (\phi_n')^\top$ which requires a final $O(d^2)$ addition operation of two matrices which is typically cheaper than $O(d^2)$ multiplication. Therefore,
overall the cost is $O(d^2)$.

\section{Generalizing the recursion over arbitrary learning rate sequences
\label{app:generallearningrates}
}

Suppose we have a sequence of learning rates  $\gamma_1,\gamma_2,\ldots,$. 
Then a stochastic approximation of
the Fisher matrix $\fisher$ takes the form 
$$
\fisher_n = 
\fisher_{n-1} + \gamma_n (s_n s_n^\top - \fisher_{n-1})
= (1 - \gamma_n) \fisher_{n-1} 
+ \gamma_n s_n s_n^\top, 
$$
where the sequence is initialized at $\fisher_1 = s_1 s_1^\top + \lambda I$. The inverse of the empirical Fisher is written  as
$$
\invfisher_n = \left( 
(1 - \gamma_n) \fisher_{n-1} 
+ \gamma_n s_n s_n^\top 
\right)^{-1} 
= \frac{1}{1 - \gamma_n}
\left( A_{n-1} 
- 
\frac{A_{n-1} s_n s_n^\top A_{n-1} }
{\frac{1-\gamma_n}{\gamma_n} +  s_n^\top \fisher_{n-1}^{-1} s_n}
\right),
$$
which is initialized at  
$A_1 = \frac{1}{\lambda} \left(I_d  - \frac{s_1 s_1^\top}{\lambda + s_1^\top s_1}
\right)$ for which the square root $R_1$ is the same as for the standard learning rate $\gamma_n  =1 / n$. The square root recursion for $n > 1$ takes the form 
$$
\sqrtmat_n 
= \frac{1}{\sqrt{1 - \gamma_n}} 
\left(R_{n-1} 
- 
r_n \frac{(R_{n-1} \phi_n) \phi_n^\top}
{(1-\gamma_n) / \gamma_n +  \phi_n^\top \phi_n} 
\right), \ \phi_n = \sqrtmat_n^\top s_n, \  r_n = \frac{1}{1 + \sqrt{\frac{(1-\gamma_n)/\gamma_n}{(1-\gamma_n)/\gamma_n + \phi_n^\top \phi_n}}}.
$$

\section{FisherMALA with paired mean and covariance stochastic approximation
\label{app:FisherMALA2}
}
 
Here, we derive a recursion for the empirical 
Fisher that centers the score function vectors using 
the standard procedure by recursively estimating also the mean.   
We start from the following consistent estimator of the 
inverse Fisher: 
 $$
A_n = \left( 
\frac{1}{n-1} 
\sum_{i=1}^n (s_i - \bar{s}_n)  (s_i - \bar{s}_n)^\top
+ \frac{\lambda}{n-1} I_d
\right)^{-1}, 
$$
where $\bar{s}_n = \frac{1}{n} \sum_{i=1}^n s_i$.  This follows 
the recursion 
\begin{align}
A_n & = \left( \frac{n-2}{n-1} A_{n-1}^{-1 } + \frac{1}{n} 
\delta_n \delta_n^\top \right)^{-1} = \frac{n-1}{n-2} A_{n-1} - \frac{(n-1)^2}{(n-2)^2} 
\frac{A_{n-1} \delta_n \delta_n^\top A_{n-1}}{n + \frac{n-1}{n-2}\delta_n^\top A_{n-1} \delta_n} \nonumber \\
%& = \frac{n-1}{n-2}
%\left(A_{n-1} -  
%\frac{A_{n-1} \delta_n \delta_n^\top A_{n-1}}{\frac{n(n-2)}{n-1} + \delta_n^\top A_{n-1} \delta_n} \right) \nonumber \\
& = \frac{1}{\lambda_{n-1}}
\left(A_{n-1} -  
\frac{A_{n-1} \delta_n \delta_n^\top A_{n-1}}{n \lambda_{n-1} + \delta_n^\top A_{n-1} \delta_n} \right). \nonumber 
\end{align}
Here, $\delta_n = s_n - \bar{s}_{n-1} $ and we defined the sequence of scalars $\lambda_{n} = \frac{n-1}{n}$, for $n \geq 2$ while the starting point of this sequence $n=1$ we define it to be equal to the parameter parameter $\lambda$, i.e.\ $\lambda_1 = \lambda > 0$. 
The recursion starts at 
$A_2$ given by 
$$
A_2 = \left( 
\frac{1}{2} \delta_2 \delta_2^\top + \lambda_1 I 
\right)^{-1}
= \frac{1}{\lambda_1}
\left(I_d - \frac{\delta_2 \delta_2^\top}{2 \lambda_1 + \delta_2^\top \delta_2} \right),
$$
where $\delta_2 = s_2 - s_1$.  Along with the above we recursively estimate also 
 the mean vector (for $n \geq 1$):  $
\bar{s}_n = \frac{n-1}{n}
\bar{s}_{n-1} + \frac{1}{n} s_n$. 

%Consider the standard  covariance matrix estimate  $\frac{1}{n-1} 
%\sum_{i=1}^n (s_i - \bar{s}_n)  (s_i - \bar{s}_n)^\top$ that holds for  $n > 2$ it follows the recursion 
%$$
%\Sigma_n 
%= \frac{n-2}{n-1} 
%\Sigma_{n-1} + \frac{1}{n} 
%(s_n - \bar{s}_{n-1}  
%)(s_n - \bar{s}_{n-1})^\top
%$$
%where the recursion is initialized at 
%the covariance of two $n=2$ points given by 
%$$
%\Sigma_2 = \frac{1}{2} (s_2 - s_1)(s_2 - s_1)^\top
%$$
%Next we will denote the coefficient 
%$$
%\lambda_{n} = \frac{n-1}{n}
%$$

To express a recursion of  square root matrix, such that $A_n = R_n R_n^\top$ we first write 
\begin{align}
A_n & = \frac{1}{\lambda_{n-1}} R_{n-1}
\left(I_d -  
\frac{R_{n-1}^\top \delta_n \delta_n^\top R_{n-1}}{n \lambda_{n-1} + \delta_n^\top A_{n-1} \delta_n} \right) R_{n-1}^\top\nonumber \\
& = \frac{1}{\lambda_{n-1}} R_{n-1}
\left(I_d -  
\frac{\phi_n \phi_n^\top}{n \lambda_{n-1} + \phi_n^\top  \phi_n} \right) R_{n-1}^\top. \nonumber 
\end{align}
Then we can recognize the square 
root recursion as 
$$
R_n = \frac{1}{\sqrt{\lambda_{n-1}}}
R_{n-1} \left(I_d -  
r_n \frac{\phi_n \phi_n^\top}{n \lambda_{n-1} + \phi_n^\top  \phi_n} \right), \ \ r_n = \frac{1}{1 + \sqrt{\frac{n \lambda_{n-1}}{n\lambda_{n-1} + \phi_n^\top \phi_n}}},
$$
which is initialized at 
$
R_2 = \frac{1}{\sqrt{\lambda_1}}
\left(I_d -  
r_2 \frac{\delta_2 \delta_2^\top}{2 \lambda_1 + \delta_2^\top  \delta_2} \right), \ \ r_2 = \frac{1}{1 + \sqrt{\frac{2 \lambda_1}{2\lambda_1 + \delta_2^\top \delta_2}}}.
$

\section{Initialization of AdaMALA
\label{sec:initializeAdaMALA}
} 
 
 To initialize AdaMALA we first perform $n_0=500$ iterations with simple MALA where we adapt the step size parameter $\sigma^2$. Thus, this part of the initialization 
 is exactly the same used by FisherMALA. However, for AdaMALA we do an additional set of $n_0=500$ iterations where simple MALA still runs 
 and collects samples which are used to sequentially update the empirical covariance matrix $\Sigma_n$. The purpose of this second phase is to play the role of  "warm-up" and provide a reasonable 
 initialization for $\Sigma_n$.  After the second phase (so in total $1000$ iterations)  
 AdaMALA starts running having as a preconditioner $\Sigma_n$, which keeps adapted in every iteration until the last burn-in iteration.

\section{Additional results
\label{app:additionalresults}}

\subsection{The step size $\sigma^2$ is maximized when preconditioning becomes effective 
\label{app:step_size}
}

 To experimentally backup 
our claims in Section \ref{sec:optimalA} that the discretization step size, denoted there by $\delta$ or $\sigma^2$, 
gets large when the preconditioner is selected efficiently,  in Figure  \ref{fig:step_size} we report the final learned  values (after  burn-in adaptation iterations) of $\sigma^2$  for MALA, AdaMALA and FisherMALA. For all these three algorithms the values of $\sigma^2$ are comparable because  all use an overall preconditioning 
of the form $\frac{\sigma^2}{\frac{1}{d}\text{tr}(A)} A$  and only the matrix $A$  is changing among them. For example, simple MALA sets this matrix to 
$A = I_d$, while AdaMALA and FisherMALA use their own procedures to learn more complex matrices. Figure \ref{fig:step_size}  shows the estimated $\sigma^2$, for 
the four datasets reported in the main text in Table  \ref{table:large_table}. This shows that FisherMALA achieves significantly larger $\sigma^2$ 
in all cases, which can be orders of magnitude larger than the two other algorithms (note the $y$ axis in Figure  \ref{fig:step_size} is in log scale).

 \begin{figure*}[!htb]
\centering
\begin{tabular}{cc}
GP target & Pima Indian  \\
\includegraphics[scale=0.3]
{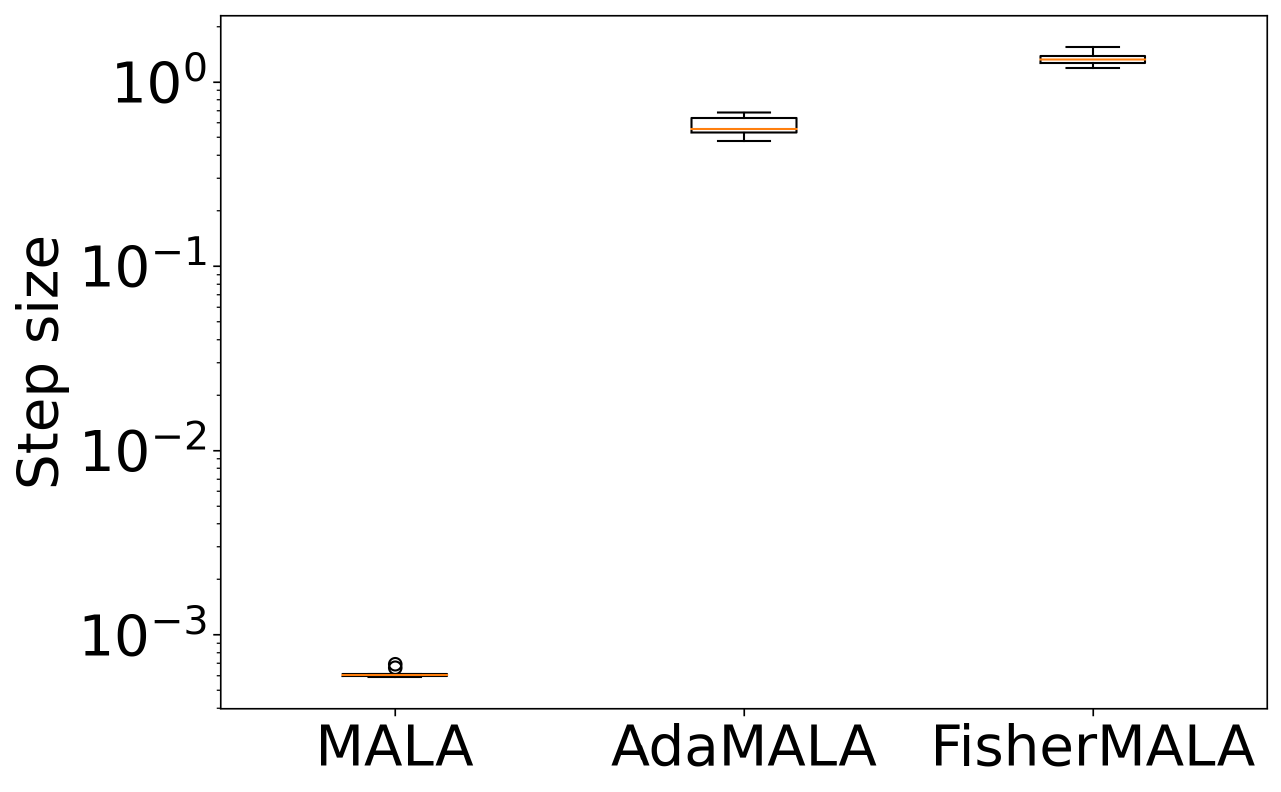} &
\includegraphics[scale=0.3] 
{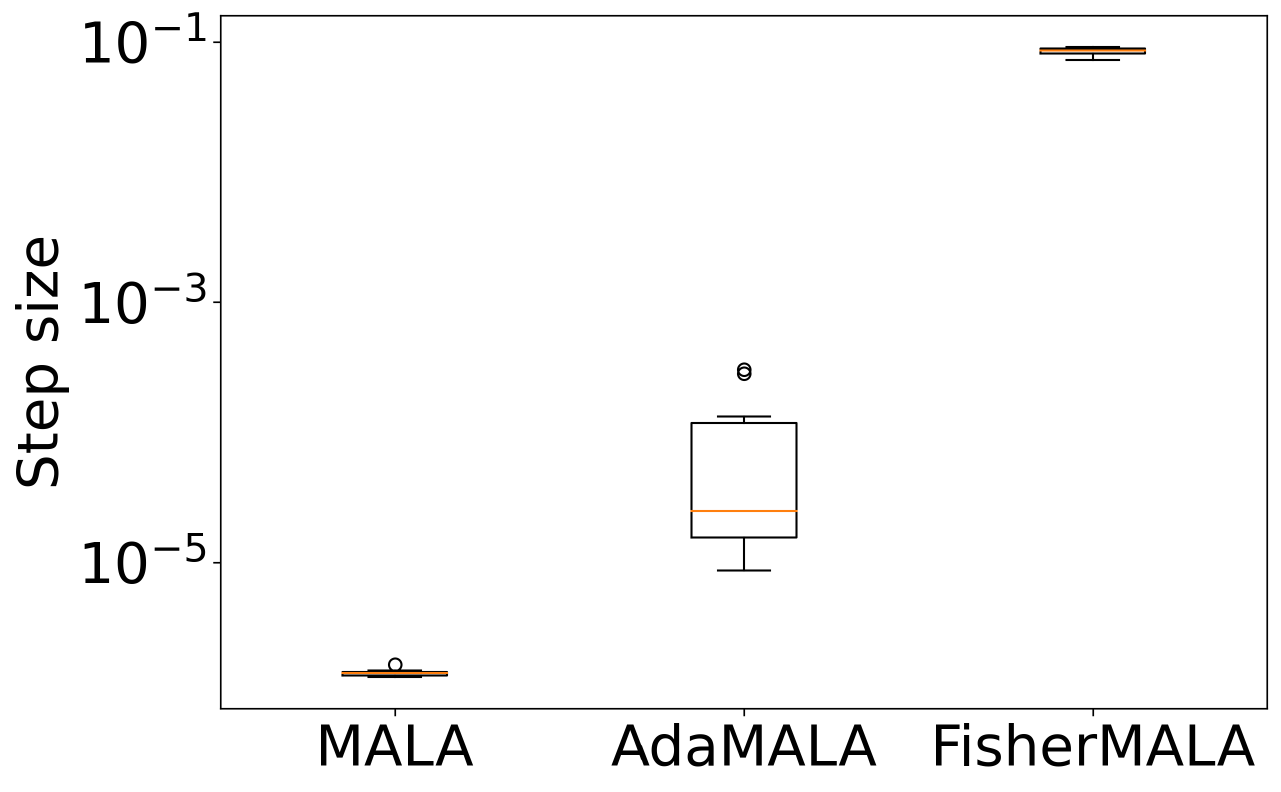} \\
Caravan & MNIST \\
\includegraphics[scale=0.3]
{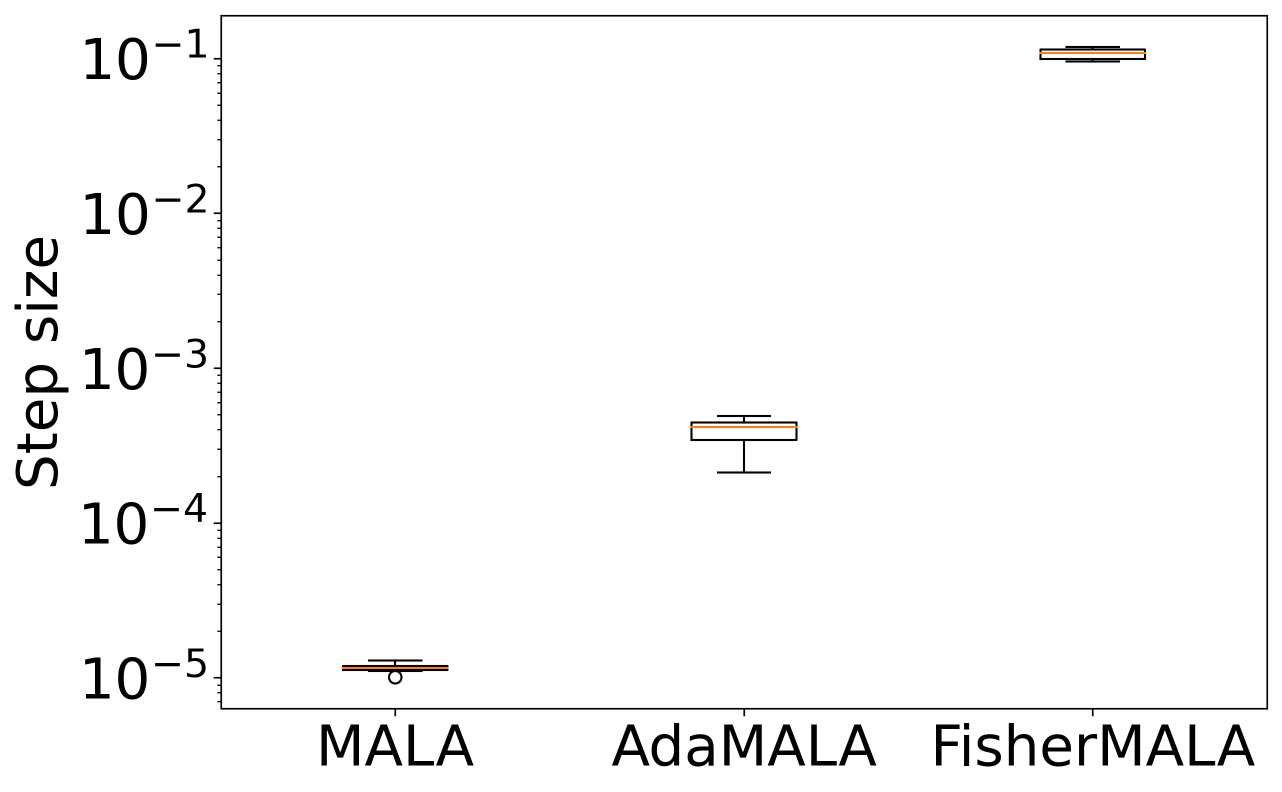} & 
\includegraphics[scale=0.3]
{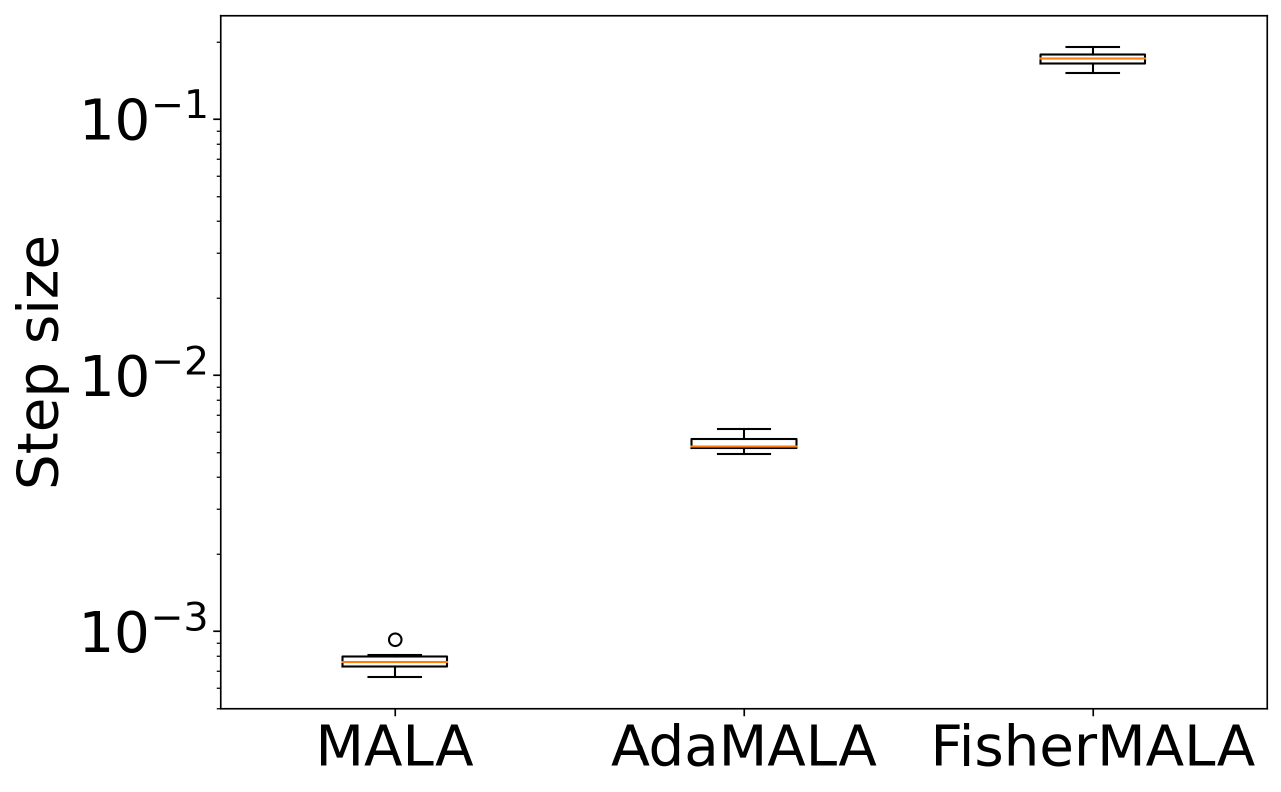}  
\end{tabular}
%\vspace{-2.5mm}
\caption{It shows the estimated values of $\sigma^2$ for MALA, AdaMALA and FisherMALA using boxplots (each computed from the 10 random repeats;  see Table  \ref{table:large_table}) 
for the four datasets presented in Table \ref{table:large_table}. For better visibility the $y$ axis is shown in log scale.} 
\label{fig:step_size}
\end{figure*}

\subsection{Additional plots and tables}

Figure \ref{fig:2dgaussian_appendix} and \ref{fig:gp_appendix} display additional visualizations for the 2-D Gaussian and the GP target experiments. Tables   \ref{table:neal}-\ref{table:ripley} 
provide the ESS scores for the inhomogeneous Gaussian  target and all remaining Bayesian logistic regression datasets, that were not included in the main paper.   Bold 
font in the "Min ESS" entry in the tables indicates statistical significance.  
Similarly, Figures \ref{fig:gp_logtarget}-\ref{fig:ripley_logtarget} show the log target values across iterations for the four best samplers, i.e.\ excluding simple MALA which is the least performing method.

 \begin{figure*}[!htb]
\centering
\begin{tabular}{ccc}
\includegraphics[scale=0.23]
{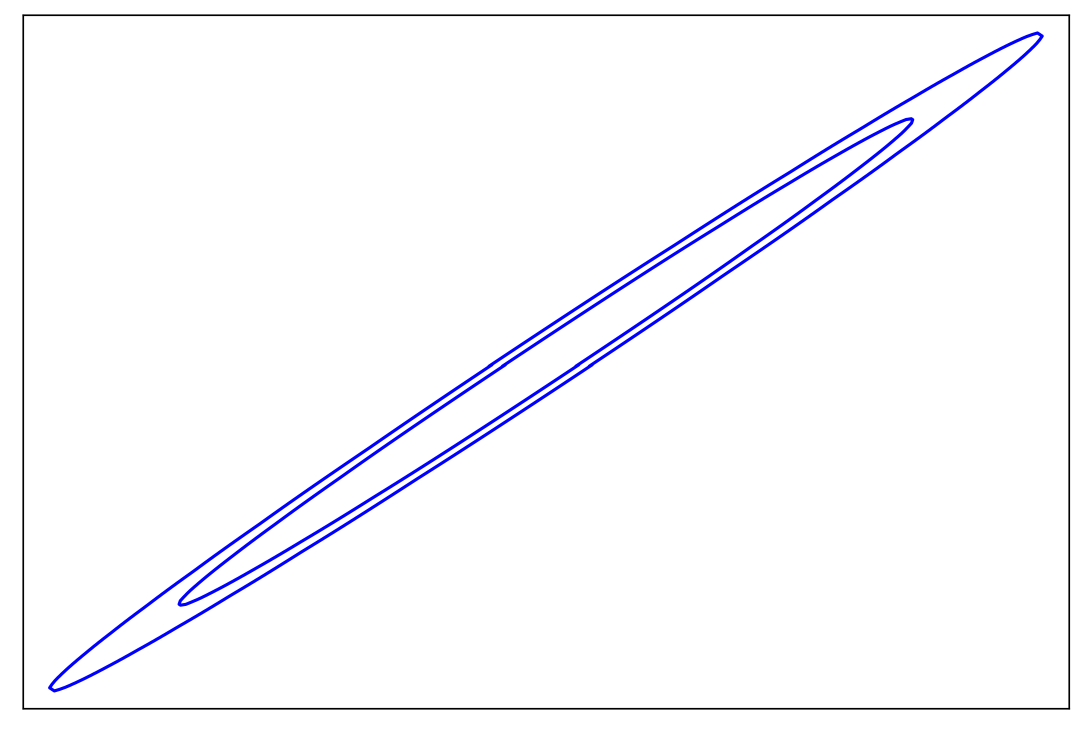} &
\includegraphics[scale=0.23]
{2dgaussian_fisher_mala_contours.pdf} &
\includegraphics[scale=0.23]
{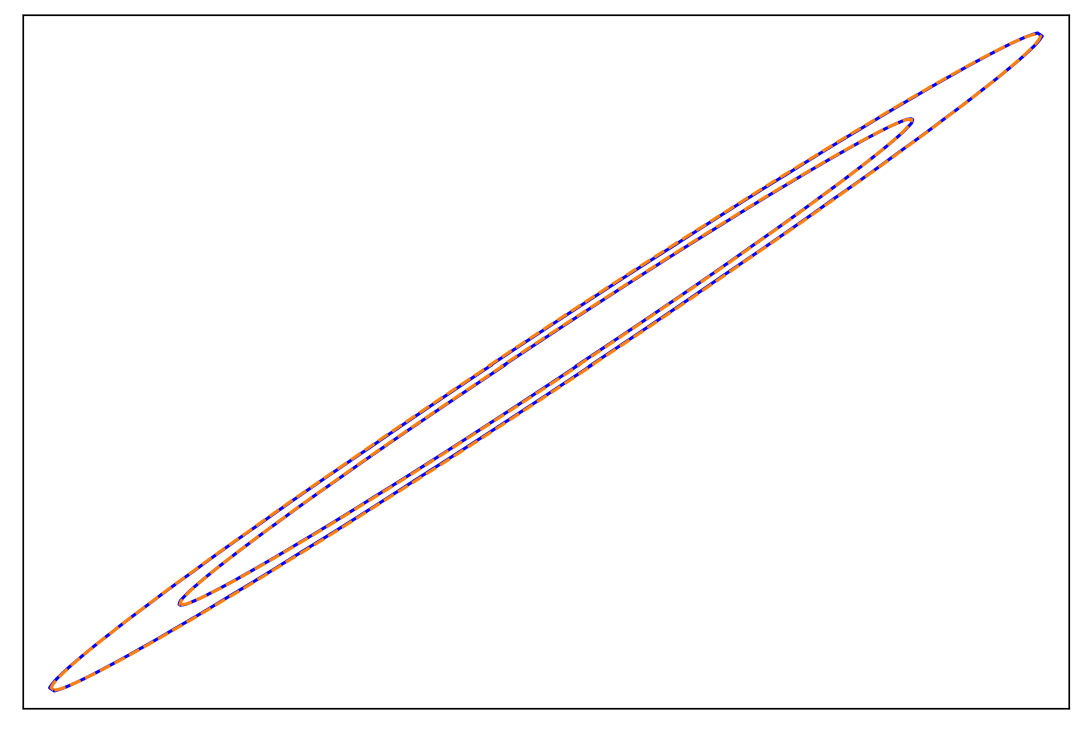} \\
 (a) & (b) & (c)
\end{tabular}
%\vspace{-2.5mm}
\caption{
Panel (a) shows the true covariance of the 2-D Gaussian. Panel (b) shows the estimated covariance by FisherMALA (dashed green line), where for 
comparison the true covariance is also shown in blue. Panel (c) shows the estimated covariance by AdaMALA (dashed red line).} 
\label{fig:2dgaussian_appendix}
\end{figure*}

 \begin{figure*}[!htb]
\centering
\begin{tabular}{ccc}
\includegraphics[scale=0.3]
{gaussian_true_cov.pdf}  &
\includegraphics[scale=0.3]
{gaussian_fisher_mala_raoblack_True_estimatedImage_cov.pdf} &
\includegraphics[scale=0.3]
{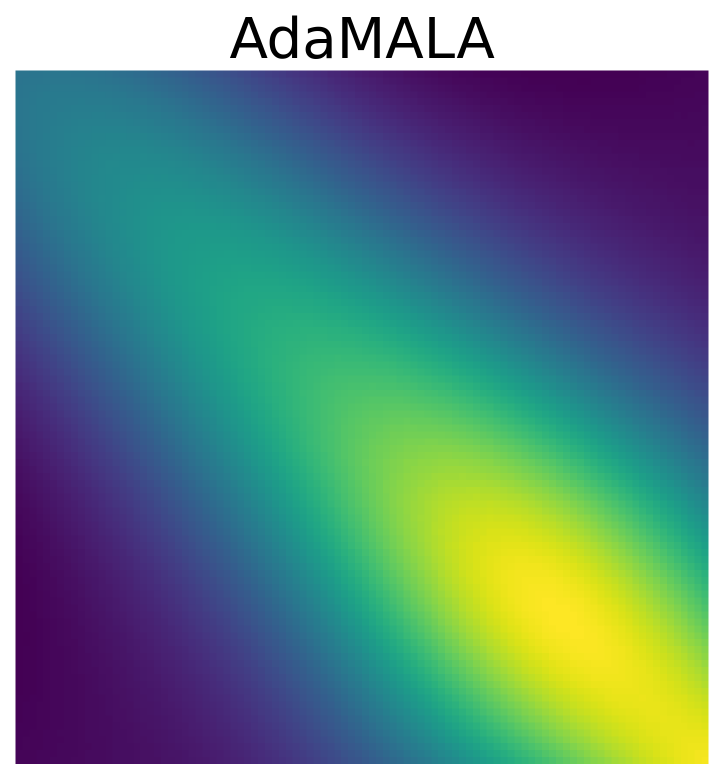}  
\end{tabular}
%\vspace{-2.5mm}
\caption{The covariance matrices for the GP target, where in the right panel is the covariance estimated by AdaMALA which was not displayed in  Figure
 \ref{fig:2dgaussian_and_gp} in the main text.} 
\label{fig:gp_appendix}
\end{figure*}

%\begin{table}
%  \caption{Gaussian dataset.}
%  \label{table:gaussian}
%  \centering
%  \begin{tabular}{llll}
 %   \toprule
%\input{figures/gaussian.txt}
 % \bottomrule
%  \end{tabular}
%\end{table}

\begin{table}
  \caption{ESS scores for the inhomogeneous Gaussian target.}
  \label{table:neal}
  \centering
  \begin{tabular}{llll}
    \toprule
& Max ESS & Median ESS & Min ESS \\
MALA & $13695.291 \pm 1369.515$ & $9.793 \pm 0.655$ & $2.943 \pm 0.130$ \\
AdaMALA & $4310.690 \pm 606.618$ & $70.802 \pm 14.912$ & $9.225 \pm 3.272$ \\
HMC & $19362.103 \pm 1372.400$ & $381.205 \pm 101.781$ & $42.033 \pm 33.080$ \\
%FisherMALA2 & $1941.994 \pm 106.710$ & $1147.138 \pm 61.591$ & $109.160 \pm 57.998$ \\
mMALA & $2354.354 \pm 65.835$ & $2014.801 \pm 23.713$ & $1490.119 \pm 108.745$ \\
FisherMALA & $2347.340 \pm 70.234$ & $2002.579 \pm 30.001$ & $1500.983 \pm 67.087$ \\
  \bottomrule
  \end{tabular}
\end{table}

%\begin{table}
%  \caption{Caravan dataset. All results  are averages after 
%repeating the simulations $10$ times under different random initialisations. } 
%  \label{table:caravan}
%  \centering
 % \begin{tabular}{llll}
 %   \toprule
%\input{figures/caravan.txt}
 % \bottomrule
  %\end{tabular}
%\end{table}

%\begin{table}
 % \caption{Mnist dataset.}
  %\label{table:mnist}
 % \centering
 % \begin{tabular}{llll}
 %   \toprule
%\input{figures/mnist.txt}
%  \bottomrule
%  \end{tabular}
%\end{table}

%\begin{table}
%  \caption{Pima dataset.}
%  \label{table:pima}
%  \centering 
% \begin{tabular}{llll}
%    \toprule
%\input{figures/pima.txt}
%  \bottomrule
%  \end{tabular}
%\end{table}

\begin{table}
  \caption{ESS scores for the Heart dataset.}
  \label{table:heart}
  \centering
  \begin{tabular}{llll}
    \toprule
    & Max ESS & Median ESS & Min ESS \\
MALA & $68.774 \pm 25.304$ & $5.354 \pm 1.056$ & $2.898 \pm 0.104$ \\
AdaMALA & $208.636 \pm 124.762$ & $14.762 \pm 9.134$ & $3.781 \pm 0.731$ \\
HMC & $387.321 \pm 311.673$ & $12.991 \pm 4.009$ & $4.064 \pm 1.120$ \\
%FisherMALA2 & $4804.365 \pm 176.747$ & $2519.187 \pm 693.945$ & $441.434 \pm 386.287$ \\
mMALA & $878.858 \pm 1079.674$ & $789.356 \pm 969.806$ & $651.793 \pm 806.477$ \\
FisherMALA & $4864.278 \pm 103.277$ & $4474.288 \pm 102.029$ & ${\bf 3954.793} \pm 199.832$ \\
  \bottomrule
  \end{tabular}
\end{table}

\begin{table}
  \caption{ESS scores for the  German Credit dataset.}
  \label{table:german}
  \centering
  \begin{tabular}{llll}
    \toprule
    & Max ESS & Median ESS & Min ESS \\
MALA & $262.206 \pm 211.839$ & $5.932 \pm 0.668$ & $2.972 \pm 0.212$ \\
AdaMALA & $223.592 \pm 111.914$ & $16.111 \pm 5.058$ & $3.774 \pm 0.653$ \\
HMC & $10439.824 \pm 9572.157$ & $45.872 \pm 7.823$ & $5.431 \pm 1.257$ \\
%FisherMALA2 & $3960.773 \pm 105.169$ & $3097.557 \pm 252.619$ & $397.034 \pm 244.768$ \\
mMALA & $3066.605 \pm 100.768$ & $2767.022 \pm 94.222$ & $2342.902 \pm 112.610$ \\
FisherMALA & $3951.807 \pm 78.858$ & $3582.184 \pm 90.551$ & ${\bf 3011.483} \pm 258.154$ \\
  \bottomrule
  \end{tabular}
\end{table}

\begin{table}
  \caption{ESS scores for the Australian Credit dataset.}
  \label{table:australian}
  \centering
  \begin{tabular}{llll}
    \toprule
    & Max ESS & Median ESS & Min ESS \\
MALA & $15.627 \pm 12.892$ & $3.823 \pm 1.166$ & $2.611 \pm 0.538$ \\
AdaMALA & $1525.373 \pm 1600.986$ & $6.986 \pm 3.200$ & $3.297 \pm 0.456$ \\
HMC & $1282.235 \pm 932.038$ & $6.966 \pm 1.249$ & $2.856 \pm 0.095$ \\
%FisherMALA2 & $4887.606 \pm 173.626$ & $3603.765 \pm 725.018$ & $84.202 \pm 44.750$ \\
mMALA & $2609.462 \pm 881.967$ & $2308.175 \pm 776.872$ & $1869.364 \pm 630.880$ \\
FisherMALA & $4732.724 \pm 116.074$ & $4361.969 \pm 104.750$ & ${\bf 3772.086} \pm 265.170$ \\
  \bottomrule
  \end{tabular}
\end{table}

\begin{table}
  \caption{ESS scores for the Ripley dataset.}
  \label{table:ripley}
  \centering
  \begin{tabular}{llll}
    \toprule
    & Max ESS & Median ESS & Min ESS \\
MALA & $2058.325 \pm 180.839$ & $496.981 \pm 68.029$ & $427.492 \pm 60.006$ \\
AdaMALA & $9678.793 \pm 384.295$ & $9497.814 \pm 463.059$ & $9272.026 \pm 412.361$ \\
HMC & $18403.796 \pm 3202.136$ & $18254.161 \pm 3513.550$ & $7644.709 \pm 2288.559$ \\
%FisherMALA2 & $9869.053 \pm 321.031$ & $9598.430 \pm 330.766$ & $9217.330 \pm 584.224$ \\
mMALA & $9333.633 \pm 280.238$ & $8941.579 \pm 288.223$ & $8655.640 \pm 396.106$ \\
FisherMALA & $9875.968 \pm 218.801$ & $9673.009 \pm 280.759$ & $9244.631 \pm 559.137$ \\
  \bottomrule
  \end{tabular}
\end{table}
%}

\begin{figure}%[!htb]
\centering
\begin{tabular}{cccc}
\includegraphics[scale=0.19]
{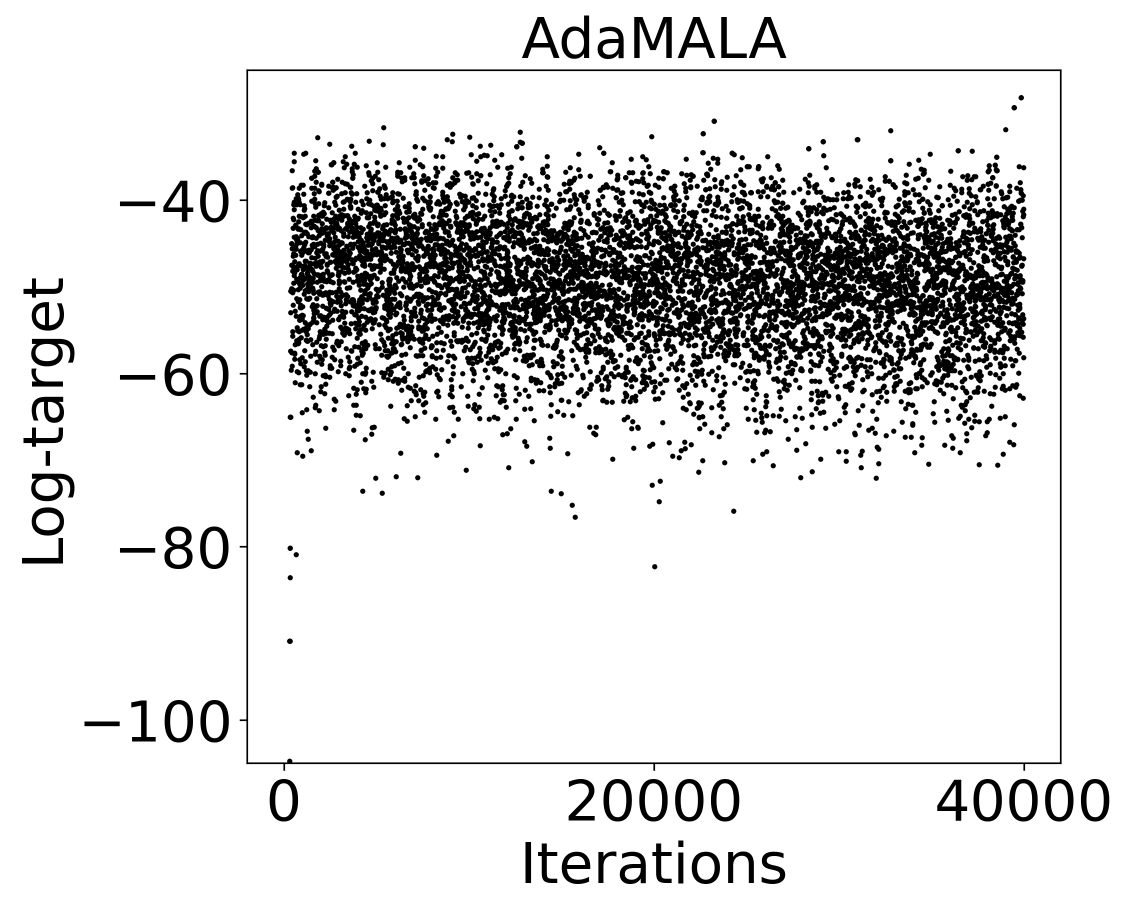} &
\includegraphics[scale=0.19]
{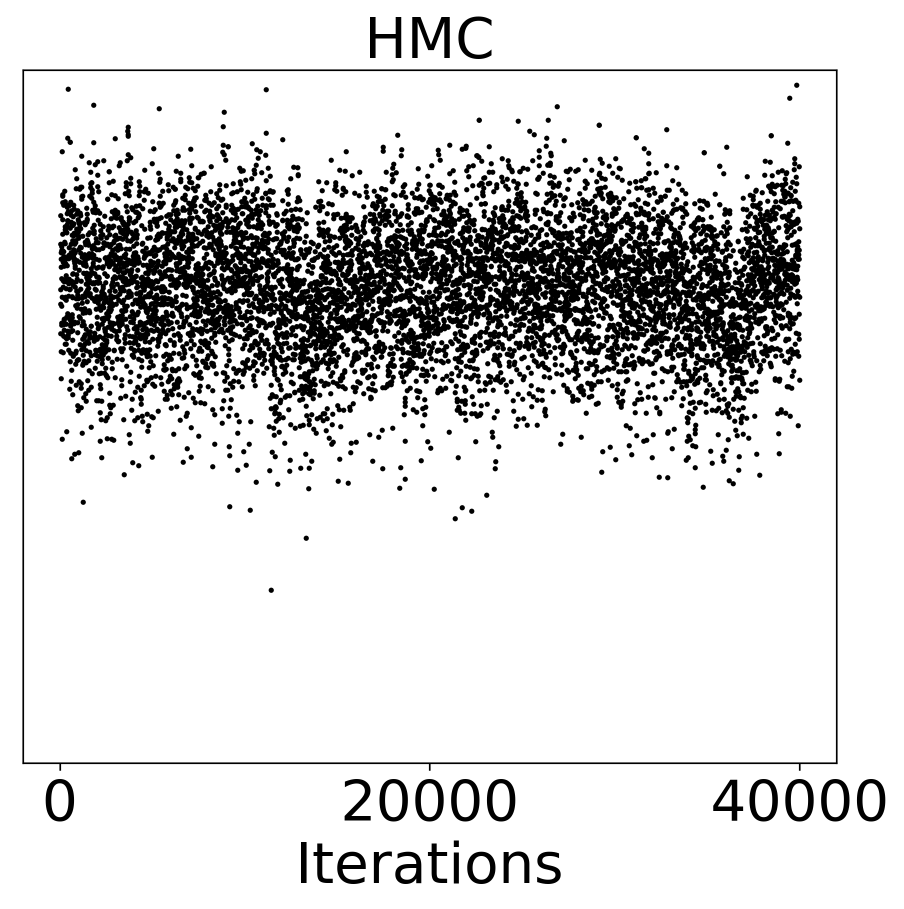} &
\includegraphics[scale=0.19]
{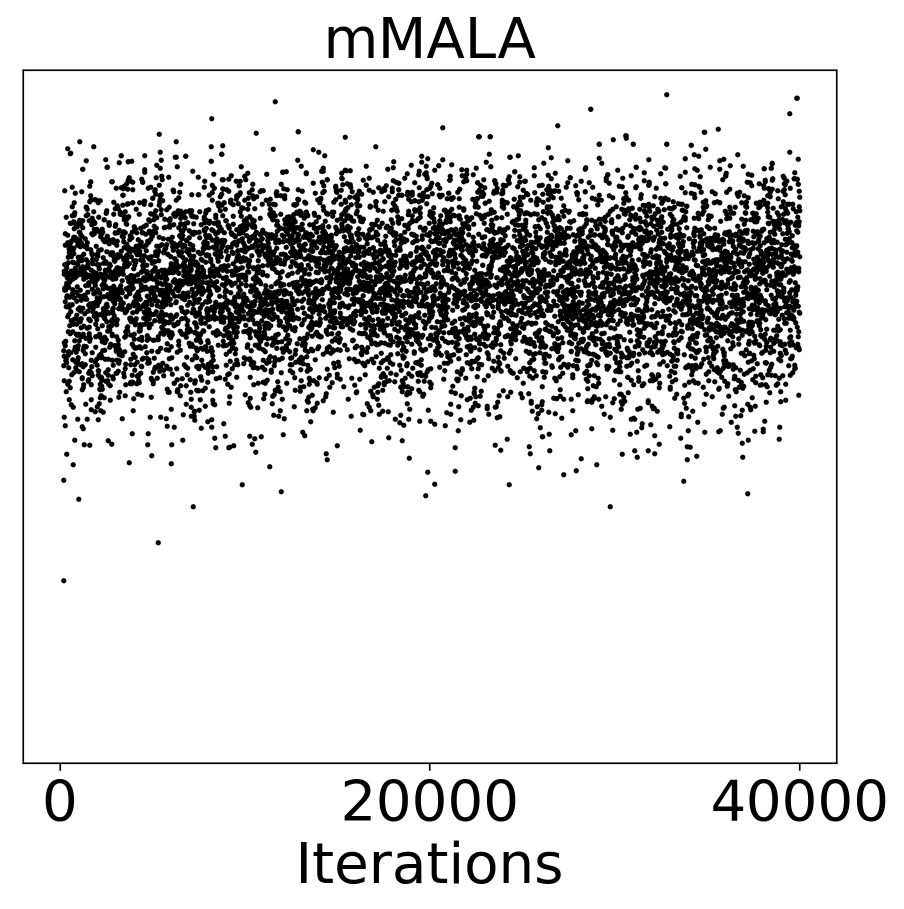} &
\includegraphics[scale=0.19]
{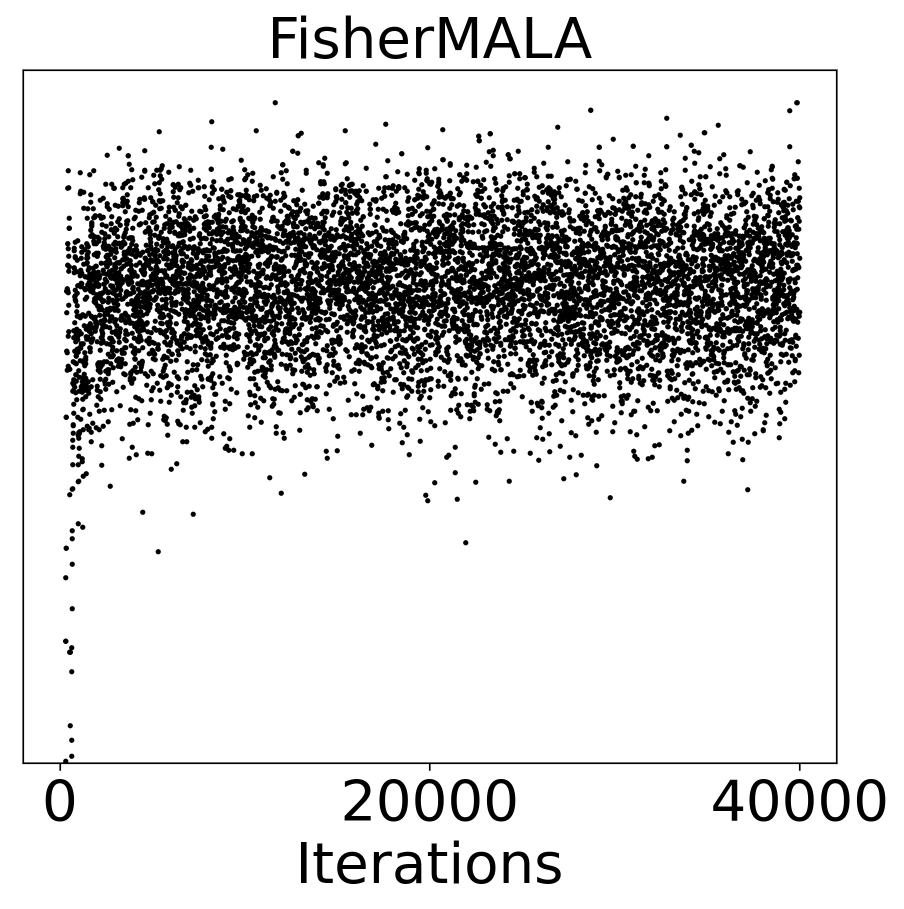}
\end{tabular}
\caption{The evolution of  the log-target across iterations in the GP target.} 
\label{fig:gp_logtarget}
\end{figure}

\begin{figure}%[!htb]
\centering
\begin{tabular}{cccc}
\includegraphics[scale=0.19]
{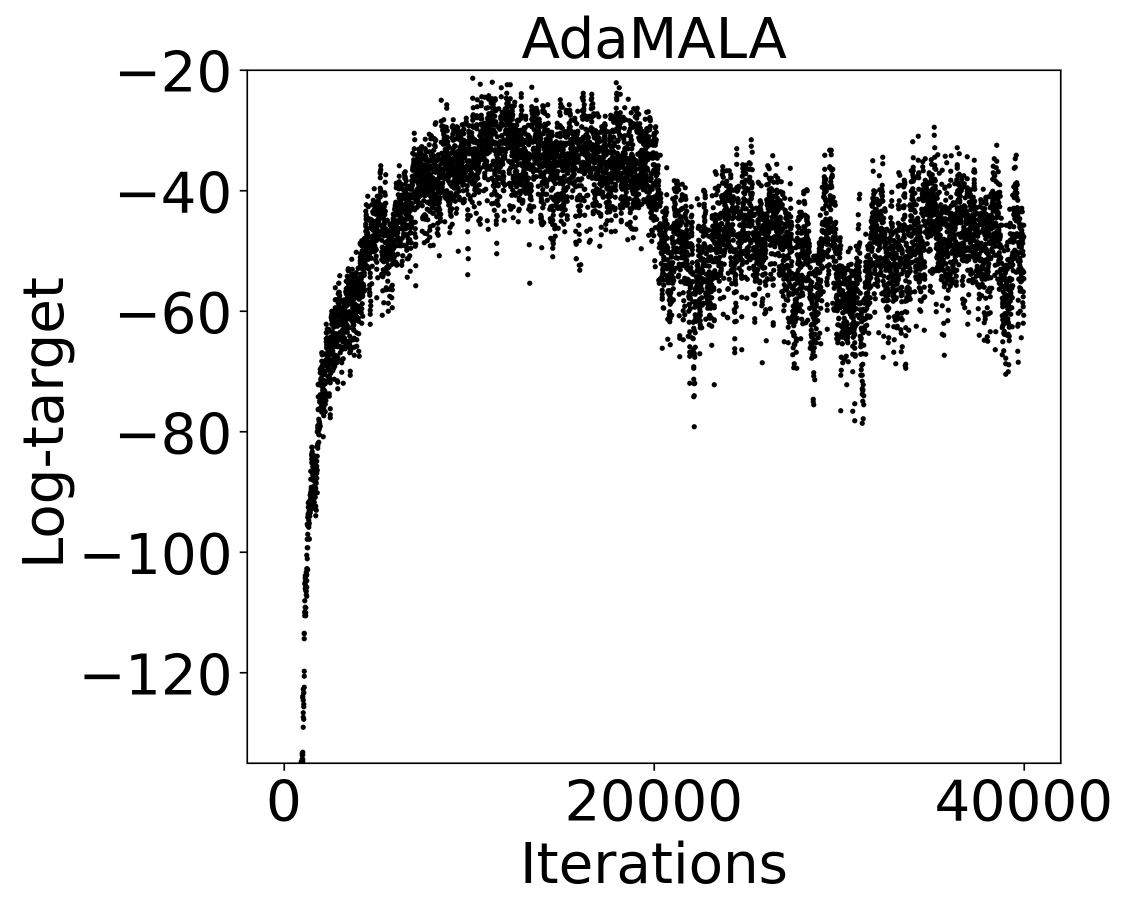} &
\includegraphics[scale=0.19]
{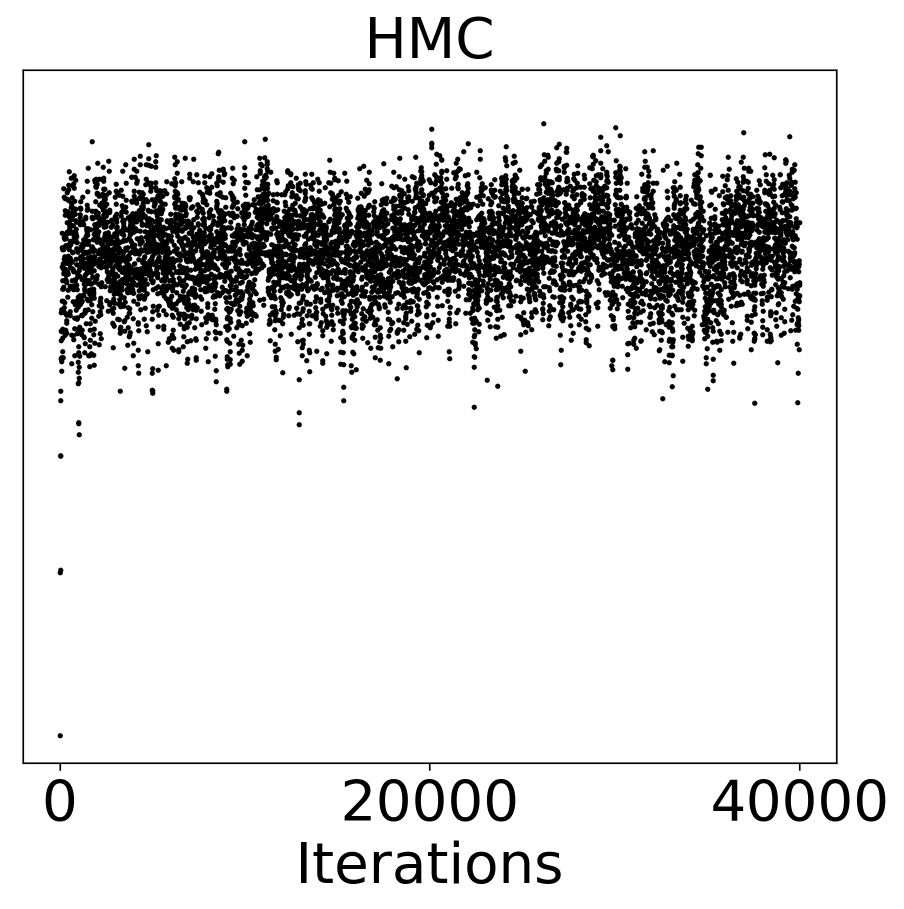} &
\includegraphics[scale=0.19]
{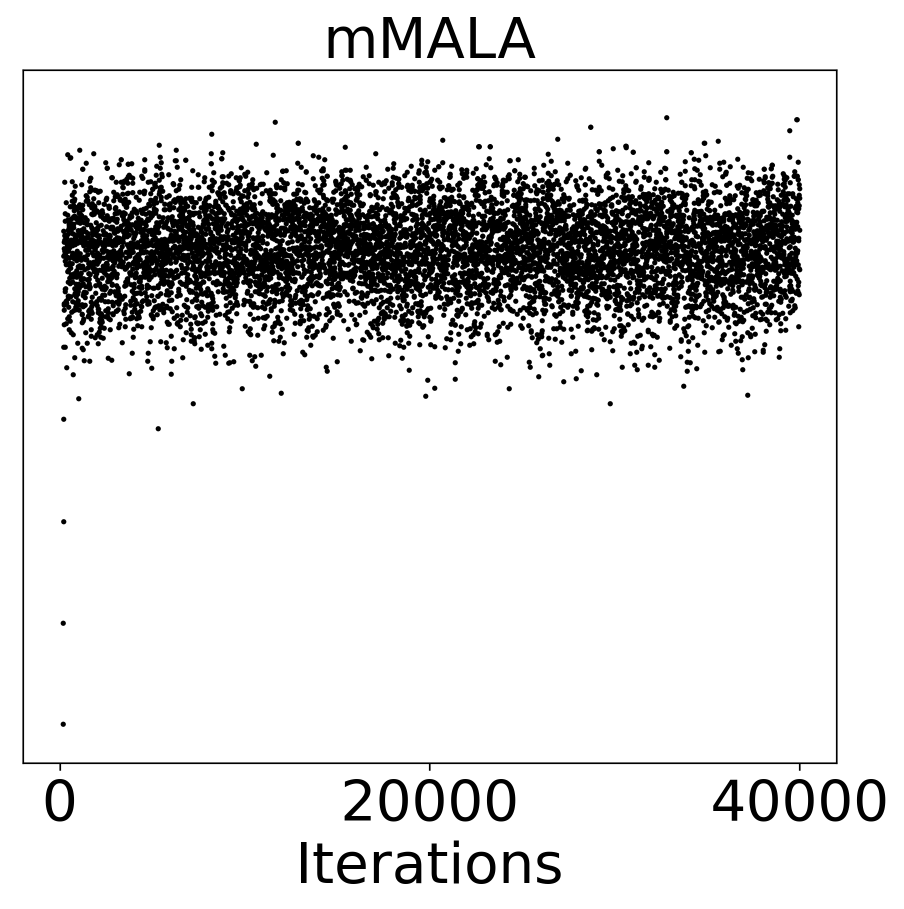} &
\includegraphics[scale=0.19]
{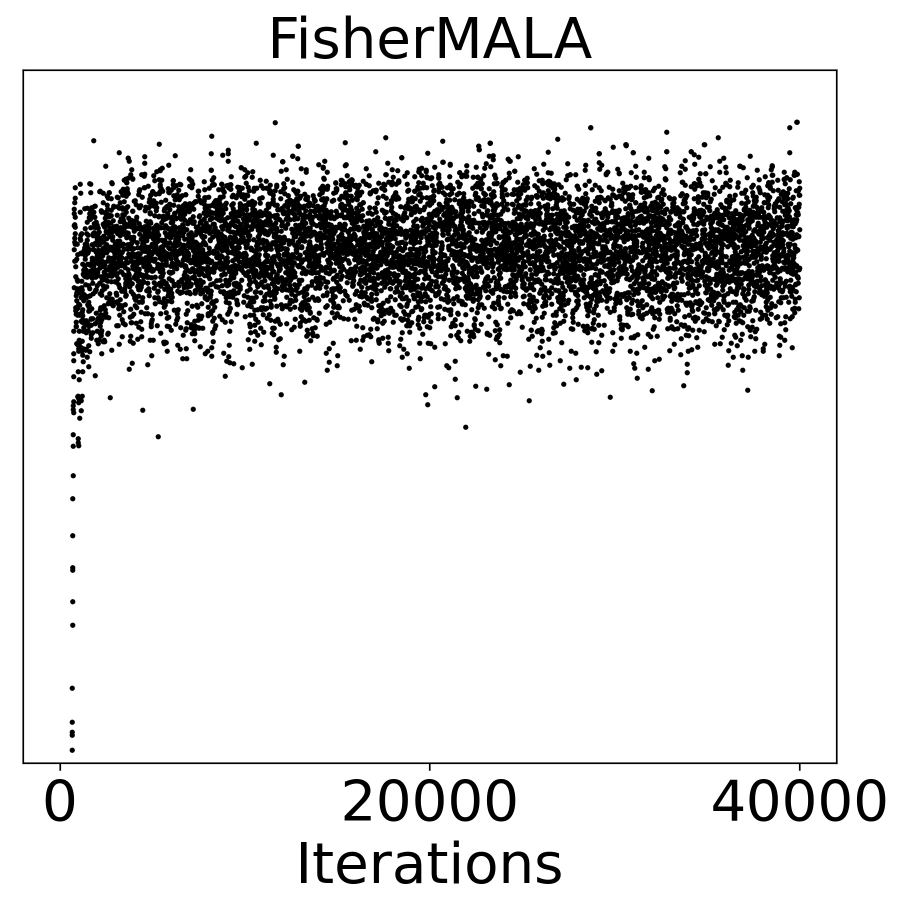}
\end{tabular}
\caption{The evolution of  the log-target across iterations in the inhomogeneous Gaussian target.} 
\label{fig:neal_logtarget}
\end{figure}

\begin{figure}%[!htb]
\centering
\begin{tabular}{cccc}
\includegraphics[scale=0.19]
{pima_ada_mala_log_target.pdf} &
\includegraphics[scale=0.19]
{pima_hmc_log_target.pdf} &
\includegraphics[scale=0.19]
{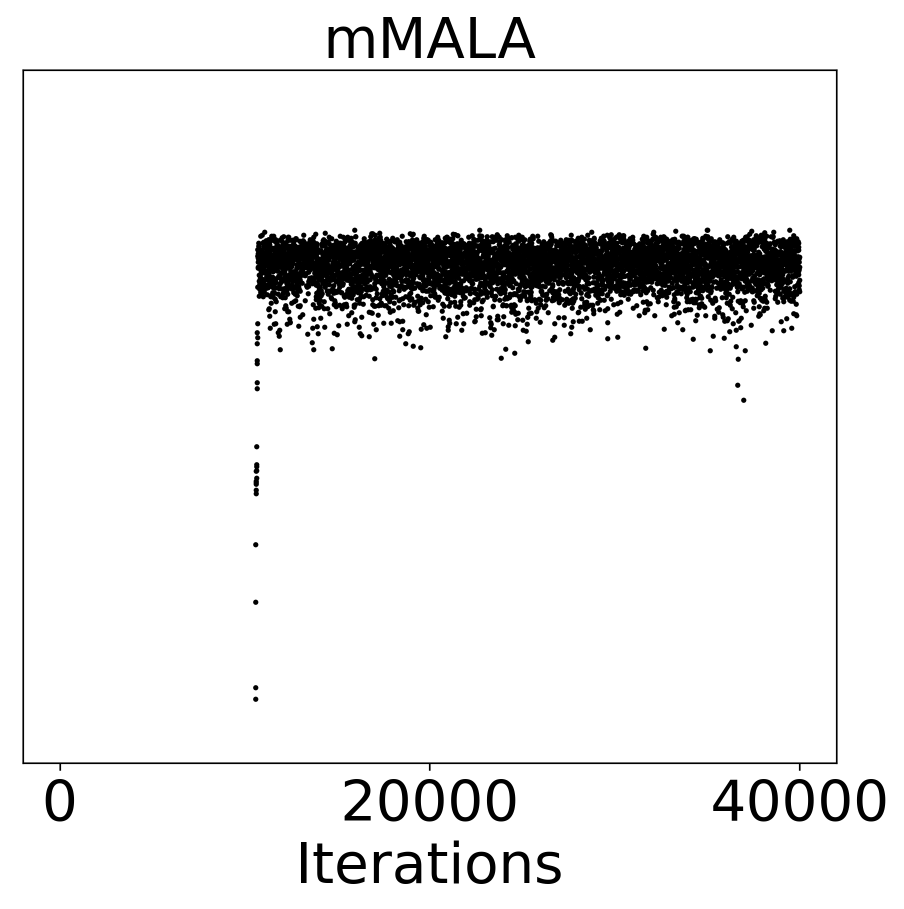} &
\includegraphics[scale=0.19]
{pima_fisher_mala_log_target.pdf}
\end{tabular}
\caption{The evolution of  the log-target across iterations in Pima Indians dataset.} 
\label{fig:german_logtarget}
\end{figure}

%\begin{figure}%[!htb]
%\centering
%\begin{tabular}{cccc}
%\includegraphics[scale=0.2]
%{figures/caravan_ada_mala_log_target.pdf} &
%\includegraphics[scale=0.2]
%{figures/caravan_hmc_log_target.pdf} &
%\includegraphics[scale=0.2]
%{figures/caravan_manif_mala_log_target.pdf} &
%\includegraphics[scale=0.2]
%{figures/caravan_fisher_mala_log_target.pdf}
%\end{tabular}
%\caption{The evolution of  the log-target across iterations for all algorithms in Caravan dataset.} 
%\label{fig:caravan_logtarget}
%\end{figure}

\begin{figure}%[!htb]
\centering
\begin{tabular}{cccc}
\includegraphics[scale=0.19]
{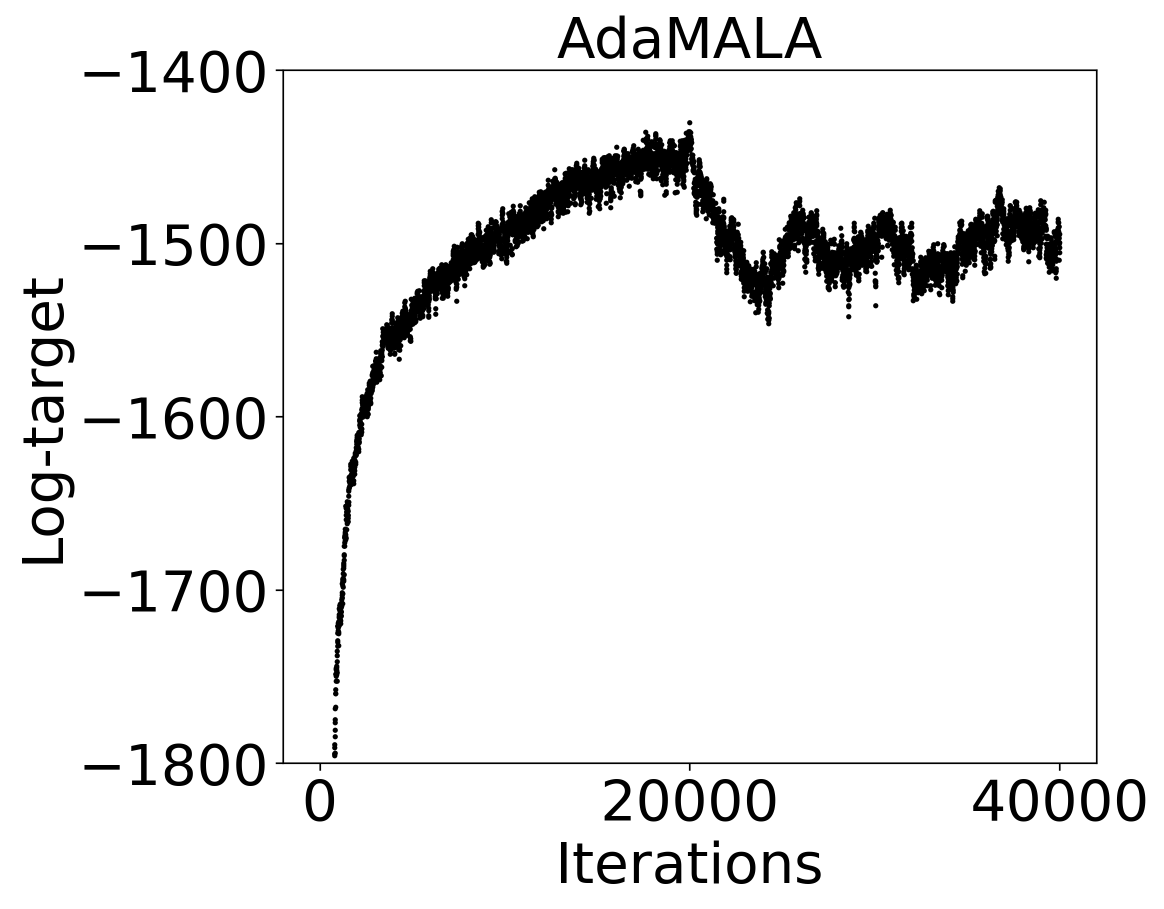} &
\includegraphics[scale=0.19]
{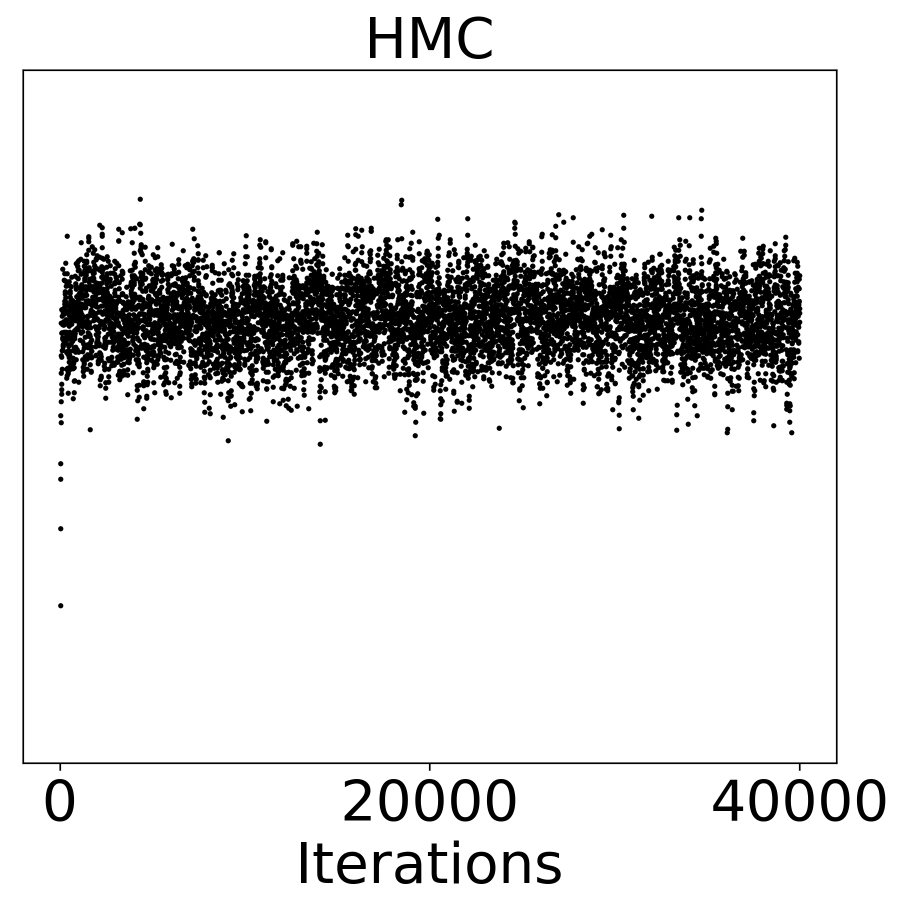} &
\includegraphics[scale=0.19]
{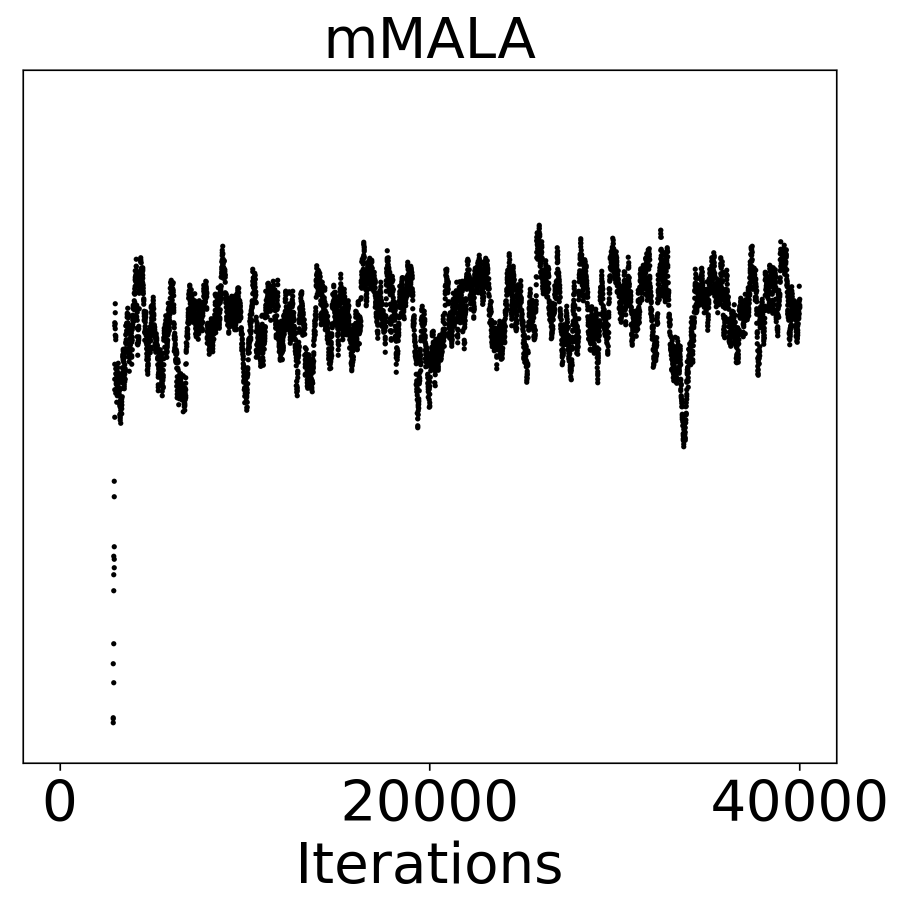} &
\includegraphics[scale=0.19]
{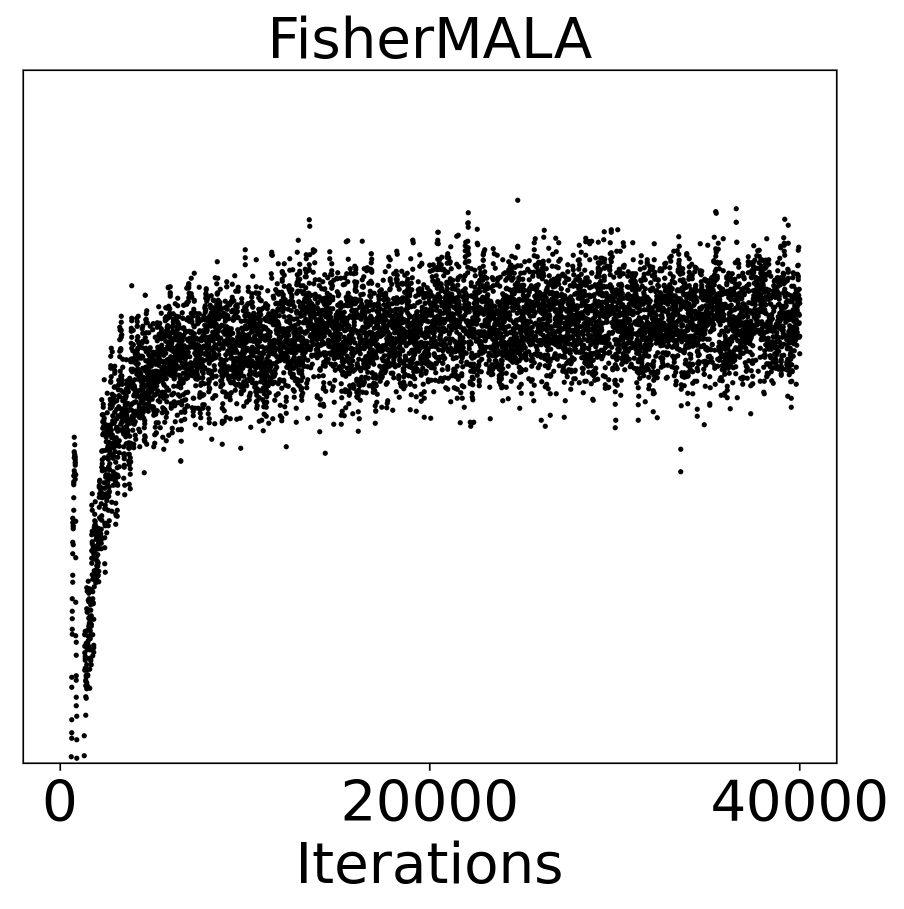}
\end{tabular}
\caption{The evolution of  the log-target across iterations in MNIST dataset.} 
\label{fig:mnist_logtarget}
\end{figure}

\begin{figure}%[!htb]
\centering
\begin{tabular}{cccc}
\includegraphics[scale=0.19]
{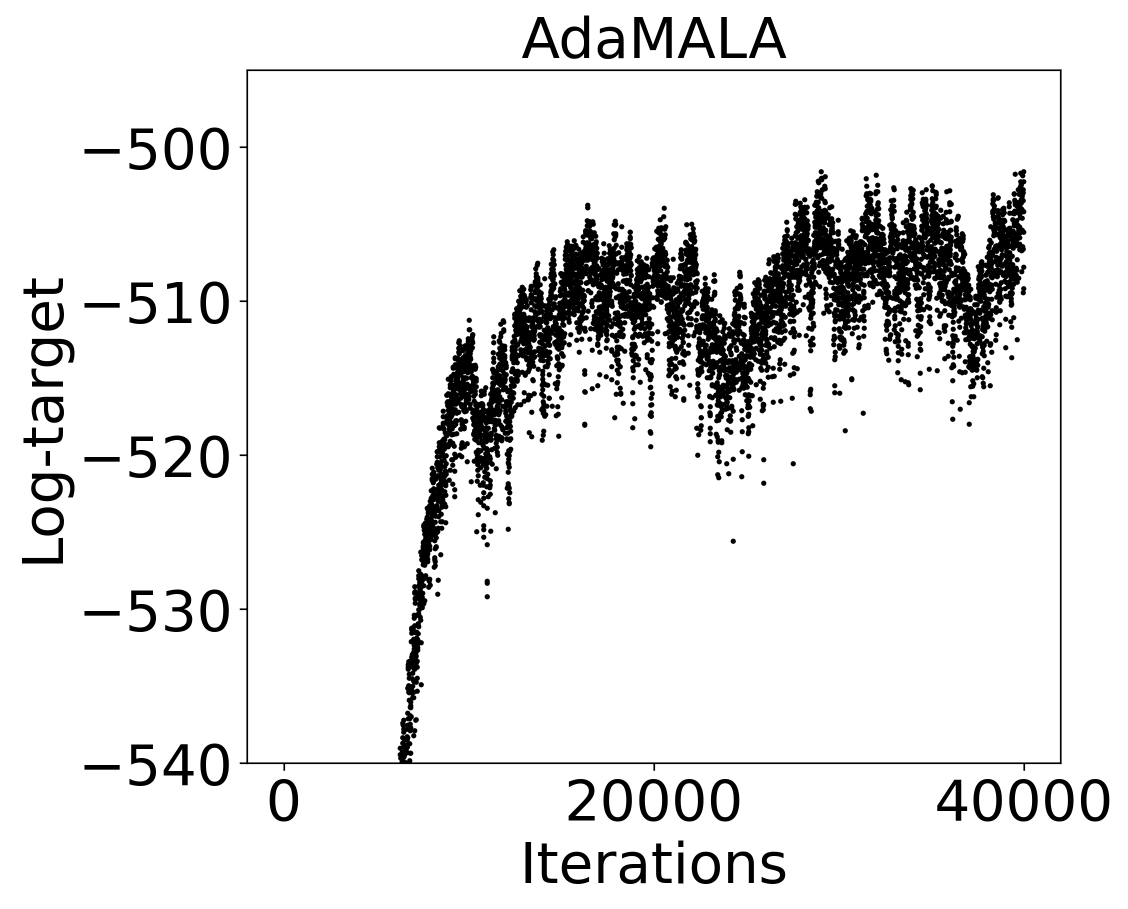} &
\includegraphics[scale=0.19]
{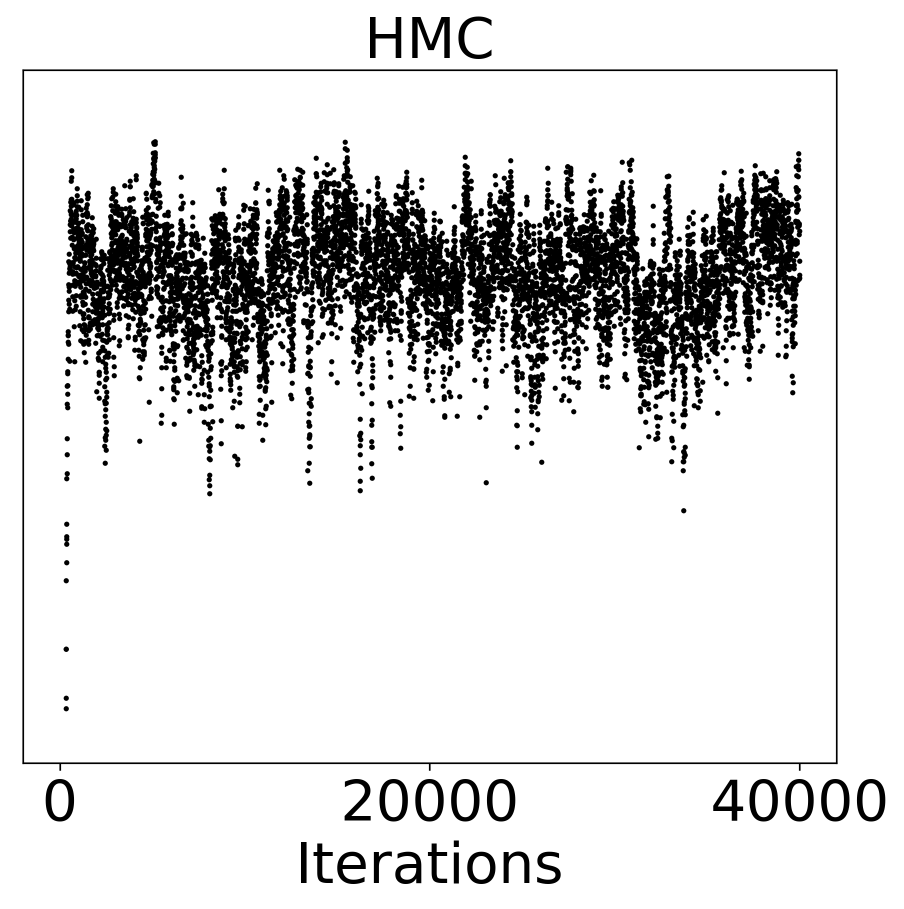} &
\includegraphics[scale=0.19]
{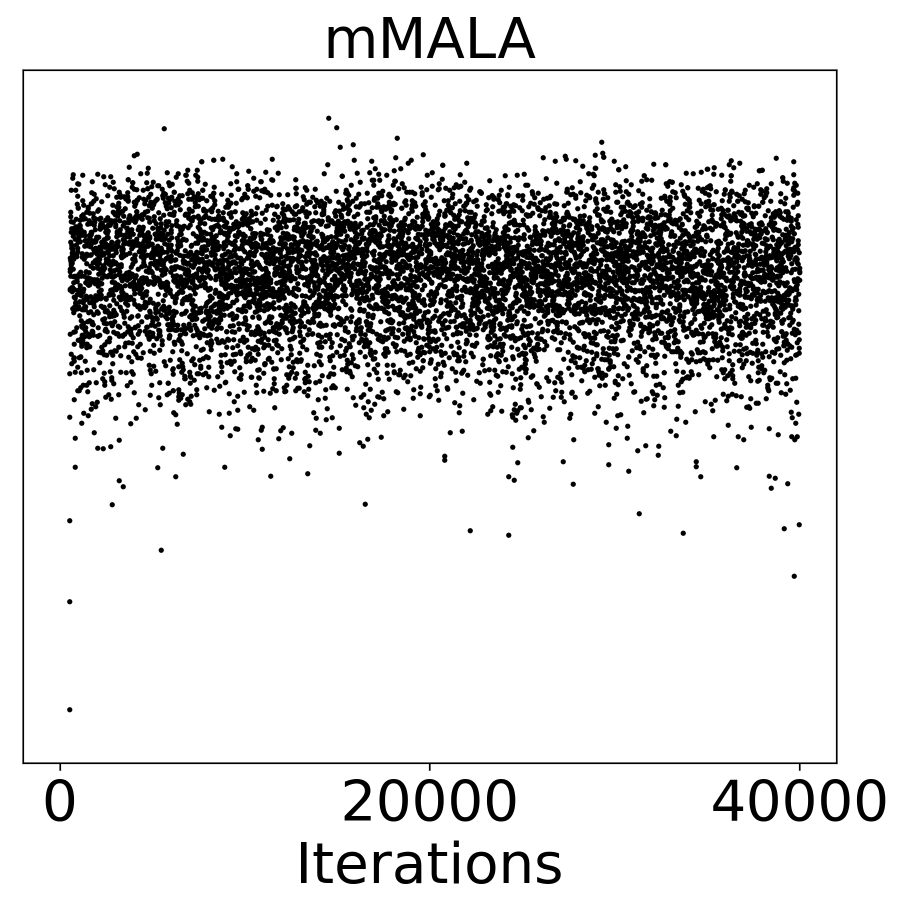} &
\includegraphics[scale=0.19]
{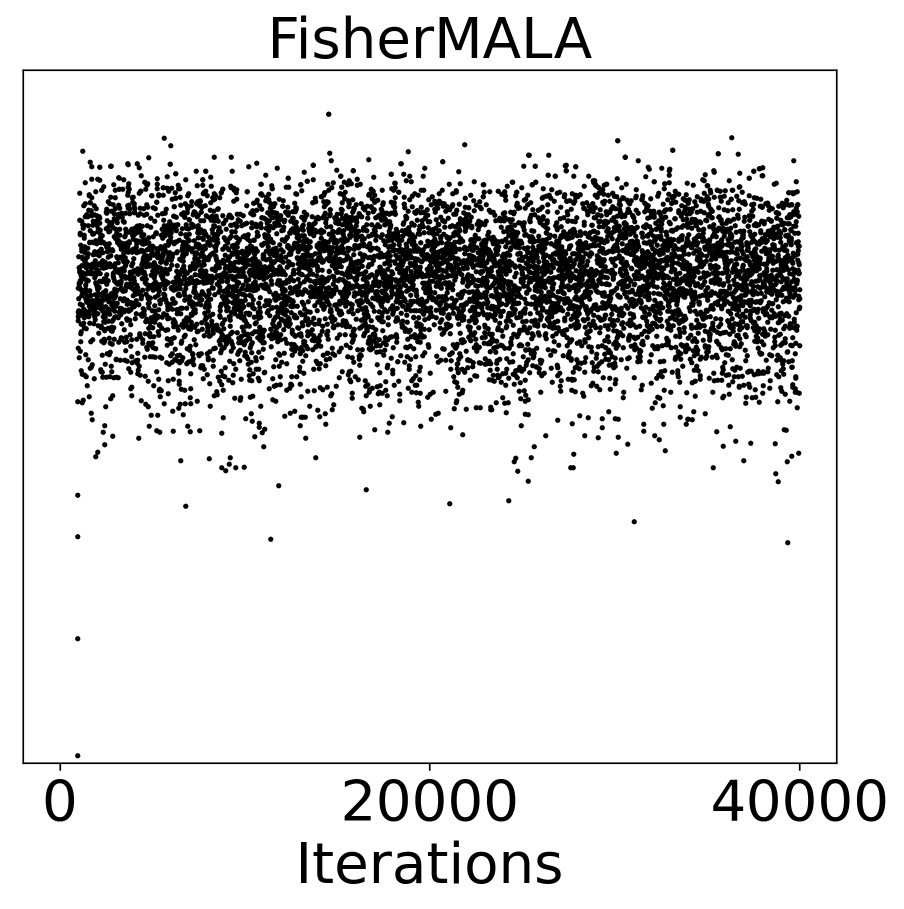}
\end{tabular}
\caption{The evolution of  the log-target across iterations in German Credit dataset.} 
\label{fig:german_logtarget}
\end{figure}

\begin{figure}%[!htb]
\centering
\begin{tabular}{cccc}
\includegraphics[scale=0.19]
{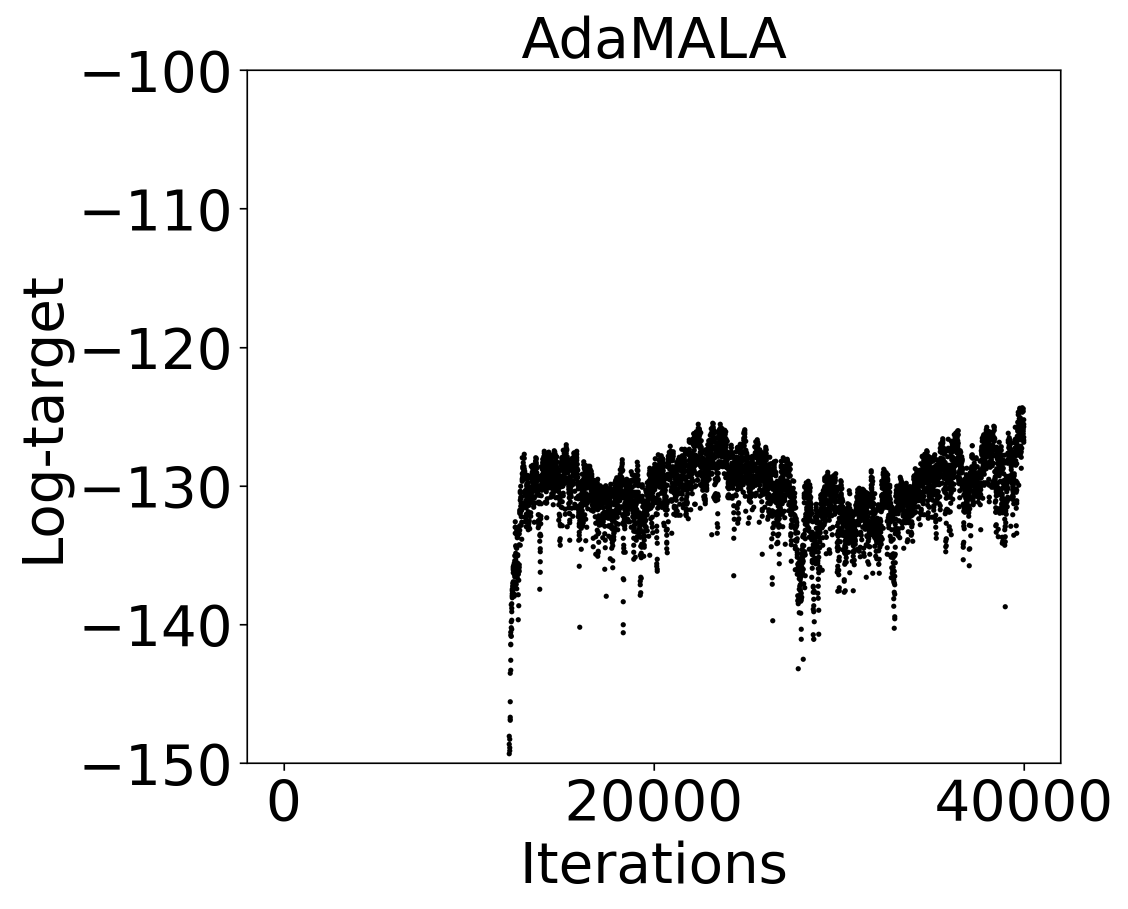} &
\includegraphics[scale=0.19]
{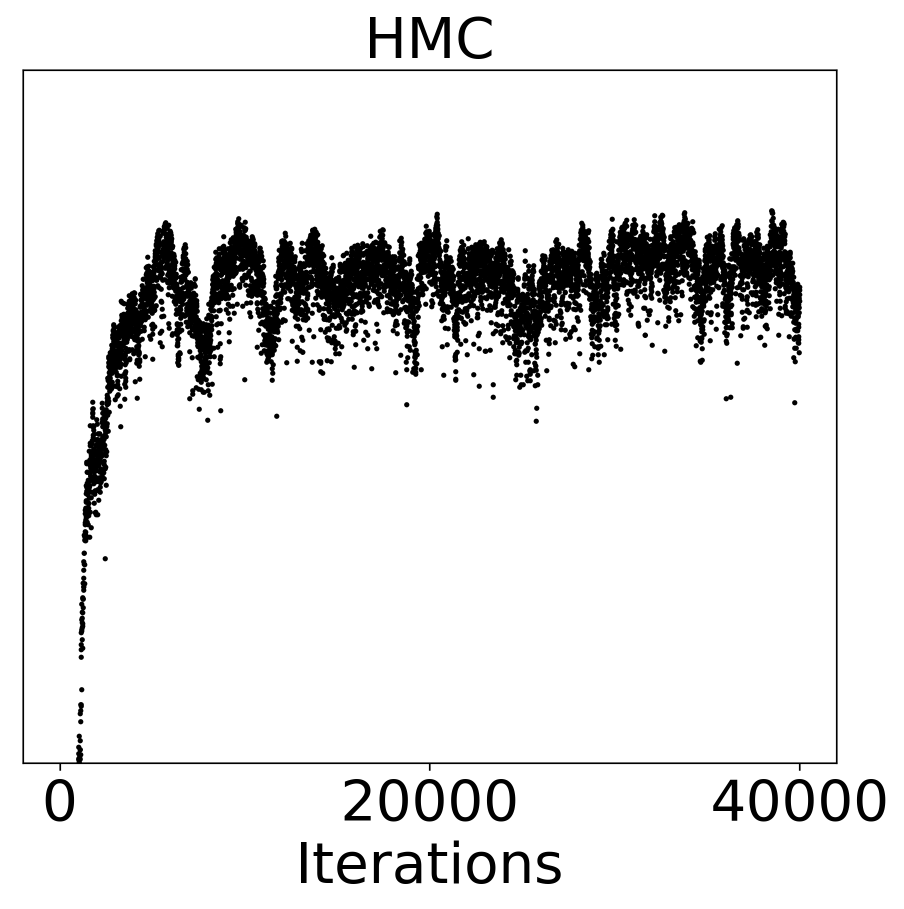} &
\includegraphics[scale=0.19]
{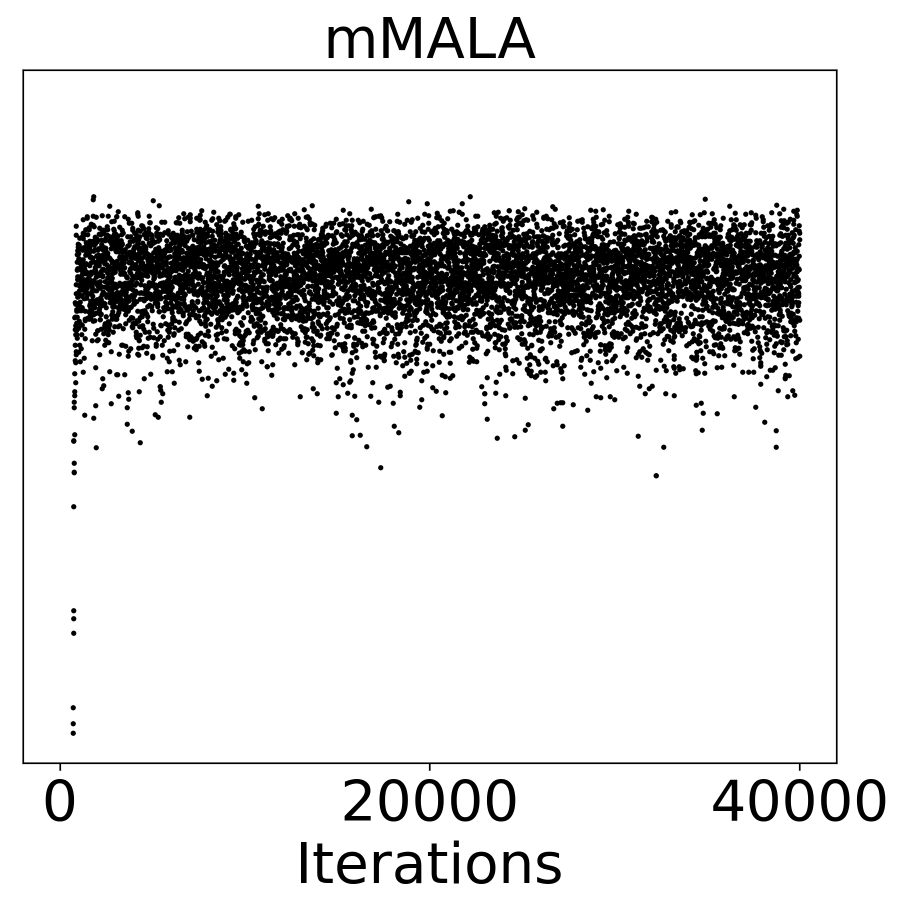} &
\includegraphics[scale=0.19]
{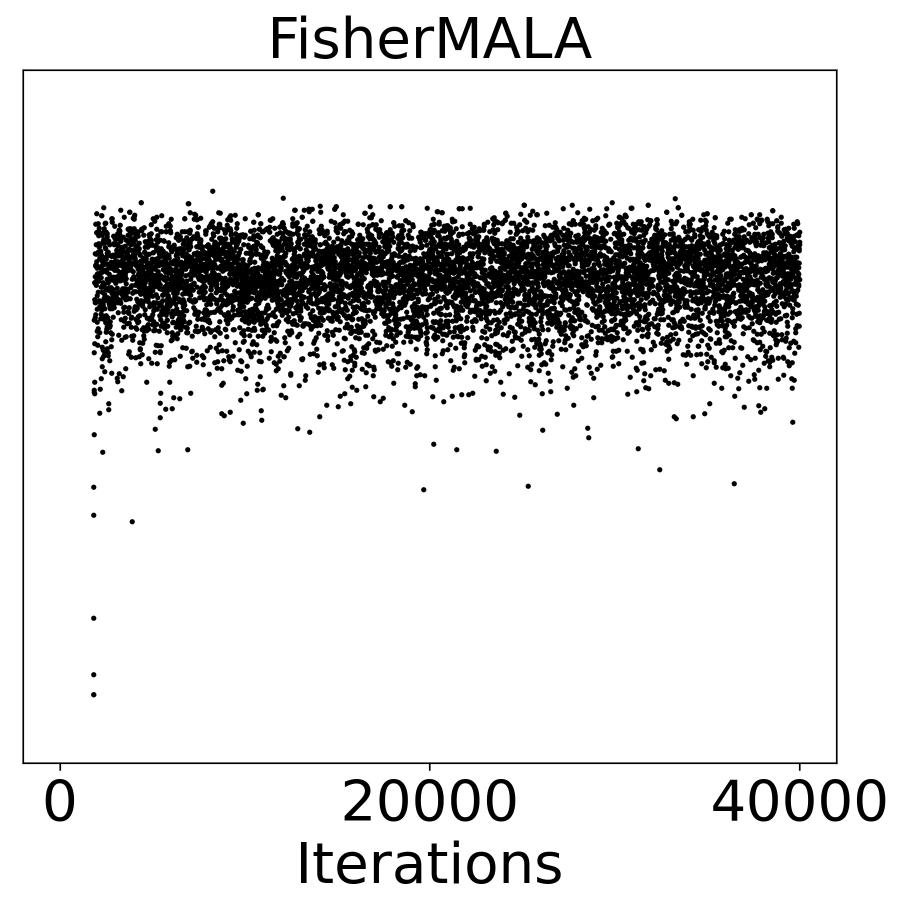}
\end{tabular}
\caption{The evolution of  the log-target across iterations in Heart dataset.} 
\label{fig:heart_logtarget}
\end{figure}

\begin{figure}%[!htb]
\centering
\begin{tabular}{cccc}
\includegraphics[scale=0.19]
{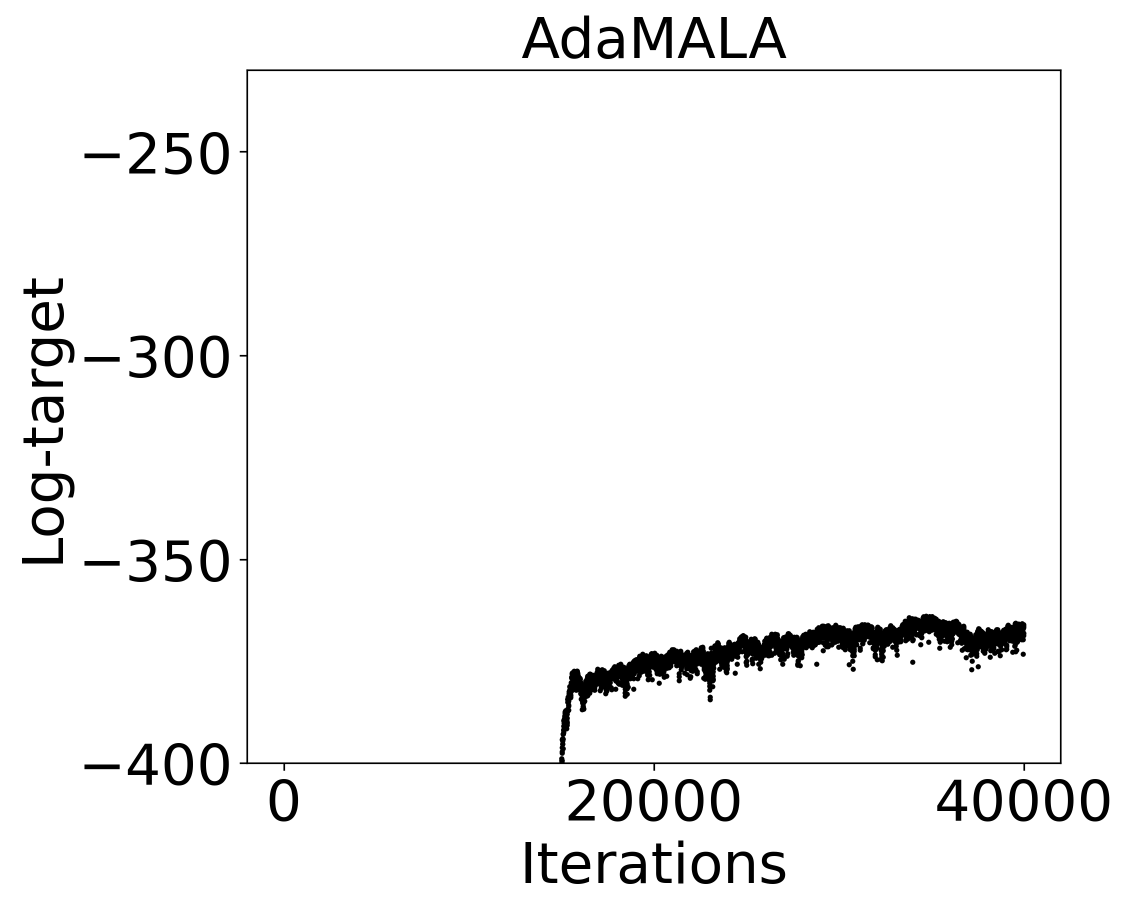} &
\includegraphics[scale=0.19]
{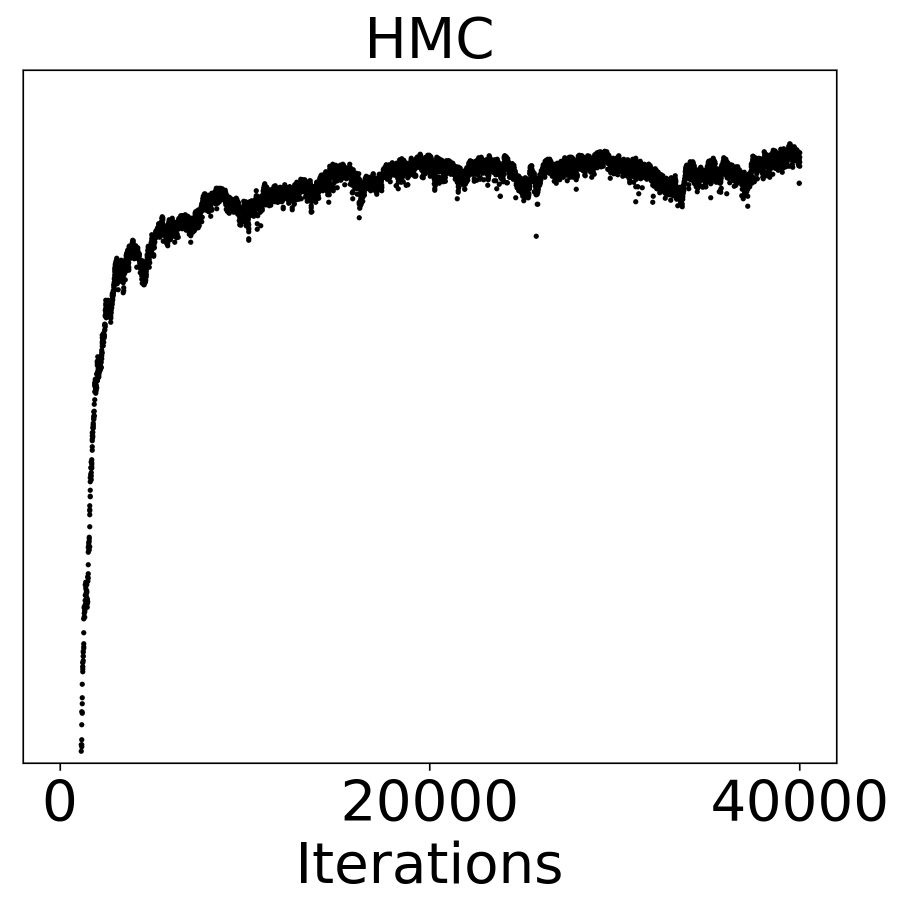} &
\includegraphics[scale=0.19]
{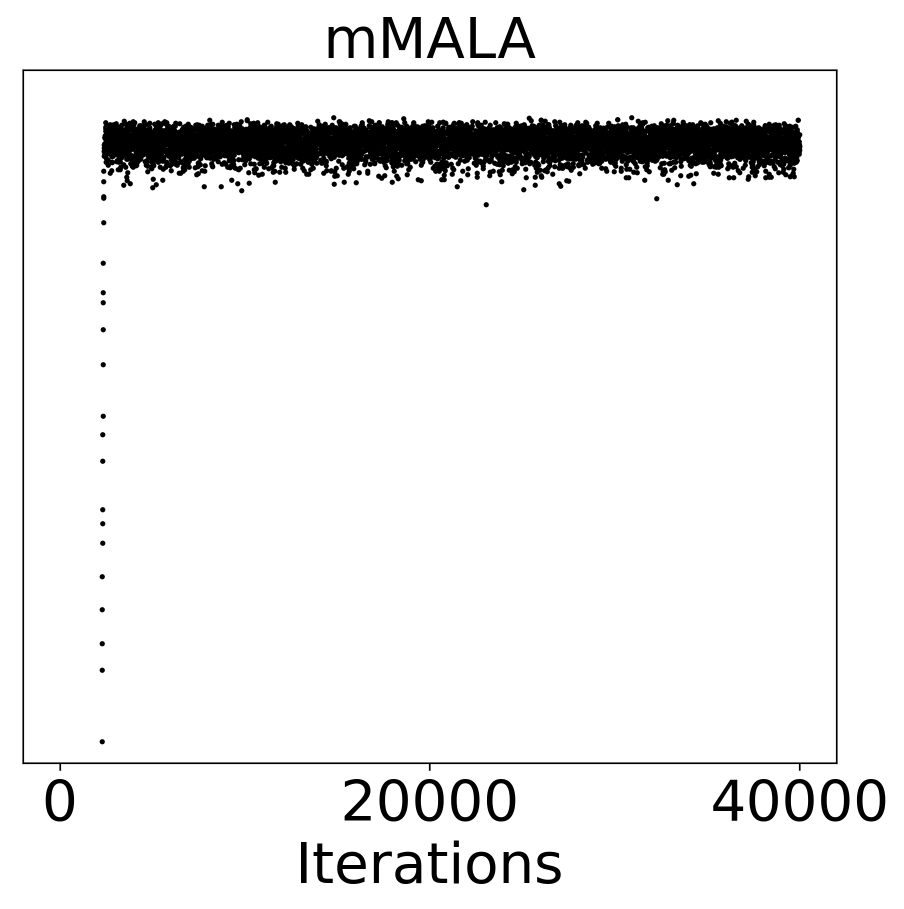} &
\includegraphics[scale=0.19]
{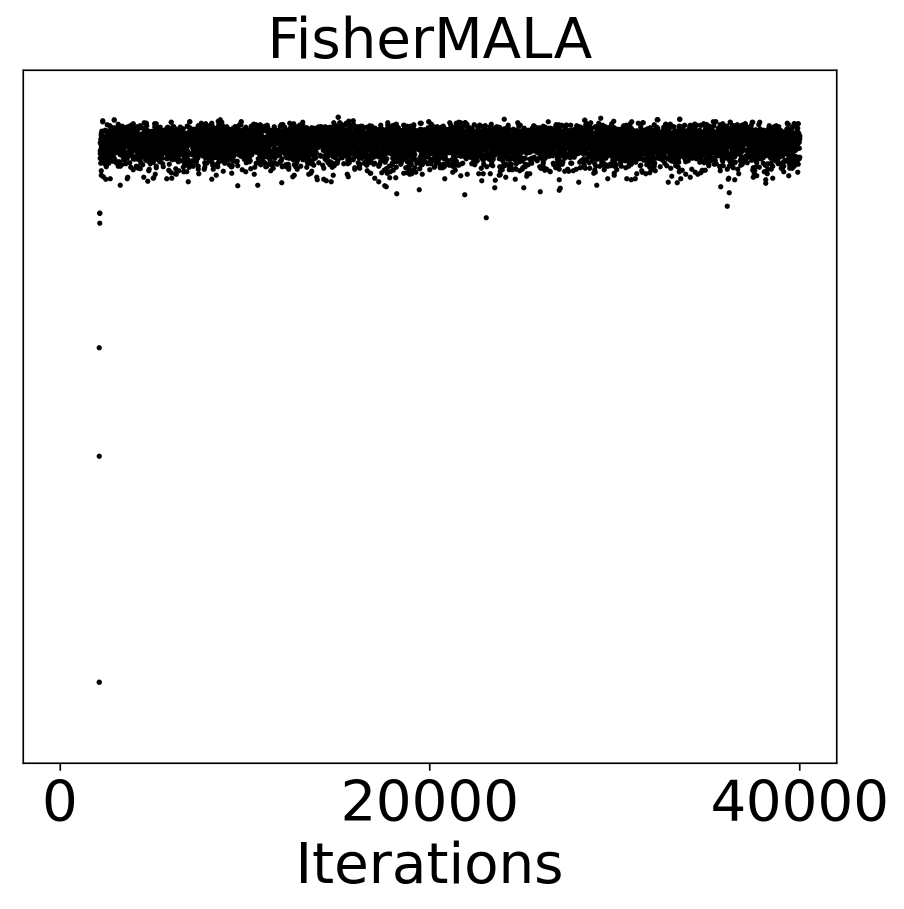}
\end{tabular}
\caption{The evolution of  the log-target across iterations in Australian Credit dataset.} 
\label{fig:australian_logtarget}
\end{figure}

\begin{figure}%[!htb]
\centering
\begin{tabular}{cccc}
\includegraphics[scale=0.19]
{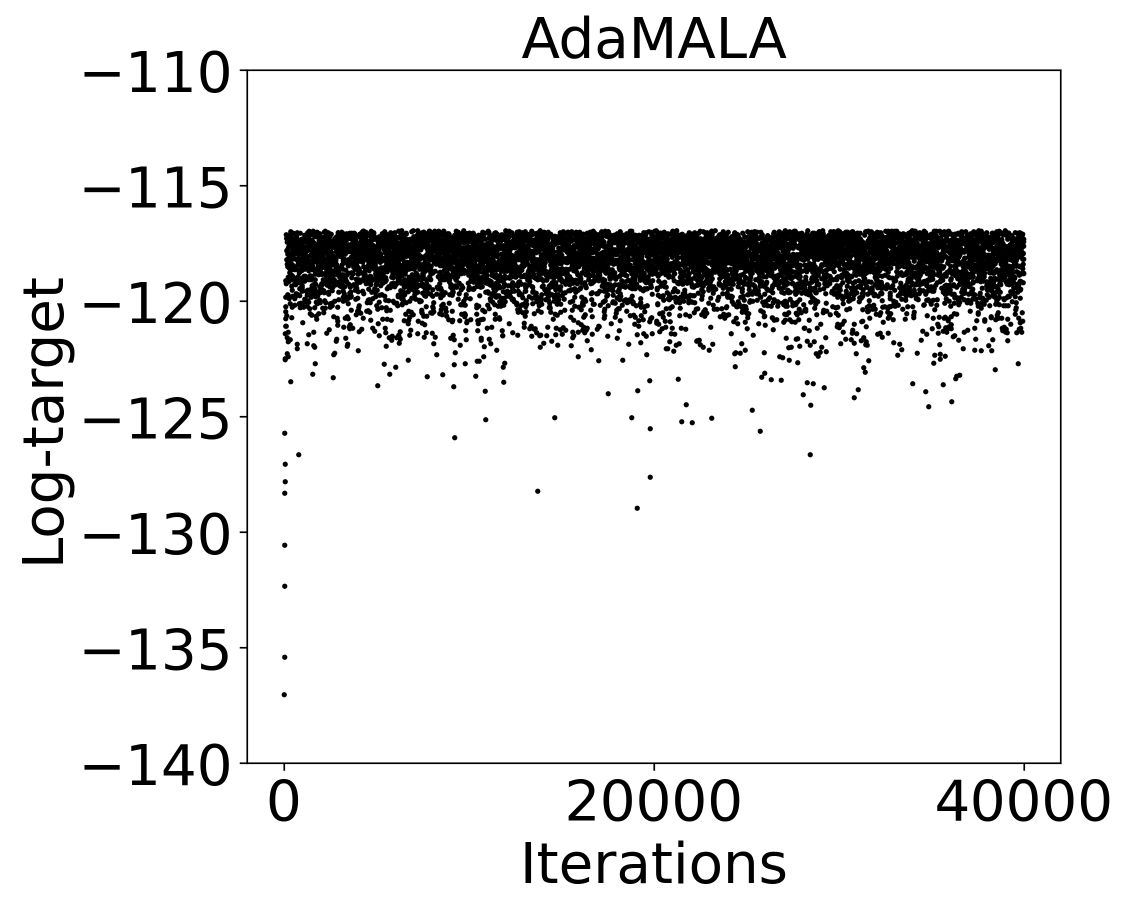} &
\includegraphics[scale=0.19]
{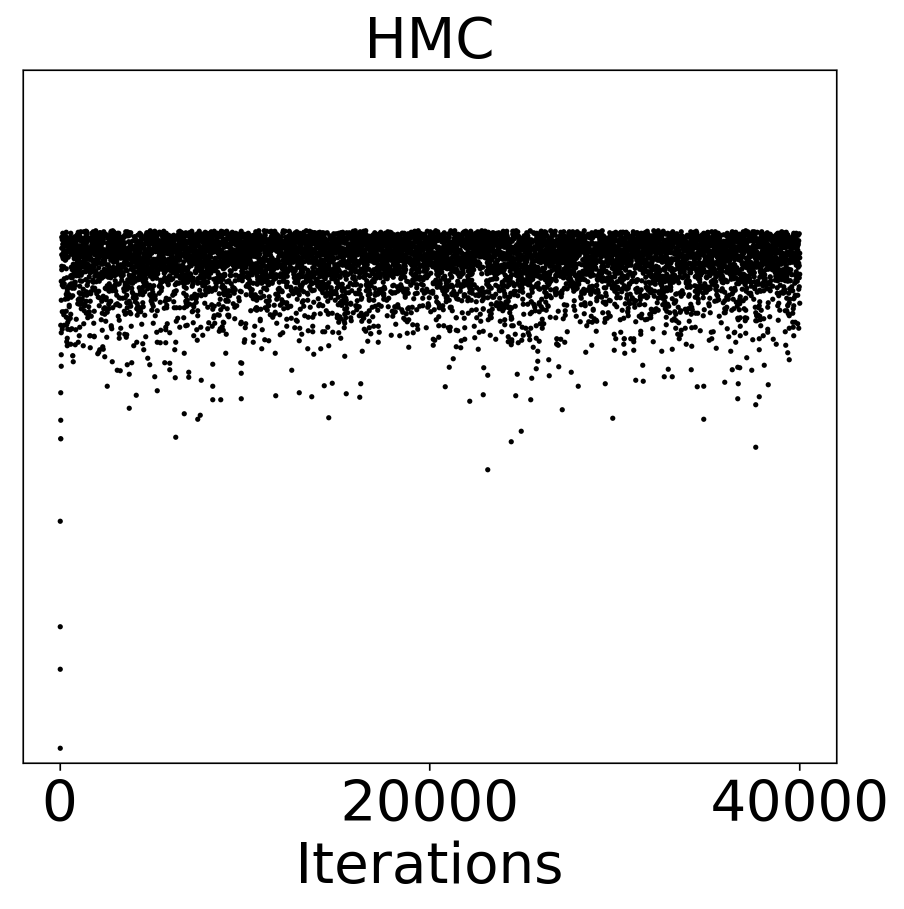} &
\includegraphics[scale=0.19]
{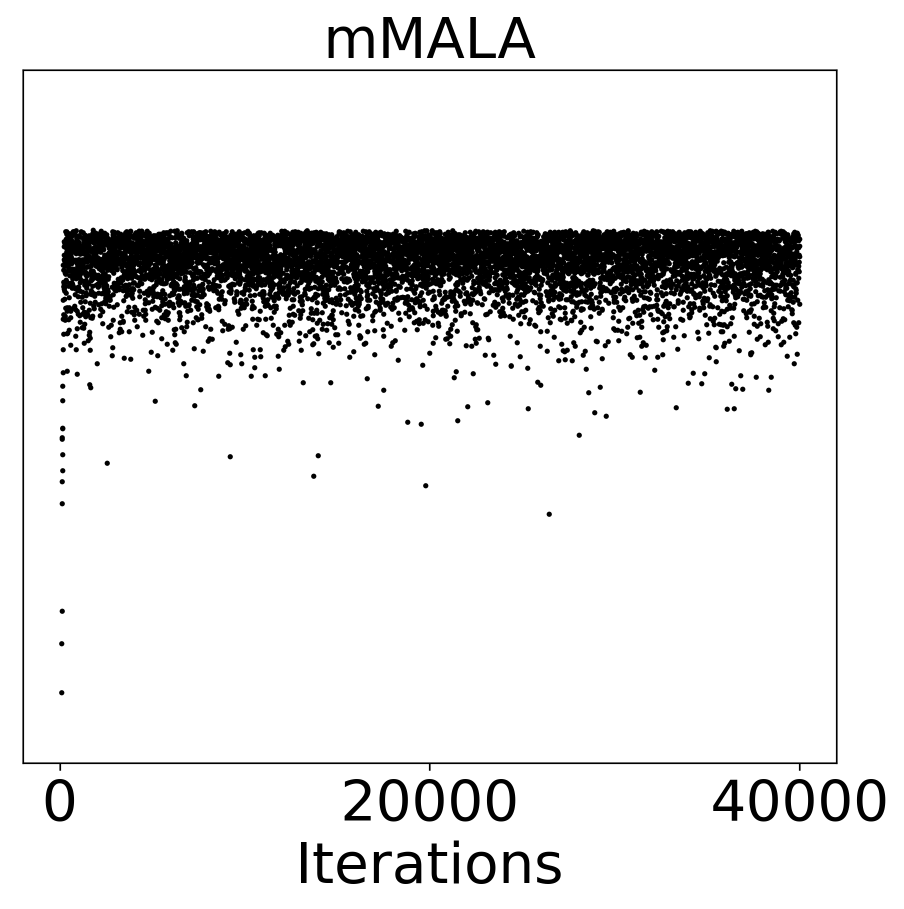} &
\includegraphics[scale=0.19]
{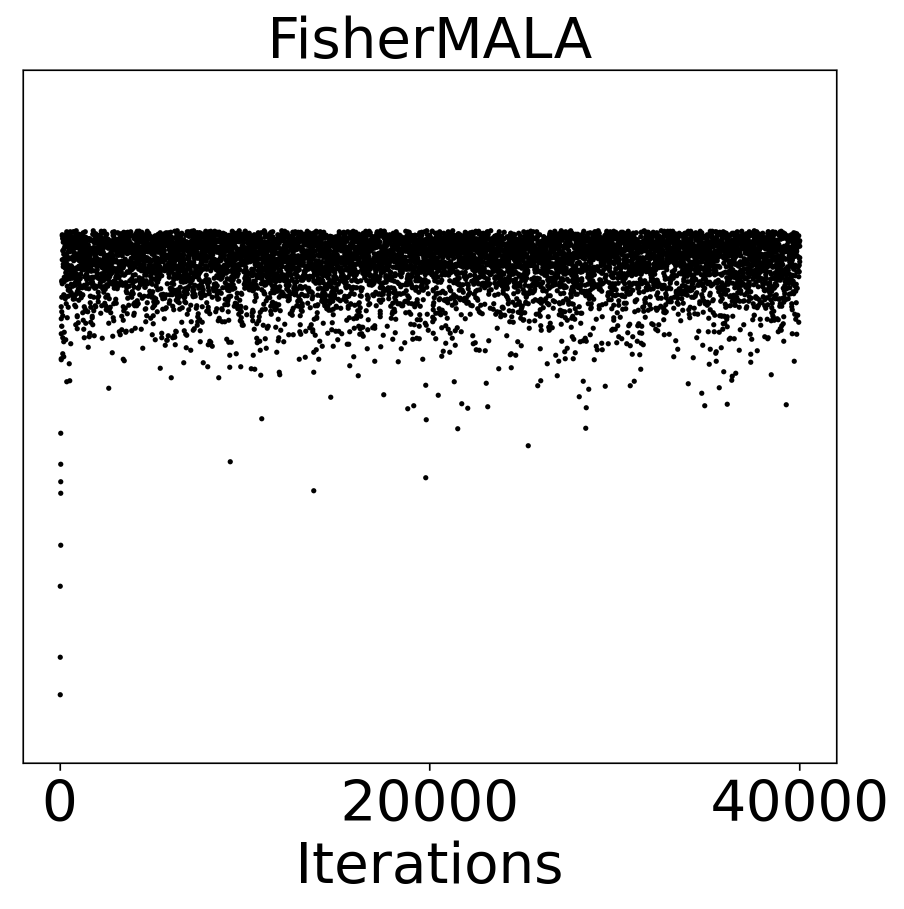}
\end{tabular}
\caption{The evolution of  the log-target across iterations in Ripley dataset.} 
\label{fig:ripley_logtarget}
\end{figure}

\subsection{The effect of Raoblackwellization and comparison with paired stochastic estimation
\label{sec:effectRaoblackwell}
} 

Finally, we compare three versions of FisherMALA: (i) The one that uses the Raoblackwellized signal $s_n^\delta$ from Eq.\ \eqref{eq:differenceRB}, which is our main 
proposed method used in the main paper and all previous results (in this section we will denote this as FisherMALA-with-RB), (ii)
 the one that uses the initial score function difference from Eq.\ \eqref{eq:differenceNoRB} (FisherMALA-no-RB) and (iii) 
 and FisherMALA with paired mean and covariance stochastic estimation (FisherMALA-paired-est) as descibed in Appendix
\ref{app:FisherMALA2}. Table \ref{table:all_rb} compares the three versions of FisherMALA  in terms of ESS for all problems, 
which shows that FisherMALA-paired-est is significantly worse than the other two methods that 
learn based on score function increments.   These two latter methods, FisherMALA-with-RB and FisherMALA-no-RB,  
 have  similar performance without significant difference (the highest difference in terms of Min ESS is in Pima Indians dataset, but still not statistically significant).  
  
 Figure  \ref{fig:raoblackwell} displays the Frobenius norms for FisherMALA with Raoblackwellization and FisherMALA without Raoblackwellization in 
 the two $100$-dimensional Gaussian targets. It shows that the Raoblackwellized signal $s_n^\delta$ leads to slightly faster convergence, 
 which agrees with the theory that says that Raoblackwellization should reduce the variance.

 Finally,  Table \ref{table:non_centered} reports numerical performance of the non-centered version of FisherMALA where we learn directly from the score function 
 vectors $s_n$, i.e.\ without centering  or using score function increments. From this table we can see that FisherMALA (non-centered) performs worse than the 
 other FisherMALA variants, and only on Ripley dataset works equally well with the rest.

\begin{figure}%[!htb]
\centering
\begin{tabular}{cc}
\includegraphics[scale=0.25]
{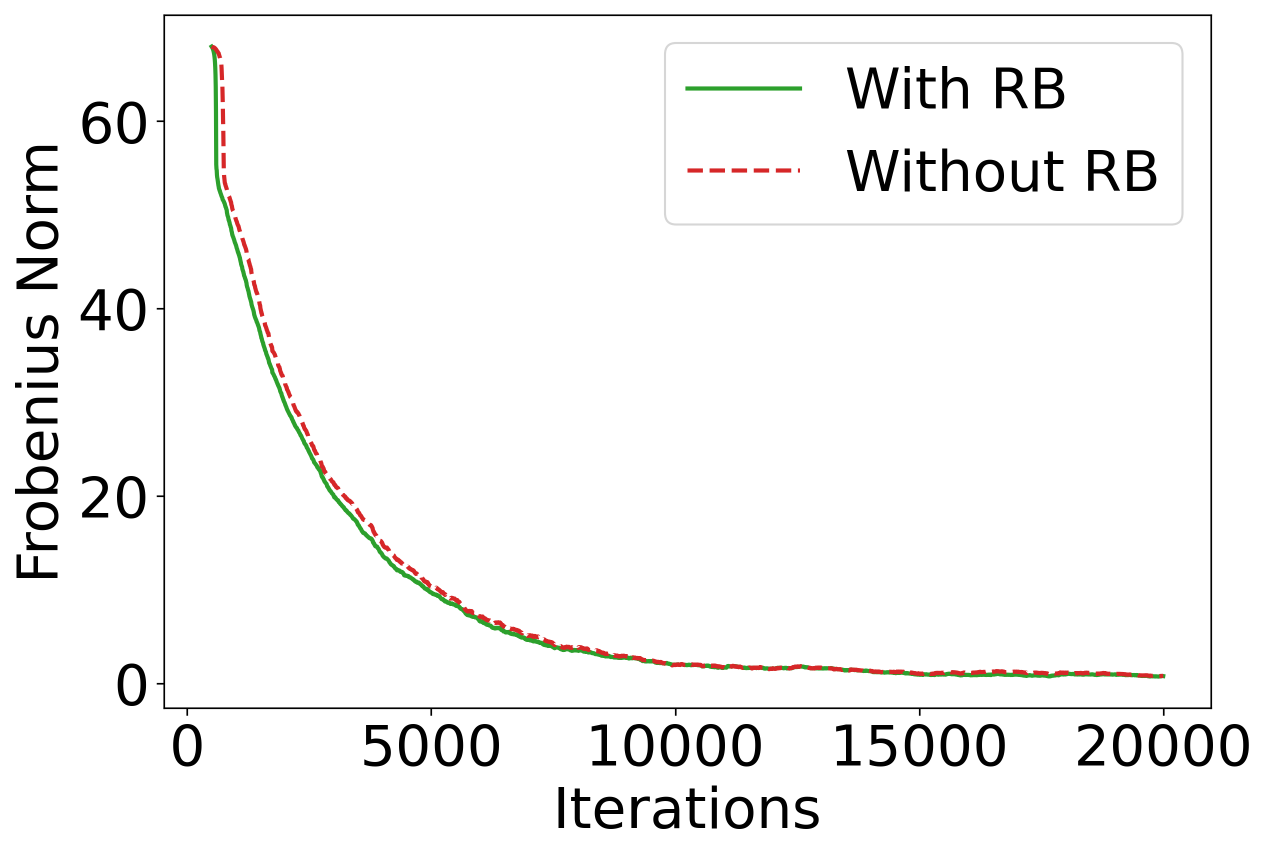} &
\includegraphics[scale=0.25]
{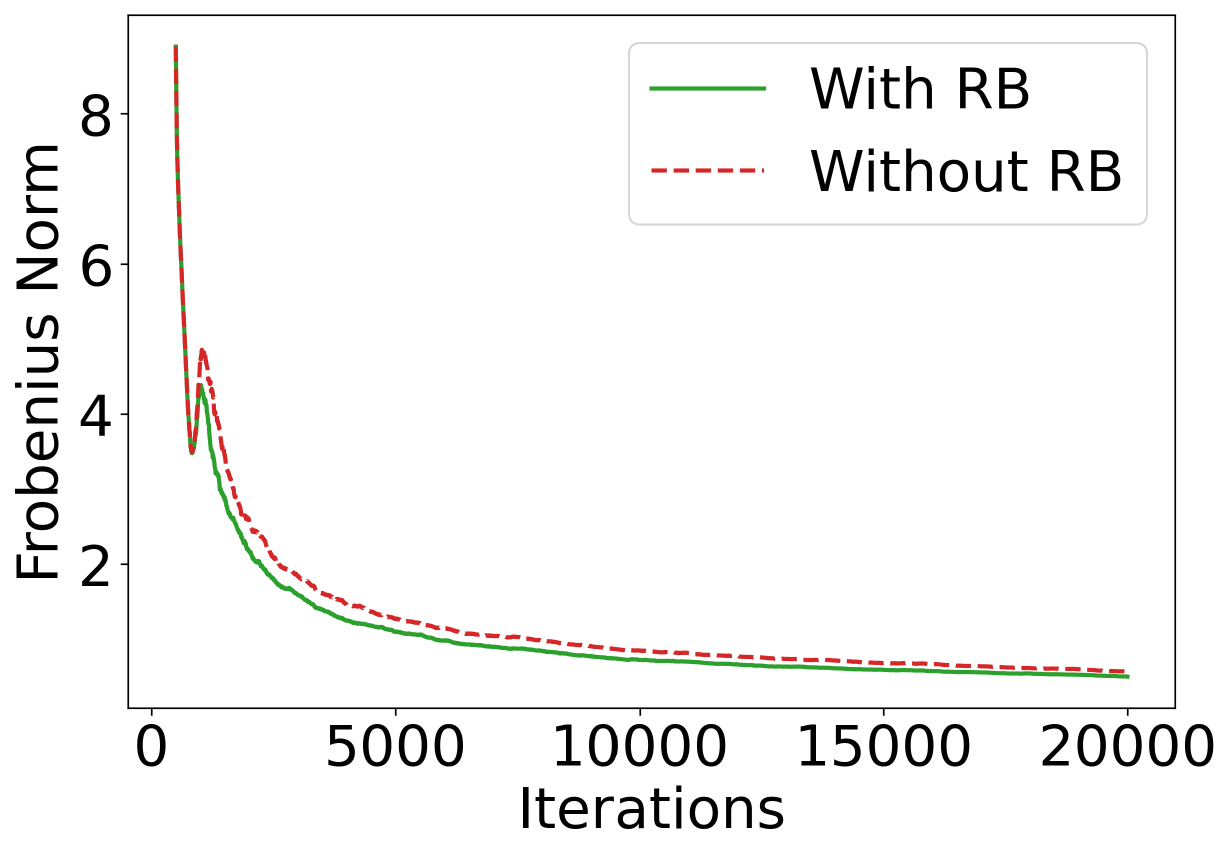} 
\end{tabular}
\caption{The effect of Raoblackwellization. Left panel shows the evolution of the Frobenius norm in the GP target and right panel for the inhomogeneous 
Gaussian target.}
\label{fig:raoblackwell}
\end{figure}

\begin{table}
  \caption{Comparison of ESS scores for three versions of FisherMALA:  the first with Raoblackwellized score function differences in 
  \eqref{eq:differenceRB}, the second based on the initial adaptation signal of score function differences from \eqref{eq:differenceNoRB}, and the third 
  based on paired stochastic estimation.}
  \label{table:all_rb}
  \centering
  \begin{tabular}{llll}
    \toprule
& Max ESS & Median ESS & Min ESS \\
\midrule
\emph{GP target} & & & \\
FisherMALA-with-RB & $2096.259 \pm 94.751$ & $1923.753 \pm 95.820$ & $1784.962 \pm 104.440$ \\
FisherMALA-no-RB & $2064.940 \pm 87.943$ & $1916.990 \pm 85.208$ & $1794.114 \pm 103.711$ \\
FisherMALA-paired-est & $1802.141 \pm 142.784$ & $1583.570 \pm 109.241$ & $1226.303 \pm 244.752$ \\
\midrule
\emph{Inhomog. Gaussian} & & & \\  
FisherMALA-with-RB & $2347.340 \pm 70.234$ & $2002.579 \pm 30.001$ & $1500.983 \pm 67.087$ \\
FisherMALA-no-RB & $2351.481 \pm 78.894$ & $2012.243 \pm 30.024$ & $1489.617 \pm 133.619$ \\
FisherMALA-paired-est & $1941.994 \pm 106.710$ & $1147.138 \pm 61.591$ & $109.160 \pm 57.998$ \\
\midrule
\emph{Heart} & & & \\
FisherMALA-with-RB & $4864.278 \pm 103.277$ & $4474.288 \pm 102.029$ & $3954.793 \pm 199.832$ \\
FisherMALA-no-RB & $4893.063 \pm 107.068$ & $4455.591 \pm 98.542$ & $3977.741 \pm 194.922$ \\
FisherMALA-paired-est & $4804.365 \pm 176.747$ & $2519.187 \pm 693.945$ & $441.434 \pm 386.287$ \\
\midrule
\emph{German Credit} & & & \\ 
FisherMALA-with-RB & $3951.807 \pm 78.858$ & $3582.184 \pm 90.551$ & $3011.483 \pm 258.154$ \\
FisherMALA-no-RB & $3979.744 \pm 79.647$ & $3616.894 \pm 104.722$ & $3031.384 \pm 228.345$ \\
FisherMALA-paired-est & $3960.773 \pm 105.169$ & $3097.557 \pm 252.619$ & $397.034 \pm 244.768$ \\
\midrule
\emph{Australian Credit} & & & \\ 
FisherMALA-with-RB & $4732.724 \pm 116.074$ & $4361.969 \pm 104.750$ & $3772.086 \pm 265.170$ \\
FisherMALA-no-RB & $4711.549 \pm 115.329$ & $4364.347 \pm 95.004$ & $3790.949 \pm 253.464$ \\
FisherMALA-paired-est & $4887.606 \pm 173.626$ & $3603.765 \pm 725.018$ & $84.202 \pm 44.750$ \\
\midrule
\emph{Ripley} & & & \\          
FisherMALA-with-RB & $9875.968 \pm 218.801$ & $9673.009 \pm 280.759$ & $9244.631 \pm 559.137$ \\
FisherMALA-no-RB & $9852.895 \pm 281.295$ & $9679.384 \pm 303.946$ & $9272.040 \pm 581.732$ \\
FisherMALA-paired-est & $9869.053 \pm 321.031$ & $9598.430 \pm 330.766$ & $9217.330 \pm 584.224$ \\
\midrule
\emph{Pima Indians} & & & \\ 
FisherMALA-with-RB & $6437.419 \pm 207.548$ & $5981.960 \pm 156.072$ & $5628.541 \pm 168.425$ \\
FisherMALA-no-RB & $6448.999 \pm 199.817$ & $5977.292 \pm 122.852$ & $5585.217 \pm 160.586$ \\
FisherMALA-paired-est & $6048.419 \pm 650.262$ & $2618.271 \pm 889.425$ & $788.687 \pm 388.978$ \\
\midrule
\emph{Caravan} & & & \\  
FisherMALA-with-RB & $2257.737 \pm 45.289$ & $1920.903 \pm 55.821$ & $498.016 \pm 96.692$ \\
FisherMALA-no-RB & $2241.262 \pm 47.873$ & $1908.045 \pm 62.430$ & $509.913 \pm 115.563$ \\
FisherMALA-paired-est & $1930.109 \pm 208.848$ & $1107.987 \pm 83.439$ & $87.456 \pm 90.858$ \\
\midrule
\emph{MNIST} & & & \\   
FisherMALA-with-RB & $1053.455 \pm 35.680$ & $811.522 \pm 19.165$ & $439.580 \pm 52.800$ \\
FisherMALA-no-RB & $1036.138 \pm 32.399$ & $803.210 \pm 16.163$ & $437.325 \pm 40.040$ \\
FisherMALA-paired est & $301.055 \pm 37.597$ & $13.819 \pm 1.127$ & $3.176 \pm 0.113$ \\
  \bottomrule
  \end{tabular}
\end{table}

{\small 
\begin{table}
  \caption{Performance of FisherMALA (non-centered), in a subset of the targets,  
  which learns directly from the score function vectors $s_n$.}
  \label{table:non_centered}
  \centering
  \begin{tabular}{llll}
    \toprule
& Max ESS & Median ESS & Min ESS \\
\midrule
\emph{GP target} & & & \\
%FisherMALA-with-RB & $2096.259 \pm 94.751$ & $1923.753 \pm 95.820$ & $1784.962 \pm 104.440$ \\
%FisherMALA-no-RB & $2064.940 \pm 87.943$ & $1916.990 \pm 85.208$ & $1794.114 \pm 103.711$ \\
%FisherMALA-paired-est & $1802.141 \pm 142.784$ & $1583.570 \pm 109.241$ & $1226.303 \pm 244.752$ \\
FisherMALA (non-centered) & $1740.943 \pm 157.871$ & $518.924 \pm 579.639$ & $48.218 \pm 117.349$ \\
\midrule
\emph{Ripley} & & & \\          
%FisherMALA-with-RB & $9875.968 \pm 218.801$ & $9673.009 \pm 280.759$ & $9244.631 \pm 559.137$ \\
%FisherMALA-no-RB & $9852.895 \pm 281.295$ & $9679.384 \pm 303.946$ & $9272.040 \pm 581.732$ \\
%FisherMALA-paired-est & $9869.053 \pm 321.031$ & $9598.430 \pm 330.766$ & $9217.330 \pm 584.224$ \\
FisherMALA (non-centered) & $9881.540 \pm 353.377$ & $9636.357 \pm 313.009$ & $9237.885 \pm 710.741$ \\
\midrule
\emph{Pima Indians} & & & \\ 
%FisherMALA-with-RB & $6437.419 \pm 207.548$ & $5981.960 \pm 156.072$ & $5628.541 \pm 168.425$ \\
%FisherMALA-no-RB & $6448.999 \pm 199.817$ & $5977.292 \pm 122.852$ & $5585.217 \pm 160.586$ \\
%FisherMALA-paired-est & $6048.419 \pm 650.262$ & $2618.271 \pm 889.425$ & $788.687 \pm 388.978$ \\
FisherMALA (non-centered) & $5520.181 \pm 1781.518$ & $474.990 \pm 587.788$ & $65.313 \pm 59.316$ \\
\midrule
\emph{Caravan} & & & \\  
%FisherMALA-with-RB & $2257.737 \pm 45.289$ & $1920.903 \pm 55.821$ & $498.016 \pm 96.692$ \\
%FisherMALA-no-RB & $2241.262 \pm 47.873$ & $1908.045 \pm 62.430$ & $509.913 \pm 115.563$ \\
%FisherMALA-paired-est & $1930.109 \pm 208.848$ & $1107.987 \pm 83.439$ & $87.456 \pm 90.858$ \\
FisherMALA (non-centered) & $1602.723 \pm 164.497$ & $14.226 \pm 4.429$ & $3.298 \pm 0.141$ \\
\midrule
\emph{MNIST} & & & \\   
%FisherMALA-with-RB & $1053.455 \pm 35.680$ & $811.522 \pm 19.165$ & $439.580 \pm 52.800$ \\
%FisherMALA-no-RB & $1036.138 \pm 32.399$ & $803.210 \pm 16.163$ & $437.325 \pm 40.040$ \\
%FisherMALA-paired est & $301.055 \pm 37.597$ & $13.819 \pm 1.127$ & $3.176 \pm 0.113$ \\
FisherMALA (non-centered) & $271.629 \pm 22.918$ & $22.147 \pm 1.683$ & $3.744 \pm 0.139$ \\
  \bottomrule
  \end{tabular}
\end{table}
}

\comm{
%%%%%%%%%%%%%%%%%%%%%%%%%%%%%%%%%%%%%%%%%%%%%%%%%%%%%%%%%%%%
\section*{Checklist}

%%% BEGIN INSTRUCTIONS %%%
% The checklist follows the references.  Please
% read the checklist guidelines carefully for information on how to answer these
% questions.  For each question, change the default % \answerTODO{} to \answerYes{},
% \answerNo{}, or \answerNA{}.  You are strongly encouraged to include a {\bf
% justification to your answer}, either by referencing the appropriate section of
% your paper or providing a brief inline description.  For example:
% \begin{itemize}
%   \item Did you include the license to the code and datasets? \answerYes{See Section~\ref{gen_inst}.}
%   \item Did you include the license to the code and datasets? \answerNo{The code and the data are proprietary.}
%   \item Did you include the license to the code and datasets? \answerNA{}
% \end{itemize}
% Please do not modify the questions and only use the provided macros for your
% answers.  Note that the Checklist section does not count towards the page
% limit.  In your paper, please delete this instructions block and only keep the
% Checklist section heading above along with the questions/answers below.
%%% END INSTRUCTIONS %%%

\begin{enumerate}

\item For all authors...
\begin{enumerate}
  \item Do the main claims made in the abstract and introduction accurately reflect the paper's contributions and scope?
    \answerYes{}
  \item Did you describe the limitations of your work?
    \answerYes{}
  \item Did you discuss any potential negative societal impacts of your work?
    \answerNA{}
  \item Have you read the ethics review guidelines and ensured that your paper conforms to them?
    \answerYes{}
\end{enumerate}

\item If you are including theoretical results...
\begin{enumerate}
  \item Did you state the full set of assumptions of all theoretical results?
    \answerYes{See Proposition 1}
	\item Did you include complete proofs of all theoretical results?
    \answerYes{See the proof to Proposition 1}
\end{enumerate}

\item If you ran experiments...
\begin{enumerate}
  \item Did you include the code, data, and instructions needed to reproduce the main experimental results (either in the supplemental material or as a URL)?
    \answerNo{}
  \item Did you specify all the training details (e.g., data splits, hyperparameters, how they were chosen)?
    \answerYes{}
	\item Did you report error bars (e.g., with respect to the random seed after running experiments multiple times)?
    \answerYes{}
	\item Did you include the total amount of compute and the type of resources used (e.g., type of GPUs, internal cluster, or cloud provider)?
    \answerYes{}
\end{enumerate}

\item If you are using existing assets (e.g., code, data, models) or curating/releasing new assets...
\begin{enumerate}
  \item If your work uses existing assets, did you cite the creators?
    \answerYes{}
  \item Did you mention the license of the assets?
    \answerNo{}
  \item Did you include any new assets either in the supplemental material or as a URL?
    \answerNo{}
  \item Did you discuss whether and how consent was obtained from people whose data you're using/curating?
    \answerNA{}
  \item Did you discuss whether the data you are using/curating contains personally identifiable information or offensive content?
    \answerNA{}
\end{enumerate}

\item If you used crowdsourcing or conducted research with human subjects...
\begin{enumerate}
  \item Did you include the full text of instructions given to participants and screenshots, if applicable?
    \answerNA{}
  \item Did you describe any potential participant risks, with links to Institutional Review Board (IRB) approvals, if applicable?
    \answerNA{}
  \item Did you include the estimated hourly wage paid to participants and the total amount spent on participant compensation?
    \answerNA{}
\end{enumerate}

\end{enumerate}
}

%%%%%%%%%%%%%%%%%%%%%%%%%%%%%%%%%%%%%%%%%%%%%%%%%%%%%%%%%%%%

\end{document}